\newtheorem{lemma}{Lemma}
\newtheorem{theorem}{Theorem}
\newtheorem{remark}{Remark}
\newtheorem{assumption}{Assumption}
\newtheorem{definition}{Definition}
\newcommand{\captionfonts}{\normalsize}
\long\def\@makecaption#1#2{%
  \vskip\abovecaptionskip
  \sbox\@tempboxa{{\captionfonts #1: #2}}%
  \ifdim \wd\@tempboxa >\hsize
    {\captionfonts #1: #2\par}
  \else
    \hbox to\hsize{\hfil\box\@tempboxa\hfil}%
  \fi
  \vskip\belowcaptionskip}
    \def\independenT#1#2{\mathrel{\setbox0\hbox{$#1#2$}%
   \copy0\kern-\wd0\mkern4mu\box0}}
\begin{document}

\hspace{13.9cm}1

\ \vspace{20mm}\\
{\centering
{\LARGE  Confounder Detection in High Dimensional Linear Models using First Moments of Spectral Measures}
}
%
%\ \\
%{\bf \large Student: Furui Liu \\
% Supervisor: Laiwan Chan}\\
%{\{frliu,lwchan\}@cse.cuhk.edu.hk}\\
%{Term paper}\\
%%

%\ \\[-2mm]
%{\bf Keywords:} Causal analysis, confounder, concentration of measure}
\thispagestyle{empty}
\markboth{}{NC instructions}
{\centering 
\ \vspace{-0mm}\\
%\ \\
{\bf \large Furui Liu, Laiwan Chan}\\
{Department of Computer Science and Engineering,}\\
{The Chinese University of Hong Kong}\\
}
%Abstract
\begin{center} {\bf Abstract} \end{center}
In this paper, we study the confounder detection problem in the linear model, where the target variable $Y$ is predicted using its $n$ potential causes $X_n=(x_1,...,x_n)^T$.    Based on an assumption of rotation invariant generating process of the model, recent study  shows that  the spectral measure induced by the regression coefficient vector with respect to the covariance matrix of $X_n$ is close to a uniform measure in purely causal cases, but it differs from a uniform measure characteristically in the presence of a scalar confounder. Then, analyzing spectral measure pattern could help to detect confounding. In this paper, we propose to use the first moment of the spectral measure for confounder detection. We calculate the first moment of the regression vector induced spectral measure, and compare it with the first moment of a uniform spectral measure, both defined with respect to the covariance matrix of $X_n$. The two moments coincide in non-confounding cases, and differ from each other  in the presence of confounding. This statistical causal-confounding asymmetry can be used for confounder detection. Without the need of analyzing the spectral measure pattern,  our method does avoid the difficulty of metric choice and multiple parameter optimization. Experiments on synthetic and real data show the performance of  this method.  
%%%%%%%%%%%

 \section{Introduction}
 In many real world applications, we  often face the problems of   estimating the causal effects of a set of sources $X_n=(x_1,...,x_n)^T$ on one target variable $Y$. To achieve this, one can build a linear regression model given their observations \citep{hoyer2008estimation, estimationSEM,lingam,directlingam,pwlingam}, and check the coefficients. For  variables that are statistically dependent, we would, in most of the cases, get non-zero regression coefficients between the observations\footnote{The regression coefficient here refers to the correlation coefficient between variables. It is known that dependent variables could also be uncorrelated, and in that case the regression coefficient is 0.}. If $x_j$ has a significant regression coefficient, it is believed to have a large causal influence on $Y$.  However, the correctness of this  is based on the causal sufficiency assumption that there is no hidden confounder of $X_n$ and $Y$, which cannot be verified from the regression procedure.  Simply checking   the coefficient vector cannot give us enough information for identifying confounder. Given the correctness of the causal sufficiency assumption unverified, estimating the causal effects by regression could be problematic: one never knows if the coefficients purely describe the influence of $X_n$ on $Y$, or it is significant because they share a hidden common driving force. Thus, confounder detection is important. It basically acts as a verification procedure of the causal sufficiency assumption. For further analysis, we write a mathematical model, and denote the non-observable confounder as $Z$. Directly following the paper \citep{janzing2017detecting}, we assume that the $Z$ is a one dimensional variable, and  consider the model
  \begin{eqnarray}
X_n & = & b_nZ+E_n,\\
Y & = & a_n^TX_n + cZ+ F,
 \end{eqnarray}
 where $a_n,b_n$ are $n$ dimensional vectors and $E_n$ is the $n$ dimensional noise. $F$ is the one dimensional noise, and $c$ is a scalar. When $\|b_n\|$ and $c$ are both non-zero\footnote{By default, $\|\cdot\|$ stands for the $L_2$ norm $\|\cdot\|_2$}, $Z$ is a confounder of  $X_n$ and $Y$ \citep{janzing2017detecting}.  Consider a least square regression of $Y$ on $X_n$ to get the regression coefficient as
    \begin{eqnarray}
  \tilde{a}_n & = & \Sigma_{X_n}^{-1} \Sigma_{X_nY},
  \end{eqnarray}
  The covariance matrices are 
  \begin{eqnarray*}
   \Sigma_{X_nY}  & = & (\Sigma_{E_n}+b_nb_n^T)a_n+cb_n, \\
  \Sigma_{X_n} & = &  \Sigma_{E_n}+b_nb_n^T.
     \end{eqnarray*}
 Notice that we assume the variance of variable $Z$ is 1 here. We consider this assumption justified since we can always make this true by rescaling $b_n$ and $c$. Then we get 
  \begin{eqnarray}
  \tilde{a}_n & = & a_n + c(\Sigma_{E_n}+b_nb_n^T)^{-1}b_n.
   \end{eqnarray}
    The regression coefficient basically consists of two parts. One is the part  describing the causal influences of $X_n$ on $Y$, and the other is the part describing  confounding effects. As this decomposition reveals, the regression coefficient in confounding and non-confounding cases could be clearly different. Consider the following points.
   \begin{enumerate}
   \item Purely causal cases:  $\|b_n\|$ or $c$ should be 0. In this case, $$\tilde{a}_n = a_n.$$
   \item Confounding cases:   $\|b_n\|$ and $c$ are not 0. In this case, $$\tilde{a}_n = a_n + c(\Sigma_{E_n}+b_nb_n^T)^{-1}b_n.$$
   \end{enumerate}
   For ease of explanation, we  denote $\tilde{a}_n$ as the composition of causal part and confounding part
   $$ \tilde{a}_n = \underbrace{a_n}_\text{causal part}+ \underbrace{c(\Sigma_{E_n}+b_nb_n^T)^{-1}b_n}_\text{confounding part}.$$ 
   When one conducts a regression, the obtained coefficients may contain  both parts.  Our goal is to tell if there is a confounder.
   
   A recent paper \citep{janzing2017detecting} proposes a method to achieve this goal. The core idea is based on the so-called generic orientation theory, motivated by recent advances in causal discovery that discuss certain independence between cause and mechanisms \citep{liu2016causal,janzing2010telling,liu2017causal,lemeire2013replacing}.  The method is built on a core term named the vector induced spectral measure with respect to $\Sigma_{X_n}$, which intuitively describes the squared length of the components of a vector projected into the  eigenspace of $\Sigma_{X_n}$. Later we would mention spectral measure multiple times and by default the spectral measure is induced with respect to $\Sigma_{X_n}$. Based on a rotation invariant model generating assumption and the concentration of measure phenomenon in high dimensional spheres \citep{marton1996bounding,talagrand1995concentration,shiffman2003random,popescu2006entanglement}, the paper \citep{janzing2017detecting} posts two asymptotic statements. First, the $a_n$ induced spectral measure and the $c(\Sigma_{E_n}+b_nb_n^T)^{-1}b_n$ induced spectral measure have their respective patterns. Second, the $\tilde{a}_n$ induced spectral measure is a direct summation of the two measures in first point.  Given the observed joint distribution of $Y$ and $X_n$, we can compute the $\tilde{a}_n$ induced spectral measure. Then, we use a convex combination of two spectral measures, one approximating $a_n$ induced spectral measure and the other approximating $c(\Sigma_{E_n}+b_nb_n^T)^{-1}b_n$ induced spectral measure, to match the observed measure. We tune the weights of the two measures and record the weights of the part approximating $c(\Sigma_{E_n}+b_nb_n^T)^{-1}b_n$ induced spectral measure  in the best match. The weight then, in certain sense, records the ``amount of confounding''. Although the confounding strength can be quantitatively estimated by this method, the drawback is still clear.
   \begin{enumerate}
   \item The two asymptotic statements are justified by weak convergence only, and the pattern approximations, as well as measure decomposition,  should be interpreted in a sufficiently loose sense. As a consequence, the total variation distance may fail to serve as a good metric, when one compares the reconstructed spectral measure with the observed one. However, the optimal choice of metrics stays vague and the method \citep{janzing2017detecting} depends on unjustified heuristic choices (kernel smoothing), which may lead to a wrong ``equal or not" conclusion. 
   \item The method needs to tune two parameters. One is the weight in reconstruction, and the other is a parameter related to approximations of the $c(\Sigma_{E_n}+b_nb_n^T)^{-1}b_n$ induced spectral measure. Optimizing over two-parameter space requires very fine-grained search, and error control is not easy. 
   \end{enumerate}
    These points reduce the reliability of the conclusions drew by the method.
   
   Can we identify confounding without reconstructing the whole spectral measure? This paper would provide an answer to this question. Recall that an important characteristic of a measure is its moment. We propose to directly use the moment information for confounder detection.  We would  focus on  the first moment, and show that the first moment of the  spectral measure induced by $\tilde{a}_n$ already behaves differently in causal and confounding scenarios. To access its ``behavior'' in a quantitatively concise sense, we later design a deviation measurement to quantify the difference between the first moment of the induced spectral measure and that of  a uniform measure.  The moment ``behaves differently'' is then justified by different asymptotic values of the  deviation measurement in causal and confounding cases. This statistic is easy to compute, and already provides us enough information for confounding detection.    Our method clearly avoids the aforementioned drawbacks of the method of the paper \citep{janzing2017detecting}.
   \begin{enumerate}
   \item Without the need of matching spectral measure patterns,  we do not need to tackle the vagueness of  ``interpreting the approximations loosely''  and the difficulty of metric choice. Instead, we compare the first moment of the spectral measure induced by $\tilde{a}_n$ with respect to $\Sigma_{X_n}$, with the first moment of the uniform (tracial) spectral measure on $\Sigma_{X_n}$, and make conclusions based on their differences. 
   \item The parameter we need is a threshold of the deviation measurement. Simultaneous optimizing two parameters, as the spectral measure pattern matching method \citep{janzing2017detecting} does, is avoided.
   \end{enumerate} 
   
   These justify the usability of our proposed  method, and  it might provide a better solution than the existing one. We will detail our method  in the following sections, and present theoretical and empirical analysis. To begin with, we first describe the related work.
\section{Related work}
 We describe the  method by \citep{janzing2017detecting} for confounder detection. The basic idea is that the causal part and confounding part have their own features in induced spectral measures with respect to the covariance matrix of the cause, such that  each part can be approximated and combined.  To understand this, we first give some basic definitions.
 \begin{definition}[Eigendecomposition]
The eigendecomposition of $\Sigma_{X_n}$ is 
\begin{equation}
\Sigma_{X_n}=U_n\Lambda_n U_n^T = \sum_{i=1}^n \lambda_i u_iu_i^T,
\end{equation}
where the matrices 
\begin{eqnarray}
U_n  & = & [u_1,...,u_n], \\
\Lambda_n & = & diag(\lambda_1,...,\lambda_n),
\end{eqnarray}
are  matrices of eigenvectors and the diagonal matrix containing all eigenvalues respectively.
 \end{definition}

  \begin{definition}[Vector induced spectral measure]
   A spectral measure induced by an $n$ dimensional vector $\phi_n$ with respect to covariance matrix  $\Sigma_{X_n}$ in (5) is defined as
   \begin{eqnarray}
   \mu_{\Sigma_{X_n},\phi_n} & = & \sum_{i=1}^{n} <\phi_n,u_i>^2\delta_{\lambda_i},
   \end{eqnarray} 
   where $\lambda_i$ belonging to the $\Lambda_n$ in (7) is the $i$th eigenvalue and $u_i$ is respective eigenvector. $\delta_s$ is the point measure defined on the $s \in \mathbb{R}$.
    \end{definition}
  Later on we would also need a tracial (uniform) spectral measure, and we here define it formally. 
  \begin{definition}[Tracial spectral measure]
     A normalized tracial spectral measure defined on the covariance matrix  $\Sigma_{X_n}$ in (5) is as
      \begin{eqnarray}
         \mu_{\Sigma_{X_n}}^\tau & = & \frac{1}{n}\sum_{i=1}^{n}  \delta_{\lambda_i},
         \end{eqnarray} 
       where $\delta_s$ is the point measure on $s \in \mathbb{R}$.
      \end{definition}
       \begin{remark}
         We also call the tracial spectral measure  ``uniform spectral measure''. It should be noted that  we no longer make the non-degenerate assumption on $\Sigma_{X_n}$, which is made by \citep{janzing2017detecting}. Thus, the statement of ``uniform'' should be interpreted in a more general sense instead of uniformly spreading over a domain in $\mathbb{R}$: the weight of each point measure is equal while the point measures are allowed to overlap with each other. 
        \end{remark}
      
 For  numerical computations (matching spectral measure pattern),  the induced spectral measure can be represented using two vectors. The first one is the vector containing its support $\lambda_{\Sigma_{X_n}} = (\lambda_1,...,\lambda_n)^T$, and the second one is a vector $ \omega_{\Sigma_{X_n},\phi_n}$ containing the values of the spectral measure at those support points. We formally give the definitions below.
   \begin{definition}[Vectorized representation of spectral measure]
     For the vector induced spectral measure $ \mu_{\Sigma_{X_n},\phi_n}$ in  (8), we use two vectors to represent it.
       \begin{eqnarray}
          \lambda_{\Sigma_{X_n}} & = & (\lambda_1,...,\lambda_n)^T,\\
          \omega_{\Sigma_{X_n},\phi_n} & = & (<\phi_n,u_1>^2,...,<\phi_n,u_n>^2)^T.
          \end{eqnarray} 
          For tracial spectral measure, we can use  
           \begin{eqnarray}
                    \lambda_{\Sigma_{X_n}} & = & (\lambda_1,...,\lambda_n)^T,\\
                    \omega_{\Sigma_{X_n}}^\tau & = & (\frac{1}{n},...,\frac{1}{n})^T.
                    \end{eqnarray} 
       \end{definition}
 
 The $i$th element $\omega_{\Sigma_{X_n},\phi_n} (i)$  in the vector records the value of the spectral measure at $\lambda_i$. One can compute the vector as
$$ \omega_{\Sigma_{X_n},\phi_n}  = U_n^T\phi_n \circ U_n^T\phi_n,$$  
where $\circ$ denotes the Hadamard product.  An intuitive understanding, is that it is a vector describing the squared coordination of the vector $\phi_n$ with respect to the eigenspace of  $\Sigma_{X_n}$.  
This vector records the pattern of the spectral measure, and is used for computational tasks like ``pattern matching''.  Now consider $\mu_{\Sigma_{X_n},\tilde{a}_n } $, which is the spectral measure induced by the regression vector $\tilde{a}_n $ with respect to the covariance matrix  $\Sigma_{X_n}$.   The paper \citep{janzing2017detecting} shows that given $n$ is large, this spectral measure can be decomposed as $$\mu_{\Sigma_{X_n},\tilde{a}_n } \approx \underbrace{\mu_{\Sigma_{X_n},a_n } }_\text{causal part} +  \underbrace{\mu_{\Sigma_{X_n},c(\Sigma_{E_n}+b_nb_n^T)^{-1}b_n}}_\text{confounding part},$$ with so-called general orientation theory. As a consequence, their vectorized representations have the property
$$\omega_{\Sigma_{X_n},\tilde{a}_n } \approx \underbrace{\omega_{\Sigma_{X_n},a_n } }_\text{causal part} +  \underbrace{\omega_{\Sigma_{X_n},c(\Sigma_{E_n}+b_nb_n^T)^{-1}b_n}}_\text{confounding part}.$$
  Then given the observations, the confounding can be estimated using the following steps.
  \begin{enumerate}
  \item Approximate the  spectral measure induced by causal part as $\omega_{\Sigma_{X_n},a_n } = \omega_{\Sigma_{X_n}}^\tau$ in (13).
 \item   Approximate the spectral measure induced by confounding part as
   $$\omega_{\Sigma_{X_n},c(\Sigma_{E_n}+b_nb_n^T)^{-1}b_n}^\nu  =   \frac{1}{\|H_\nu^{-1}\mathbf{1}_n\|^2}\omega_{H_\nu,H^{-1}_\nu\mathbf{1}_n}, ~~~~H_\nu  = \Lambda_n + \frac{\nu}{n} \mathbf{1}_n\mathbf{1}_n^T,$$
    and $\omega_{H_\nu,H^{-1}_\nu\mathbf{1}_n}$ is the vectorized representation of the spectral measure induced by $H^{-1}_\nu\mathbf{1}_n$ with respect to $H_\nu$. $\Lambda_n$ is the matrix in (7). $\mathbf{1}_n$ is the vector of all 1s, and $\nu$ is a parameter.
 \item    Compute $\frac{1}{\|\hat{\tilde{a}}_n\|^2}\hat{\omega}_{\Sigma_{X_n},\tilde{a}_n }$ using the observations, and find the parameters $\beta^*,\nu^*$ that minimize a reconstruction error as
   $$(\beta^*,\nu^*) =  \arg\min_{\beta,\nu} \|\frac{1}{\|\hat{\tilde{a}}_n\|^2}\hat{\omega}_{\Sigma_{X_n},\tilde{a}_n } - (1- \beta) \omega_{\Sigma_{X_n},a_n}  -  \beta\omega_{\Sigma_{X_n},c(\Sigma_{E_n}+b_nb_n^T)^{-1}b_n}^\nu \|_K$$
   where $\|\cdot\|_K$ is a  kernel smoothed metric \citep{janzing2017detecting}.
   \end{enumerate}  
The ``confounding or not'' conclusion then relies on  $\beta^*$. If $\beta^*$ is significant, then it shows a clear confounding effect. As we mentioned, the method has clear drawback. The weak convergence property in infinite dimensions makes the practical success of this method heavily depend on a good distance metric.  To make it easy to understand,   consider an example of purely causal models. We generate the coefficients vector $a_n$ of dimension 10, with each entry uniformly drawn from $[-0.5,0.5]$. Then we normalize it to unit norm and calculate the induced spectral measure (vectorized representation) with respect to a random covariance matrix.\footnote{Here the random covariance matrix is generated by $\Sigma = 0.5*(A_n+A_n^T)$, where $A_n$ is an $n$ dimensional matrix and  the elements in $A_n$ are drawn  from a uniform distribution on $[-0.5,0.5]$. Then we extract its orthogonal bases $V_n$ and generate a diagonal matrix $\Gamma_n$ with entries sampled from a uniform $(0,1)$. Lastly output $V_n\Gamma_nV_n^T$.} 
  \begin{figure}[h]%
      \centering
    \subfloat[Practical spectral measure] {{\includegraphics[width=.5\textwidth,height =.4\textwidth ]{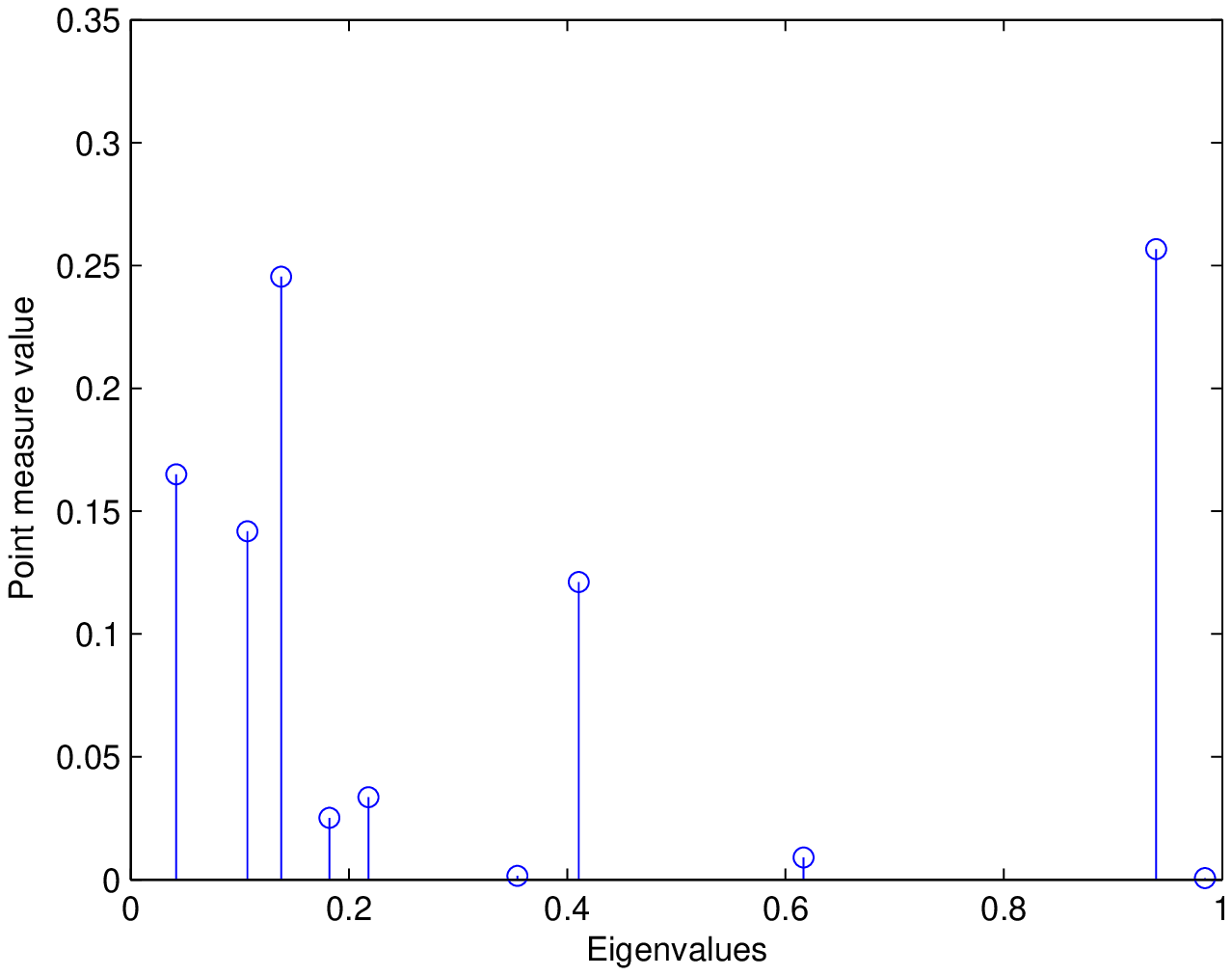} }}
    \subfloat[Tracial spectral measure]{{\includegraphics[width=.5\textwidth,height =.4\textwidth]{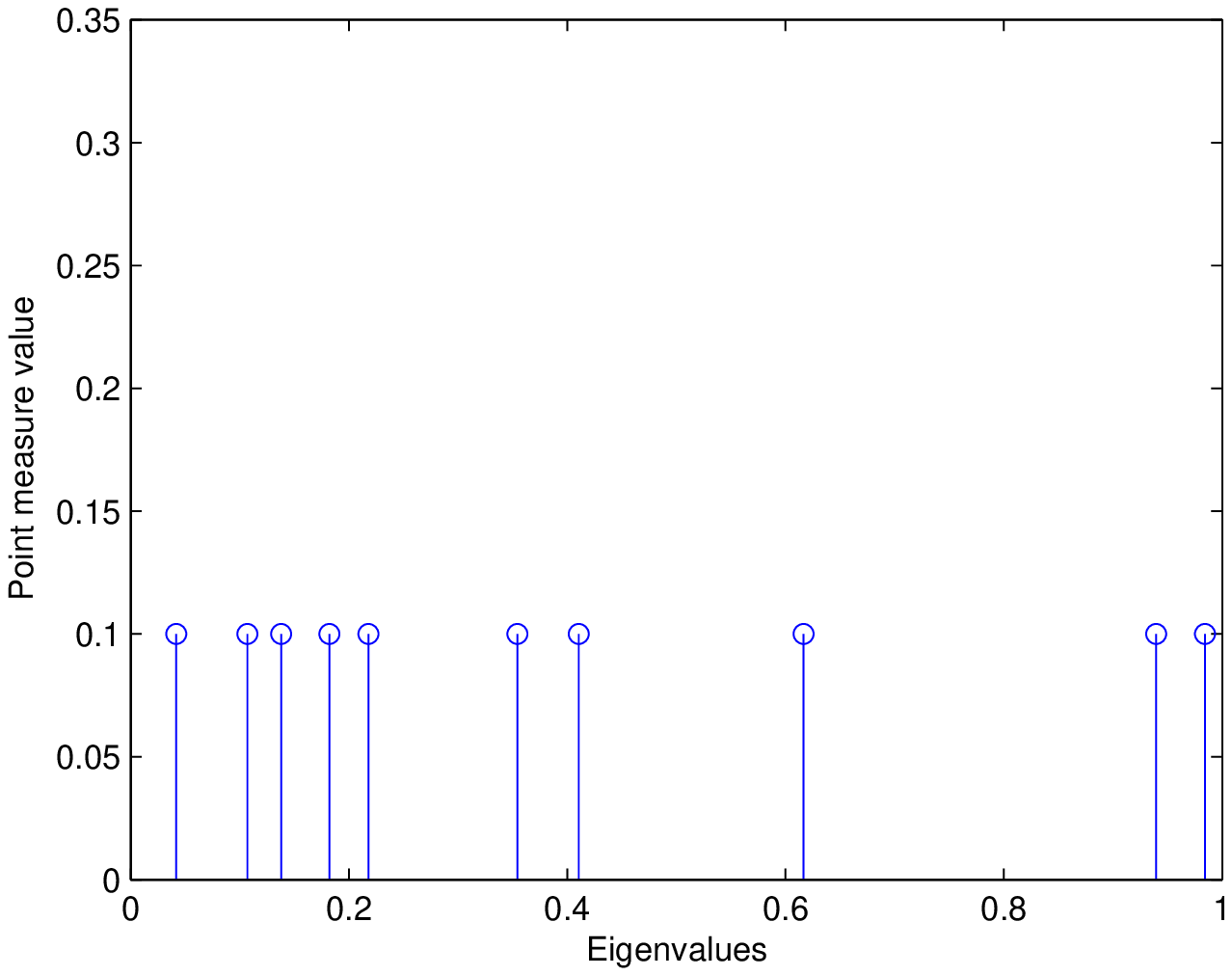} }}  
   \caption{Practical spectral measure induced by causal part and the uniform one}
    \end{figure}
One can  see that the practical spectral measure has a large difference from the uniform one. When one wants to match the two patterns (vectorized representation) and concludes ``purely causal'', one should adjust weights  on different dimensions. However, the optimal choice of the metric remains unknown in the pattern matching method \citep{janzing2017detecting}. It only relies on eigenvalue-gap related heuristic kernel smoothing. In this example, the kernel smoothing matrix would not be a good choice, since the eigenvalue-gaps are quite random compared to the spectral pattern. One may get wrong conclusions, that the pattern in figure 1(a) differs a lot from a uniform measure, and it is a confounding model. 

In summary, the method \citep{janzing2017detecting} relies on analyzing the patterns of the spectral measure. In purely causal cases, it is a uniform one. But the presence of confounding vector ``modifies'' the pattern in a characteristic way and can be detected. However, the weak convergence property makes it very hard to choose a metric for comparing the reconstructed measure with the practical one, thus hindering a good understanding of the pattern. Since reconstructing confounding by measure approximation and combination is really a hard task,  why not directly check the moment information? In this way, we avoid the hardness of metric choice and multiple parameter optimization, that one has to face when  pattern matching method is used. We would later focus on the first moment, and show that checking first moment is enough for us to identify a confounder. This is because, asymptoticly, the first moment of the measure induced by the regression vector coincides with that of a uniform measure in purely causal cases, while it does not in confounding cases. This causal-confounding asymmetry could help us identify the confounding. We put down thorough discussions about the identifiability of the confounder using the first moment. To begin with, we start with the definitions related to first moment of spectral measures.

\section{First moments of spectral measures}   
We define the first moment of the measures here, and design a moment deviation measurement to test the moment behavior of a vector induced spectral measure. We first start with some definitions.
  \begin{definition}[First moment]
  The first moment of the vector induced spectral measure $\mu_{\Sigma_{X_n},\phi_n}$ in (8) is defined as 
  \begin{eqnarray}
           M(\mu_{\Sigma_{X_n},\phi_n}) & = & \int_{\mathbb{R}}s~d\mu_{\Sigma_{X_n},\phi_n}(s),
           \end{eqnarray}
  and the first moment of tracial spectral measure $\mu_{\Sigma_{X_n}}^\tau$  in (9) is defined as
      \begin{eqnarray}
            M( \mu_{\Sigma_{X_n}}^\tau) & = & \int_{\mathbb{R}}s~d \mu_{\Sigma_{X_n}}^\tau(s).
            \end{eqnarray} 
  \end{definition}
   For practical computations, one can use vectorized representation of the measures to compute the moments.
    \begin{definition}[First moment using vectorized representation]
    The first moments of the vector induced spectral measure $\mu_{\Sigma_{X_n},\phi_n}$ in (8) and the tracial spectral measure $\mu_{\Sigma_{X_n}}^\tau$  in (9) can be written using vectorized representation  as
        \begin{eqnarray}
           M(\mu_{\Sigma_{X_n},\phi_n}) & = & \lambda_{\Sigma_{X_n}}^T\omega_{\Sigma_{X_n},\phi_n} = \sum_{i=1}^n \lambda_i <\phi_n,u_i>^2,\\
              M( \mu_{\Sigma_{X_n}}^\tau) & = & \lambda_{\Sigma_{X_n}}^T\omega_{\Sigma_{X_n}}^\tau = \frac{1}{n}\sum_{i=1}^n \lambda_i,
              \end{eqnarray}
               respectively.
               \end{definition}
              
   From the equation (16), we can rewrite it as
   $$\sum_{i=1}^n \lambda_i <\phi_n,u_i>^2 = \sum_{i=1}^n \lambda_i \phi_n^Tu_iu_i^T\phi_n = \phi_n^T(\sum_{i=1}^n \lambda_i u_iu_i^T)\phi_n.$$
   In other words, we can write the moments as a multiplication of vector and matrix. We have the following definition. 
   \begin{definition}[First moment using $\phi_n$ and $\Sigma_{X_n}$]
   The first moment of the vector induced spectral measure in (16) can be written as
              \begin{eqnarray}
                          M(\mu_{\Sigma_{X_n},\phi_n}) =    \phi_n^T\Sigma_{X_n}\phi_n. 
                 \end{eqnarray} 
              If we then define  a renormalized trace as 
              \begin{eqnarray}
              \tau_n(\cdot) = \frac{1}{n}tr(\cdot),
              \end{eqnarray}
              then (17) can be written as
              \begin{eqnarray}
               M( \mu_{\Sigma_{X_n}}^\tau)  =    \tau_n(\Sigma_{X_n}) .
                          \end{eqnarray}  
    \end{definition} 
   As we previously sketched, we want to design a measure to quantify the difference between the first moment of vector induced spectral measure and the first moment of a tracial spectral measure. Since the tracial spectral measure is a normalized one while the vector induced spectral measure enlarges with the norm of the vector, we also enlarge the tracial measure with the norm. We have the following definition.
   \begin{definition}[First moment deviation]
      The deviation of the first momentof the $\phi_n$ induced spectral measure on $\Sigma_{X_n}$ from that of a tracial spectral measure on $\Sigma_{X_n}$ is defined as 
                 \begin{eqnarray}
                         \mathcal{D}(\phi_n,\Sigma_{X_n}) = |M(\mu_{\Sigma_{X_n},\phi_n}) -  \|\phi_n\|^2M( \mu_{\Sigma_{X_n}}^\tau) |.  
                    \end{eqnarray} 
                 Using definition 7, one can also write it as  
                 \begin{eqnarray}
              \mathcal{D}(\phi_n,\Sigma_{X_n}) = |\phi_n^T\Sigma_{X_n}\phi_n  - \|\phi_n\|^2\tau_n(\Sigma_{X_n})|.
                 \end{eqnarray}
                 Since we only measure first moment deviations of induced measures defined on $\Sigma_{X_n}$, without causing confusions we write 
                 \begin{eqnarray}
               \mathcal{D}(\phi_n) = \mathcal{D}(\phi_n,\Sigma_{X_n}). 
                             \end{eqnarray}  
       \end{definition}
       Later we mainly study the asymptotic behavior of the deviation measurement. For ease of representation, we define the asymptotic deviation measurement below.
       \begin{definition}[Asymptotic $\mathcal{D}(\phi_n)$]
       For a sequence of vectors $\{\phi_n\}_{n \in \mathbb{N}}$ and a sequence of positive semi-definite $n\times n$ symmetric matrices $\{\Sigma_{X_n}\}_{n \in \mathbb{N}}$, the asymptotic $\mathcal{D}(\phi_n)$ is defined as
       $$ \mathcal{D}(\phi_\infty) = \lim_{n \to \infty}\mathcal{D}(\phi_n),$$
       given the limit exists.
       \end{definition}
       Now we get everything ready to proceed. Recall the linear confounding model defined in (1) and (2). Given observational data, we can compute the covariance matrix and the regression vector $\tilde{a}_n$, and thus the induced spectral measure on $\Sigma_{X_n}$ and the tracial one.  As we mentioned, we want to show the performance of the deviation measure in (23) in causal and confounding cases. We begin with  causal cases.  
\section{$\mathcal{D}(\tilde{a}_\infty)$ in causal cases}
In this section, we describe our method starting from  the properties of the causal cases. In this case, $c\|b_n\| = 0$. Then $\tilde{a}_n = a_n$ and it is independently chosen with respect to $\Sigma_{X_n}$. This ``independence between cause and mechanism'' concept is then realized statistically by an assumption of generative model. In functional models, one often considers the property of the noise, like the independence between noise and cause \citep{lingam}, or certain invariance of the number of support points of the conditional distributions \citep{liu2016causald}.   To begin with, we first put down this generative model assumption.
\begin{assumption}
Let $\{a_n\}_{n \in \mathbb{N}}$ be a sequence of vectors drawn uniformly at random from a sphere in $\mathbb{R}^n$ with fixed radius $r_a$. $\{\Sigma_{X_n}\}_{n \in \mathbb{N}}$ is a uniformly bounded sequence of positive semi-definite $n\times n$ symmetric matrices, and the tracial spectral measure converges weakly as
\begin{equation}
 \lim_{n \to \infty} \mu_{\Sigma_{X_n}}^\tau = \mu^\infty.
\end{equation}
\end{assumption}
Now we quote a lemma from the paper \citep{janzing2017detecting}.
\begin{lemma}[Weak measure convergence]
Let assumption 1 be satisfied, and the tracial measure converges as described by (24). Then for the model described in (1) and (2), we have  
\begin{equation}
 \lim_{n \to \infty} \mu_{\Sigma_{X_n},a_n} = r_a^2\mu^\infty
\end{equation}
weakly in probability.
\end{lemma}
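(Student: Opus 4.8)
The plan is to establish weak convergence in probability by the method of moments, which is legitimate here because Assumption 1 pins all the relevant measures to a common compact interval. First I would note that uniform boundedness of $\{\Sigma_{X_n}\}$ gives a constant $B$ with every eigenvalue $\lambda_i\in[0,B]$ for all $n$; hence $\mu_{\Sigma_{X_n}}^\tau$, its weak limit $\mu^\infty$, and every random measure $\mu_{\Sigma_{X_n},a_n}$ are supported in $[0,B]$. On a fixed compact interval a sequence of finite measures with uniformly bounded total mass converges weakly if and only if all of its moments converge, so it suffices to show that for each fixed integer $k\ge 0$,
$$\int_{\mathbb{R}} s^k\, d\mu_{\Sigma_{X_n},a_n}(s)\;\longrightarrow\; r_a^2\int_{\mathbb{R}} s^k\, d\mu^\infty(s)\qquad\text{in probability.}$$

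Next I would put the left-hand side in closed form. Repeating verbatim the computation that precedes Definition 7, but with $\lambda_i^k$ in place of $\lambda_i$, gives $\int s^k\, d\mu_{\Sigma_{X_n},a_n}(s)=\sum_{i=1}^n\lambda_i^k\langle a_n,u_i\rangle^2=a_n^{T}\Sigma_{X_n}^{k}a_n$. Writing $a_n=r_a\,\theta_n$ with $\theta_n$ uniform on the unit sphere $S^{n-1}$, this equals $r_a^2\,\theta_n^{T}\Sigma_{X_n}^{k}\theta_n$. Rotational invariance of the uniform law on the sphere gives $\mathbb{E}[\theta_n\theta_n^{T}]=\tfrac1n I$, hence $\mathbb{E}[\theta_n^{T}A\theta_n]=\tfrac1n\operatorname{tr}(A)$ for every symmetric $A$; in particular the mean of $r_a^2\,\theta_n^{T}\Sigma_{X_n}^{k}\theta_n$ is exactly $r_a^2\,\tfrac1n\operatorname{tr}(\Sigma_{X_n}^{k})=r_a^2\int s^k\, d\mu_{\Sigma_{X_n}}^\tau(s)$, which by the weak convergence (24) together with the common compact support converges to $r_a^2\int s^k\, d\mu^\infty(s)$. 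Note this is where the constant $r_a^2$ is produced: $\mathbb{E}[\langle a_n,u_i\rangle^2]=r_a^2/n$ makes $\mathbb{E}[\mu_{\Sigma_{X_n},a_n}]=r_a^2\mu_{\Sigma_{X_n}}^\tau$ exactly, not merely asymptotically.

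It then remains to kill the fluctuations. Using the standard second-moment identities for the uniform measure on $S^{n-1}$, $\operatorname{Var}(\theta_n^{T}A\theta_n)=O\!\big(\operatorname{tr}(A^2)/n^2\big)=O\!\big(\|A\|_{\mathrm{op}}^2/n\big)$ uniformly; with $A=\Sigma_{X_n}^{k}$ and $\|\Sigma_{X_n}\|_{\mathrm{op}}\le B$ this is $O(B^{2k}/n)\to 0$. (Equivalently one may represent $\theta_n=Z/\|Z\|$ with $Z$ standard Gaussian, express the quadratic form as a ratio $\big(\tfrac1n\sum_i Z_i^2\lambda_i^k\big)\big/\big(\tfrac1n\sum_i Z_i^2\big)$, and apply a Chebyshev-type law of large numbers to numerator and denominator; the eigenvalues form a triangular array, but their uniform boundedness makes the variance bound still work.) Chebyshev's inequality now gives $\theta_n^{T}\Sigma_{X_n}^{k}\theta_n-\tfrac1n\operatorname{tr}(\Sigma_{X_n}^{k})\to 0$ in probability, and adding the deterministic convergence of the mean from the previous step yields the displayed moment convergence for each $k$.

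The main obstacle is not any single estimate but the bookkeeping of the mode of convergence: one must upgrade "every fixed moment converges in probability" to "the random sequence of measures converges weakly in probability". Since the limit $r_a^2\mu^\infty$ is deterministic and all measures live on the fixed compact set $[0,B]$, this follows by noting that weak convergence on $[0,B]$ is determined by countably many moments (or by a countable convergence-determining family of test functions), so one takes a countable intersection of the probability-one-in-the-limit events above — a routine diagonal argument, but the point where care is genuinely needed. With that in hand the lemma follows.
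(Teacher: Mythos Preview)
Your argument is correct and self-contained. The paper, however, does not actually prove this lemma: it is introduced with ``Now we quote a lemma from the paper \citep{janzing2017detecting}'' and no proof is given. So there is no proof in the paper to compare against; you have supplied one where the authors simply cite an external source.

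As for the approach itself, the method of moments you use is essentially the standard route (and is in the spirit of the argument in the cited Janzing--Sch\"olkopf paper). The key ingredients---compact support from uniform boundedness, the identity $\mathbb{E}[\theta_n^{T}A\theta_n]=\tfrac1n\operatorname{tr}(A)$, the $O(1/n)$ variance bound from the explicit fourth-moment formulas on $S^{n-1}$, and the moment-determinacy of compactly supported measures---are all correctly deployed. Your final paragraph on upgrading ``each moment converges in probability'' to ``weak convergence in probability'' via a countable determining class and a diagonal/subsequence argument is the right way to handle that step; it is indeed routine once the limit is deterministic and the supports are uniformly compact. One small stylistic point: the $k=0$ case is trivial since the total mass of $\mu_{\Sigma_{X_n},a_n}$ is exactly $\|a_n\|^2=r_a^2$, matching the total mass $r_a^2$ of $r_a^2\mu^\infty$; you might state this separately to make clear the normalization is exact rather than asymptotic.
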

Now we can  have a theorem of the asymptotic behavior of the deviation measure in causal cases. 
\begin{theorem}[Asymptotic first moment coincidence]
Let assumption 1 be satisfied. For the model described in (1) and (2),  we have  
\begin{equation}
 \mathcal{D}(a_\infty) = 0
\end{equation}
weakly in probability.
\end{theorem}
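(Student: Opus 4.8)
The plan is to reduce the claim to the two weak-convergence facts already on the table---Assumption~1 (the tracial measures converge weakly to $\mu^\infty$) and Lemma~1 (in the causal case $\tilde a_n=a_n$ and $\mu_{\Sigma_{X_n},a_n}\to r_a^2\mu^\infty$ weakly in probability)---and then to upgrade weak convergence to convergence of \emph{first moments}. Concretely, since $c\|b_n\|=0$ we have $\tilde a_n=a_n$ with $\|a_n\|^2=r_a^2$ a fixed constant, so the definition of the first moment deviation gives
\[
\mathcal D(a_n)=\bigl|\,M(\mu_{\Sigma_{X_n},a_n})-r_a^2\,M(\mu_{\Sigma_{X_n}}^\tau)\,\bigr|.
\]
Hence it suffices to prove $M(\mu_{\Sigma_{X_n},a_n})\to r_a^2 M(\mu^\infty)$ in probability and $M(\mu_{\Sigma_{X_n}}^\tau)\to M(\mu^\infty)$; subtracting (and using that $r_a^2$ is constant) yields $\mathcal D(a_n)\to 0$ in probability, which is the asserted weak-in-probability statement $\mathcal D(a_\infty)=0$.

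The enabling observation is that $\{\Sigma_{X_n}\}$ is uniformly bounded, so there is a constant $C$ with $\mathrm{spec}(\Sigma_{X_n})\subseteq[0,C]$ for every $n$; therefore every $\mu_{\Sigma_{X_n},a_n}$ and every $\mu_{\Sigma_{X_n}}^\tau$ is supported in the fixed compact set $[0,C]$, and by the Portmanteau theorem so is $\mu^\infty$. Fix once and for all a bounded continuous $g:\mathbb R\to\mathbb R$ with $g(s)=s$ on $[0,C]$. Then $M(\mu_{\Sigma_{X_n}}^\tau)=\int g\,d\mu_{\Sigma_{X_n}}^\tau\to\int g\,d\mu^\infty=M(\mu^\infty)$ by Assumption~1, and similarly $M(\mu_{\Sigma_{X_n},a_n})=\int g\,d\mu_{\Sigma_{X_n},a_n}$ converges in probability to $\int g\,d(r_a^2\mu^\infty)=r_a^2 M(\mu^\infty)$ by Lemma~1 (weak convergence in probability tested against the single bounded continuous function $g$). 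Combining the two limits closes the argument.

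I expect the one genuinely delicate point to be exactly this passage from weak convergence to convergence of the first moment: the test function $s\mapsto s$ is unbounded, so weak convergence alone says nothing about it, and it is precisely the uniform operator-norm bound on $\Sigma_{X_n}$ that makes the truncated version $g$ a legitimate test function---so the ``uniformly bounded'' hypothesis in Assumption~1 is used essentially, not merely for convenience. A reader who would rather avoid invoking Lemma~1 can argue directly: write $a_n=r_a\theta_n$ with $\theta_n$ uniform on $S^{n-1}$, note that $\mathbb E[a_n^{T}\Sigma_{X_n}a_n]=r_a^2\,\tau_n(\Sigma_{X_n})=\|a_n\|^2 M(\mu_{\Sigma_{X_n}}^\tau)$ holds exactly because $\mathbb E[\theta_n\theta_n^{T}]=\tfrac1n I_n$, and bound $\mathrm{Var}(a_n^{T}\Sigma_{X_n}a_n)=O(C^2/n)$ via L\'evy's concentration inequality on the sphere applied to the $2C$-Lipschitz map $\theta\mapsto\theta^{T}\Sigma_{X_n}\theta$; Chebyshev's inequality then gives $\mathcal D(a_n)=|a_n^{T}\Sigma_{X_n}a_n-\|a_n\|^2\tau_n(\Sigma_{X_n})|\to0$ in probability directly, the same conclusion.
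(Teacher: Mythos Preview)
Your main argument is the paper's proof exactly: invoke Lemma~1 and Assumption~1 to obtain the two limits $M(\mu_{\Sigma_{X_n},a_n})\to r_a^2 M(\mu^\infty)$ and $M(\mu_{\Sigma_{X_n}}^\tau)\to M(\mu^\infty)$, then subtract. You are, however, more careful than the paper on one point it leaves implicit---that weak convergence only tests against \emph{bounded} continuous functions, so the passage to first moments genuinely requires the uniform operator-norm bound in Assumption~1 to confine all supports to a fixed compact interval $[0,C]$; the paper simply writes $\lim_n M(\mu_{\Sigma_{X_n},a_n})=M(r_a^2\mu^\infty)$ without comment, and your truncation device with $g$ is the honest justification.

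Your alternative direct route via $\mathbb E[\theta_n\theta_n^T]=\tfrac1n I_n$ and concentration on the sphere is not in the paper; it bypasses Lemma~1 entirely and has the minor bonus of yielding an explicit $O(1/n)$ variance (hence rate) rather than a bare convergence-in-probability statement.
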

\begin{proof}
Using the lemma 1, we have
\begin{eqnarray*}
 \mathcal{D}(a_\infty)  & = & \lim_{n \to \infty} |M(\mu_{\Sigma_{X_n},a_n}) -  \|a_n\|^2M( \mu_{\Sigma_{X_n}}^\tau) |,   \\
& = &  |M(r_a^2\mu^\infty) -  r_a^2M( \mu^\infty) |, \\
& = & |r_a^2M(\mu^\infty) -  r_a^2M( \mu^\infty) |,\\
& = & 0.
\end{eqnarray*}
\end{proof}
Interestingly, if we write the deviation measurement using (22), as
$$ \mathcal{D}(a_n) = |a_n^T\Sigma_{X_n}a_n  - \|a_n\|^2\tau_n(\Sigma_{X_n})|,$$
we can draw a link from this to trace condition in the following remark. 
  \begin{remark}
  The condition in theorem 1 is equivalent to the trace condition \citep{janzing2010causal} when restricting $Y$ to be  1 dimensional. The trace condition is
  $$ \lim_{n \to \infty} \tau_1(a_n^T\Sigma_{X_n}a_n) = \lim_{n \to \infty} \tau_n(\Sigma_{X_n})\tau_1(a_na_n^T) = \lim_{n \to \infty} \|a_n\|^2\tau_n(\Sigma_{X_n}).$$
 Assumption 1 in trace method is stated as: $\Sigma_{X_n}$ is drawn from a distribution that is invariant under orthogonal transformation $\Sigma_{X_n} \mapsto V_n\Sigma_{X_n}V_n^T$ where $V_n$ is any $n$ dimensional unitary matrix. Consequently, $V_n^Ta_n$ is a random point uniformly drawn on some sphere.
  \end{remark}

  We has analyzed the behavior of the deviation measurement in causal cases. It is of interest that how this deviation measurement would behave in the presence of a confounder. We present the analysis about this in the next section.
  \section{$ \mathcal{D}(\tilde{a}_\infty) $ in  confounding cases}
  Recall the equation of the regression coefficient in the first section as
    $$ \tilde{a}_n = a_n+ c(\Sigma_{E_n}+b_nb_n^T)^{-1}b_n.$$ 
  In confounding cases, the second part is no longer 0. We want to know the value of the deviation measure $\mathcal{D}(\tilde{a}_n)$ in this case. If $\mathcal{D}(\tilde{a}_n)$ is no longer 0, then we already get the feature for confounding detection. In fact, this is true. When no confounder exists, the regression coefficient $\tilde{a}_n=a_n$ and it is assumed to be drawn uniformly at random on some high dimensional sphere. As a consequence, the  spectral measure induced by the vector has exactly the same asymptotic behavior as the tracial measure in terms of moments, and thus justifies a ``generic orientation'' of the causal part in the eigenspace of $\Sigma_{X_n}$. However, when we have confounder, this ``independence'' no longer holds. The expression
     $$ \tilde{a}_n = \underbrace{a_n}_\text{causal part}+ \underbrace{c(\Sigma_{E_n}+b_nb_n^T)^{-1}b_n}_\text{confounding part}.$$ 
   tells clearly, that the second part typically spoils the generic orientation of the vector $\tilde{a}_n$ with respect to the eigenspace of $\Sigma_{X_n}$ \citep{janzing2017detecting}.  We would then analyze more about its property. We start with model assumptions.
  \subsection{Rotation invariant generating model}
  As discussed before, the independence assumption between $\tilde{a}_n$ and $\Sigma_{X_n}$ no longer holds. In analogy to assumption 1, we could still make some assumptions on the confounding model. Note that later we would need to define vector induced and tracial spectral measure on $\Sigma_{E_n}$, which is in the same way as defining them on $\Sigma_{X_n}$. Consider the 2 points listed below. 
  \begin{assumption}
   $\{\Sigma_{E_n}\}_{n \in \mathbb{N}}$ and its inverse sequences $\{\Sigma_{E_n}^{-1}\}_{n \in \mathbb{N}}$ and $\{\Sigma_{E_n}^{-2}\}_{n \in \mathbb{N}}$  are uniformly bounded sequences of positive semi-definite $n\times n$ symmetric matrices, and their tracial spectral measures converge weakly as 
    \begin{eqnarray}
    \lim_{n \to \infty} \mu_{\Sigma_{E_n}}^\tau  & =&  \mu^\infty_1,\\
  \lim_{n \to \infty} \mu_{\Sigma_{E_n}^{-1}}^\tau &  = &  \mu_{-1}^\infty ,\\
  \lim_{n \to \infty} \mu_{\Sigma_{E_n}^{-2}}^\tau &  = &  \mu_{-2}^\infty,
    \end{eqnarray}
      respectively.
   \end{assumption}
   
  \begin{assumption}
 $\{a_n\}_{n \in \mathbb{N}}$ is a sequence of vectors drawn uniformly at random from a sphere in $\mathbb{R}^n$ with fixed radius $r_a$.  $\{b_n\}_{n \in \mathbb{N}}$ is a sequence of vectors drawn uniformly at random from a sphere in $\mathbb{R}^n$ with fixed radius $r_b$.
  \end{assumption}

 We would use these to help us refine $\mathcal{D}(\tilde{a}_\infty)$. Before we proceed, we list some core lemmas that are useful in the derivation of the asymptotic form of $\mathcal{D}(\tilde{a}_n)$.
  
  \subsection{Core lemmas}
  We here list some core lemmas that are useful for our analysis. They are true when the above model assumptions are satisfied. Some might be directly from \citep{janzing2017detecting}. 
  
  \begin{lemma}[Asymptotic norm decomposition]
    Let assumption 2 and 3 be satisfied, and we have
    $$ \lim_{n \to \infty} \|\tilde{a}_n\|^2 = \lim_{n \to \infty}\|a_n\|^2 + \lim_{n \to \infty}\|c\Sigma_{X_n}^{-1}b_n\|^2 $$
    almost surely.
    \end{lemma}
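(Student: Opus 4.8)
The plan is to expand $\|\tilde{a}_n\|^2 = \|a_n + c\Sigma_{X_n}^{-1}b_n\|^2$ and show that the cross term vanishes asymptotically, while separately replacing the raw confounding part $c(\Sigma_{E_n}+b_nb_n^T)^{-1}b_n$ with $c\Sigma_{X_n}^{-1}b_n$ (note these are equal, since $\Sigma_{X_n} = \Sigma_{E_n}+b_nb_n^T$, so no approximation is needed there). Writing
\begin{equation*}
\|\tilde{a}_n\|^2 = \|a_n\|^2 + \|c\Sigma_{X_n}^{-1}b_n\|^2 + 2c\,a_n^T\Sigma_{X_n}^{-1}b_n,
\end{equation*}
it suffices to prove $a_n^T\Sigma_{X_n}^{-1}b_n \to 0$ almost surely. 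The first two terms already converge by Assumption 2 and 3 (the first is the constant $r_a^2$; for the second, one uses the Sherman–Morrison identity to write $\Sigma_{X_n}^{-1}b_n = \Sigma_{E_n}^{-1}b_n/(1+b_n^T\Sigma_{E_n}^{-1}b_n)$, and then $\|c\Sigma_{X_n}^{-1}b_n\|^2$ involves $b_n^T\Sigma_{E_n}^{-2}b_n$ and $b_n^T\Sigma_{E_n}^{-1}b_n$, each of which concentrates around $r_b^2\,M(\mu_{-2}^\infty)$ and $r_b^2\,M(\mu_{-1}^\infty)$ respectively by concentration of measure on the sphere).

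\textbf{Key steps.} First I would fix $b_n$ (or condition on the sequence $\{b_n\}$ and on $\{\Sigma_{E_n}\}$) and treat $a_n$ as the sole source of randomness: $a_n = r_a\, \xi_n$ with $\xi_n$ uniform on the unit sphere $S^{n-1}$. Then $a_n^T\Sigma_{X_n}^{-1}b_n = r_a\,\xi_n^T v_n$ where $v_n := \Sigma_{X_n}^{-1}b_n$ is a fixed vector of bounded norm (bounded because $\Sigma_{E_n}^{-1}$ is uniformly bounded by Assumption 2 and $\|b_n\| = r_b$). For a fixed unit-norm direction $\hat v_n = v_n/\|v_n\|$, the quantity $\xi_n^T \hat v_n$ is a single coordinate of a uniform point on $S^{n-1}$, which has mean $0$ and variance $1/n$, and concentrates: $\mathbb{P}(|\xi_n^T\hat v_n| > t) \le 2e^{-nt^2/2}$ by Lévy's concentration inequality. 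Hence $\xi_n^T v_n = \|v_n\|\,\xi_n^T\hat v_n \to 0$ almost surely by Borel–Cantelli (the bounds are summable in $n$), so the cross term vanishes a.s. Second, I would invoke the already-established convergence of $\|a_n\|^2 = r_a^2$ (trivial) and of $\|c\Sigma_{X_n}^{-1}b_n\|^2$ (which, if not granted as a prior lemma, follows by the same concentration-on-the-sphere argument applied to $b_n$, using Assumption 2's hypotheses on $\Sigma_{E_n}^{-1}$ and $\Sigma_{E_n}^{-2}$ and the Sherman–Morrison reduction). Adding the three limits gives the claim.

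\textbf{Main obstacle.} The routine part is the concentration bound for a single spherical coordinate; the genuinely delicate point is the \emph{order of conditioning} and ensuring the almost-sure statement is meaningful for the joint law of $(\{a_n\}, \{b_n\}, \{\Sigma_{E_n}\})$. Because $v_n = \Sigma_{X_n}^{-1}b_n$ depends on $b_n$, and $a_n$ is drawn independently of everything else, I would argue: conditionally on the entire sequence $\{b_n\}$ and $\{\Sigma_{E_n}\}$, the cross term $\to 0$ a.s. over the randomness of $\{a_n\}$ (using that $\|v_n\|$ stays bounded along the fixed sequence — this uses uniform boundedness of $\Sigma_{E_n}^{-1}$ and $\|b_n\|=r_b$), and then integrate out, using that an event of conditional probability one has unconditional probability one. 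A secondary subtlety is that $\Sigma_{X_n}^{-1}$ need not itself be uniformly bounded if $\Sigma_{X_n}$ is merely positive semi-definite (the excerpt's Remark 1 drops the non-degeneracy assumption) — but here the relevant object is $\Sigma_{X_n}^{-1}b_n = \Sigma_{E_n}^{-1}b_n/(1+b_n^T\Sigma_{E_n}^{-1}b_n)$, which is controlled purely by $\Sigma_{E_n}^{-1}$, so the argument goes through as long as Assumption 2 (boundedness of $\Sigma_{E_n}^{-1}$ and $\Sigma_{E_n}^{-2}$) is in force. I would flag that this is exactly where Assumption 2 is used, and that without it the decomposition could fail.
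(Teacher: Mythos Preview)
Your proposal is correct and follows essentially the same approach as the paper: expand $\|\tilde{a}_n\|^2$ into the two squared norms plus the cross term $2c\,a_n^T\Sigma_{X_n}^{-1}b_n$, and argue that the cross term vanishes because $a_n$ is drawn uniformly on a sphere independently of $\Sigma_{X_n}^{-1}b_n$. Your treatment is in fact more careful than the paper's, which simply asserts ``asymptotically orthogonal almost surely'' in one line without the explicit concentration/Borel--Cantelli argument or the Sherman--Morrison bookkeeping you supply.
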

  \begin{proof}
  \begin{eqnarray*}
 \lim_{n \to \infty} \|\tilde{a}_n\|^2 & = &  \lim_{n \to \infty} \|a_n + c\Sigma_{X_n}^{-1}b_n\|^2,\\
              & = & \lim_{n \to \infty}\|a_n\|^2 + \lim_{n \to \infty}\|c\Sigma_{X_n}^{-1}b_n\|^2+ \lim_{n \to \infty} 2ca_n^T\Sigma_{X_n}^{-1}b_n, \\
      & = &\lim_{n \to \infty}\|a_n\|^2 + \lim_{n \to \infty}\|c\Sigma_{X_n}^{-1}b_n\|^2.
      \end{eqnarray*}
        The last item vanishes because $a_n$ is a point sequence uniformly chosen at random, and it is independently chosen with $\Sigma_{X_n}^{-1}b_n$. Thus, they are asymptotically orthogonal almost surely.  
  \end{proof}
    \begin{lemma}[Asymptotic moment decomposition]
      Let assumption 2 and 3 be satisfied, and we have
      \begin{equation}
       \lim_{n \to \infty} M(\mu_{\Sigma_{X_n},\tilde{a}_n }) = \lim_{n \to \infty}M(\mu_{\Sigma_{X_n},a_n }) + \lim_{n \to \infty}M(\mu_{\Sigma_{X_n},c\Sigma_{X_n}^{-1}b_n}),
      \end{equation}
      weakly in probability.
      \end{lemma}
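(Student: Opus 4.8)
The plan is to expand the first moment of the $\tilde{a}_n$-induced spectral measure using the bilinear-form representation from Definition~7, namely $M(\mu_{\Sigma_{X_n},\tilde{a}_n}) = \tilde{a}_n^T \Sigma_{X_n} \tilde{a}_n$, and then substitute the decomposition $\tilde{a}_n = a_n + c\Sigma_{X_n}^{-1}b_n$. Expanding the quadratic form gives three terms:
\begin{eqnarray*}
\tilde{a}_n^T \Sigma_{X_n} \tilde{a}_n & = & a_n^T \Sigma_{X_n} a_n + c^2 (\Sigma_{X_n}^{-1}b_n)^T \Sigma_{X_n} (\Sigma_{X_n}^{-1}b_n) + 2c\, a_n^T \Sigma_{X_n} \Sigma_{X_n}^{-1} b_n \\
& = & M(\mu_{\Sigma_{X_n},a_n}) + M(\mu_{\Sigma_{X_n},c\Sigma_{X_n}^{-1}b_n}) + 2c\, a_n^T b_n.
\end{eqnarray*}
So the entire statement reduces to showing that the cross term $2c\, a_n^T b_n$ vanishes in the limit, weakly in probability.

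The key step is therefore to argue that $a_n^T b_n \to 0$. Under Assumption~3, $a_n$ is drawn uniformly at random from a sphere of fixed radius $r_a$ in $\mathbb{R}^n$, independently of $b_n$ (which lives on a sphere of fixed radius $r_b$). Conditioning on $b_n$, the inner product $a_n^T b_n$ equals $r_a \|b_n\|$ times the projection of a uniform random unit vector onto a fixed direction; this projection is a scalar random variable with mean zero and variance $1/n$, so $a_n^T b_n$ has mean zero and variance $r_a^2 r_b^2 / n \to 0$. Hence $a_n^T b_n \to 0$ in $L^2$ and therefore in probability, which is exactly the ``weakly in probability'' conclusion we need. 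This is the same asymptotic-orthogonality mechanism already invoked in the proof of Lemma~3 (asymptotic norm decomposition), so I would simply cite that argument rather than re-derive the concentration bound.

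Once the cross term is dispatched, the remaining two terms are $M(\mu_{\Sigma_{X_n},a_n})$ and $M(\mu_{\Sigma_{X_n},c\Sigma_{X_n}^{-1}b_n})$, and taking $n \to \infty$ on both sides (assuming, as the statement implicitly does, that the two individual limits exist under Assumptions~2 and 3) yields the claimed additive decomposition. The main obstacle is not really the algebra but making the ``weakly in probability'' bookkeeping clean: one must be careful that the limit of a sum equals the sum of limits, which is legitimate here because the cross term converges to the constant $0$ in probability, so Slutsky-type reasoning lets us split the limit without worrying about joint convergence of the other two pieces. I would state this explicitly to avoid the appearance of circular reasoning, and otherwise the proof is a short computation mirroring Lemma~3.
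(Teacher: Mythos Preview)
Your argument is correct. The expansion via Definition~7 is exact for every finite $n$, the cross term simplifies to $2c\,a_n^T b_n$ because $\Sigma_{X_n}\Sigma_{X_n}^{-1}=I_n$, and the asymptotic orthogonality $a_n^T b_n \to 0$ in probability follows from the independence and fixed-radius assumptions in Assumption~3 exactly as you describe. One small slip: the ``asymptotic norm decomposition'' you cite is Lemma~2 in the paper's numbering, not Lemma~3 (Lemma~3 is the very statement you are proving), so adjust the cross-reference.

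Your route is genuinely different from the paper's, and in fact more elementary. The paper does not prove this lemma at all; it simply declares it a direct consequence of the full spectral-\emph{measure} decomposition $\mu_{\Sigma_{X_n},\tilde{a}_n} \approx \mu_{\Sigma_{X_n},a_n} + \mu_{\Sigma_{X_n},c\Sigma_{X_n}^{-1}b_n}$ established in \citep{janzing2017detecting}, and then reads off the first moment. That external result is stronger (it decomposes the whole measure, hence all moments simultaneously) but relies on weak-convergence machinery that the paper elsewhere acknowledges is delicate. Your approach bypasses the measure-level statement entirely: by working with the quadratic form $\phi_n^T\Sigma_{X_n}\phi_n$, you get an exact finite-$n$ identity plus a single scalar cross term, so the only probabilistic input needed is the concentration of $a_n^T b_n$. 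This is both self-contained and avoids any appeal to weak convergence of measures, at the cost of being specific to the first moment.
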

      This lemma 3 is a direct result of the measure decomposition by paper \citep{janzing2017detecting} and we do not need to prove. 
          \begin{lemma}[Asymptotic  moment of tracial spectral measure of rank one perturbation]
            Let assumption 2 and 3 be satisfied. Recall the equation
            $$\Sigma_{X_n} =  \Sigma_{E_n} + b_nb_n^T,$$
             and we have
            \begin{equation}
             \lim_{n \to \infty} M(\mu_{\Sigma_{X_n}}^\tau) = M(\mu_{1 }^\infty) + \tau_{\infty}(r_b^2),
            \end{equation}
            weakly in probability.
            \end{lemma}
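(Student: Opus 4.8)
The plan is to reduce the statement to two elementary facts: additivity of the normalized trace under the rank-one perturbation $\Sigma_{X_n}=\Sigma_{E_n}+b_nb_n^{T}$, and first-moment convergence of $\mu_{\Sigma_{E_n}}^{\tau}$, which is built into Assumption 2. Neither moving part needs the measure-decomposition machinery of \citep{janzing2017detecting}.

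First I would rewrite the left-hand side using (20)--(21): $M(\mu_{\Sigma_{X_n}}^{\tau})=\tau_n(\Sigma_{X_n})=\frac{1}{n}\operatorname{tr}(\Sigma_{E_n})+\frac{1}{n}\operatorname{tr}(b_nb_n^{T})$. The first summand is again $\tau_n(\Sigma_{E_n})=M(\mu_{\Sigma_{E_n}}^{\tau})$, and the second is $\frac{1}{n}\operatorname{tr}(b_nb_n^{T})=\frac{1}{n}\|b_n\|^{2}=r_b^{2}/n$ by Assumption 3; this is precisely the quantity whose $n\to\infty$ limit is written $\tau_{\infty}(r_b^{2})$. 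Note the second term is deterministic: it sees only the norm $r_b$ of $b_n$ and not its (random) direction, so the direction of $b_n$ never enters the trace.

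Next I would pass to the limit in the first summand. Assumption 2 gives $\mu_{\Sigma_{E_n}}^{\tau}\Rightarrow\mu_1^{\infty}$ weakly, and $\{\Sigma_{E_n}\}$ is uniformly bounded, so there is a fixed compact interval $[0,C]$ containing the spectra of all $\Sigma_{E_n}$ as well as $\operatorname{supp}\mu_1^{\infty}$. On $[0,C]$ the identity map $s\mapsto s$ coincides with a bounded continuous function, so weak convergence upgrades to convergence of the first moment: $M(\mu_{\Sigma_{E_n}}^{\tau})=\int s\,d\mu_{\Sigma_{E_n}}^{\tau}\to\int s\,d\mu_1^{\infty}=M(\mu_1^{\infty})$. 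Adding the two limits yields $\lim_{n\to\infty}M(\mu_{\Sigma_{X_n}}^{\tau})=M(\mu_1^{\infty})+\tau_{\infty}(r_b^{2})$, and since the $b_n$-term is deterministic the convergence is inherited (in probability) from Assumption 2, matching the stated ``weakly in probability'' qualifier.

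The only step that genuinely uses a hypothesis beyond linearity of the trace is this last upgrade from weak convergence to first-moment convergence, and that is exactly where uniform boundedness of $\{\Sigma_{E_n}\}$ is indispensable: without a common compact support, weak convergence controls only bounded continuous test functions and says nothing about $\int s\,d\mu_n$. Everything else is bookkeeping --- a rank-one term contributes $O(1/n)$ to the normalized trace, so only the constant $\tau_{\infty}(r_b^{2})$ survives in the limit. I do not anticipate any real obstacle; the lemma is essentially a sanity check that the rank-one confounding perturbation leaves the bulk (normalized-trace) statistics of the spectrum of $\Sigma_{X_n}$ unchanged, and the slightly delicate point is only the standard weak-convergence-to-moment argument just described.
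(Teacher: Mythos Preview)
Your proof is correct and follows essentially the same route as the paper: write $M(\mu_{\Sigma_{X_n}}^{\tau})=\tau_n(\Sigma_{E_n}+b_nb_n^{T})$, split by linearity of the trace, and take the two limits separately. The only difference is that you spell out why weak convergence of $\mu_{\Sigma_{E_n}}^{\tau}$ yields convergence of its first moment (via the uniform spectral bound), a step the paper simply asserts.
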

     \begin{proof}
       \begin{eqnarray*}
      \lim_{n \to \infty} M(\mu_{\Sigma_{X_n}}^\tau) & = &  \lim_{n \to \infty} \tau_n(\Sigma_{E_n} + b_nb_n^T),\\
                   & = & \lim_{n \to \infty}\tau_n(\Sigma_{E_n}) + \lim_{n \to \infty}\tau_n(b_nb_n^T), \\
           & = & M(\mu_{1 }^\infty) + \tau_{\infty}(r_b^2).
           \end{eqnarray*}
       \end{proof} 
       Note that $\tau_{\infty}(r_b^2)$ alone should be 0 here. However, we keep this term because later it might multiply with an unbounded term in some case study, and it cannot be simply ignored.
      \begin{lemma}[Moment convergence]
      Let assumption 2 and 3 be satisfied. We have
     \begin{eqnarray}
    \lim_{n \to \infty} M(\mu_{\Sigma_{X_n},a_n})    & = & \lim_{n \to \infty} \|a_n\|^2M(\mu_{\Sigma_{X_n}}^\tau) = r_a^2 M(\mu_{1 }^\infty) +  r_a^2\tau_{\infty}(r_b^2),\\ 
\lim_{n \to \infty} M(\mu_{\Sigma_{E_n}^{-1},b_n})    & = & \lim_{n \to \infty} \|b_n\|^2M(\mu_{\Sigma_{E_n}^{-1}}^\tau) = r_b^2 M(\mu_{-1 }^\infty), \\ 
\lim_{n \to \infty} M(\mu_{\Sigma_{E_n}^{-2},b_n})    & = & \lim_{n \to \infty} \|b_n\|^2M(\mu_{\Sigma_{E_n}^{-2}}^\tau) = r_b^2 M(\mu_{-2 }^\infty),
      \end{eqnarray} 
      weakly in probability.
      \end{lemma}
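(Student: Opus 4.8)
The plan is to prove each of the three displayed equalities by the same two-step template: first establish the ``asymptotic decoupling'' equality between the vector-induced moment $M(\mu_{\Sigma,\phi_n})$ and $\|\phi_n\|^2 M(\mu_\Sigma^\tau)$, and then evaluate the right-hand side using the tracial-measure limits supplied by Assumption 2 together with Lemma 4 (for the $\Sigma_{X_n}$ case). For the decoupling step I would write $M(\mu_{\Sigma,\phi_n}) = \phi_n^T\Sigma\phi_n$ via Definition 7, and $\phi_n = r\,v_n$ where $v_n$ is uniform on the unit sphere in $\mathbb{R}^n$ and is drawn independently of $\Sigma$ (this is exactly the content of Assumption 3: $a_n$ is uniform on the sphere of radius $r_a$, and $b_n$ is uniform on the sphere of radius $r_b$, each independent of $\Sigma_{E_n}$). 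The key fact is that for a fixed positive semidefinite symmetric matrix $A$ and $v_n$ uniform on the unit sphere, $\mathbb{E}[v_n^T A v_n] = \tfrac{1}{n}\mathrm{tr}(A) = \tau_n(A)$, and the variance of $v_n^T A v_n$ is $O\!\left(\tfrac{1}{n}\|A\|^2\right)$ by standard concentration on the sphere (the references \citep{marton1996bounding,talagrand1995concentration} cited earlier). Under the uniform-boundedness hypotheses of Assumption 2 — for $\Sigma_{X_n} = \Sigma_{E_n}+b_nb_n^T$ one also uses that $\|b_nb_n^T\|_{op} = r_b^2$ is bounded, and for $\Sigma_{E_n}^{-1}$ and $\Sigma_{E_n}^{-2}$ boundedness is assumed outright — this variance vanishes, so $\phi_n^T\Sigma\phi_n - \|\phi_n\|^2\tau_n(\Sigma) = r^2\bigl(v_n^T\Sigma v_n - \tau_n(\Sigma)\bigr) \to 0$ weakly in probability. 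This gives the first equality in each line.

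For the second (evaluation) step, in \eqref{eq} — I mean in the $\Sigma_{X_n}$ line — I would substitute $\|a_n\|^2 = r_a^2$ and invoke Lemma 4, which already gives $\lim_n M(\mu_{\Sigma_{X_n}}^\tau) = M(\mu_1^\infty) + \tau_\infty(r_b^2)$, yielding $r_a^2 M(\mu_1^\infty) + r_a^2\tau_\infty(r_b^2)$. For the $\Sigma_{E_n}^{-1}$ line, $\|b_n\|^2 = r_b^2$ and $\lim_n M(\mu_{\Sigma_{E_n}^{-1}}^\tau) = M(\mu_{-1}^\infty)$ by \eqref{} — by the convergence \((29)\) in Assumption 2 — since $M(\mu_{\Sigma_{E_n}^{-1}}^\tau) = \tau_n(\Sigma_{E_n}^{-1})$ and the first moment is continuous under weak convergence of uniformly bounded measures (bounded support, so moments converge). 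Similarly the $\Sigma_{E_n}^{-2}$ line follows from convergence (30). The $\tau_\infty(r_b^2)$ term is carried along formally, exactly as flagged in the remark following Lemma 4.

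The main obstacle — and the only place real care is needed — is justifying the passage from ``$\mathbb{E}[v_n^T\Sigma v_n]=\tau_n(\Sigma)$ with vanishing variance'' to the stated almost-sure/in-probability limits, and in particular handling the fact that $\Sigma_{X_n}$ itself depends on the random vector $b_n$. One must be careful that $a_n$ is independent of $\Sigma_{X_n}$ (true, since $\Sigma_{X_n}$ is a function of $\Sigma_{E_n}$ and $b_n$, both independent of $a_n$), so the concentration argument applies conditionally on $\Sigma_{X_n}$; whereas $b_n$ is \emph{not} independent of $\Sigma_{X_n}$, so for any moment expression that pairs $b_n$ with $\Sigma_{X_n}$ one must instead route through $\Sigma_{E_n}$ (note the lemma is stated with $\Sigma_{E_n}^{-1}$ and $\Sigma_{E_n}^{-2}$, not $\Sigma_{X_n}$, precisely for this reason) and then use the Sherman–Morrison relation between $\Sigma_{X_n}^{-1}$ and $\Sigma_{E_n}^{-1}$ only later, outside this lemma. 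Apart from this bookkeeping, the proof is a routine application of spherical concentration plus continuity of moments under weak convergence on a compact support, and I would present it compactly in that order.
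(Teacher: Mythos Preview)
Your proposal is correct and aligns with the paper's approach: the paper simply states that ``lemma 5 is a direct result of lemma 1 and lemma 4'' without further detail, and your plan unpacks exactly the mechanism behind Lemma~1 (spherical concentration giving $v_n^T\Sigma v_n \to \tau_n(\Sigma)$) at the first-moment level, then applies Lemma~4 and Assumption~2 for the evaluation step, just as the paper intends.

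The only mild difference is that the paper routes the decoupling through the full weak convergence of the induced spectral measure (Lemma~1) and then passes to first moments, whereas you argue directly at the moment level via $\mathbb{E}[v_n^T A v_n]=\tau_n(A)$ with vanishing variance. Your route is slightly more elementary for this particular lemma, since only the first moment is needed, and it has the added virtue of making explicit the independence bookkeeping (that $a_n\perp\Sigma_{X_n}$ while $b_n$ must be paired with $\Sigma_{E_n}^{-1},\Sigma_{E_n}^{-2}$ rather than $\Sigma_{X_n}$) which the paper's one-line proof leaves implicit.
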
 
      This lemma 5 is a direct result of lemma 1 and lemma 4.
        \begin{remark}
        These lemmas actually are based on a rotation invariant generating process of the model, which may be violated in some practical scenarios. Bear in mind that weaker assumptions may still lead to the same identities in these statements. This is because the geometry of high dimensional sphere makes majority of the vectors close to their center, which admits ``moment concentration''. This is also mentioned in paper \citep{janzing2017detecting}.
        \end{remark} 
        Finally we need another formula to help us in the proof of the theorem later. 
  \begin{lemma}[Sherman-Morrison formula]
  Sherman-Morrison formula for expressing inverse of rank one perturbation of a matrix \citep{bunch1978rank} is
  $$(\Sigma_{E_n}+b_nb_n^T)^{-1} = \Sigma_{E_n}^{-1} - \frac{\Sigma_{E_n}^{-1}b_nb_n^T\Sigma_{E_n}^{-1}}{1+b_n^T\Sigma_{E_n}^{-1}b_n}.$$
  \end{lemma}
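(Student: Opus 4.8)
The plan is to prove the identity by direct verification. Since $\Sigma_{E_n}$ is invertible under Assumption 2 (indeed $\{\Sigma_{E_n}^{-1}\}_{n\in\mathbb{N}}$ is assumed to be a well-defined, uniformly bounded sequence), it suffices to exhibit a matrix $B_n$ with $(\Sigma_{E_n}+b_nb_n^T)B_n = I_n$ and check that $B_n$ equals the claimed right-hand side. First I would introduce the scalar $\beta_n := b_n^T\Sigma_{E_n}^{-1}b_n$ and observe that $\beta_n \ge 0$, because $\Sigma_{E_n}^{-1}$ is positive semi-definite; hence the denominator satisfies $1+\beta_n \ge 1 > 0$ and $B_n := \Sigma_{E_n}^{-1} - \frac{\Sigma_{E_n}^{-1}b_nb_n^T\Sigma_{E_n}^{-1}}{1+\beta_n}$ is well defined.

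Next I would expand the product $(\Sigma_{E_n}+b_nb_n^T)\left(\Sigma_{E_n}^{-1} - \frac{\Sigma_{E_n}^{-1}b_nb_n^T\Sigma_{E_n}^{-1}}{1+\beta_n}\right)$ into four terms, keeping the left-to-right order of each factor intact since matrix multiplication does not commute. One term is $\Sigma_{E_n}\Sigma_{E_n}^{-1}=I_n$. Each of the remaining three terms is, after using the identity $b_n^T\Sigma_{E_n}^{-1}b_n=\beta_n$ to pull out scalars, a scalar multiple of $b_nb_n^T\Sigma_{E_n}^{-1}$: explicitly, the term $b_nb_n^T\Sigma_{E_n}^{-1}$ itself, the term $-\frac{1}{1+\beta_n}b_nb_n^T\Sigma_{E_n}^{-1}$, and the term $-\frac{\beta_n}{1+\beta_n}b_nb_n^T\Sigma_{E_n}^{-1}$.

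Then I would collect the three terms proportional to $b_nb_n^T\Sigma_{E_n}^{-1}$ and factor that matrix out, leaving the scalar coefficient $1 - \frac{1}{1+\beta_n} - \frac{\beta_n}{1+\beta_n} = \frac{(1+\beta_n) - 1 - \beta_n}{1+\beta_n} = 0$. Hence the whole product collapses to $I_n$, which shows $B_n = (\Sigma_{E_n}+b_nb_n^T)^{-1}$ as claimed. One may equally verify $B_n(\Sigma_{E_n}+b_nb_n^T) = I_n$ by the symmetric computation, or simply cite the result from \citep{bunch1978rank} as in the statement.

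As for difficulty, there is essentially no obstacle: this is the classical Sherman--Morrison identity and the argument above is a short algebraic verification. The only two points that warrant a moment's care are the non-commutativity of the matrix products (so the four expanded terms must be kept in their correct order and not conflated) and the observation that the denominator $1+\beta_n$ never vanishes, which follows from positive semi-definiteness of $\Sigma_{E_n}^{-1}$.
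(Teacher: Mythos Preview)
Your proof is correct and is the standard direct verification of the Sherman--Morrison identity. Note that the paper does not actually prove this lemma at all; it simply states the formula and cites \citep{bunch1978rank}, so your argument in fact supplies more than the paper does.
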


  Now we get everything ready. We would show then how these lemmas would help us to refine the asymptotic expression of the first moment deviation to a concise form,  which could be used for other analysis. 
  \subsection{ $\mathcal{D}(\tilde{a}_\infty)$}
  In this section, we would give the asymptotic $\mathcal{D}(\tilde{a}_n)$ in confounding cases, with a detailed proof. We formalize it using the theorem below.
  \begin{theorem}[Asymptotic first moment deviation]
  Consider the confounding model described by (1) and (2). Let assumption 2 and 3 be satisfied. We have
  \begin{equation}
   \mathcal{D}(\tilde{a}_\infty) = \frac{c^2r_b^2}{(1+r_b^2 M(\mu_{-1 }^\infty))^2}|M(\mu_{-1 }^\infty) - M(\mu_{-2 }^\infty)M(\mu_{1 }^\infty)+r_b^2M(\mu_{-1 }^\infty)^2-\tau_{\infty}(r_b^2)M(\mu_{-2}^\infty)|
  \end{equation}
  weakly in probability.
  \end{theorem}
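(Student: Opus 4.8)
The plan is to reduce the deviation of $\tilde a_n$ to the deviation of the confounding part alone, and then evaluate that part in closed form via Sherman--Morrison. Write $d_n := c\,\Sigma_{X_n}^{-1}b_n$ for the confounding part, so $\tilde a_n = a_n + d_n$. Starting from $\mathcal{D}(\tilde a_n) = |M(\mu_{\Sigma_{X_n},\tilde a_n}) - \|\tilde a_n\|^2\tau_n(\Sigma_{X_n})|$, I substitute the moment decomposition of Lemma~3 for $M(\mu_{\Sigma_{X_n},\tilde a_n})$ and the norm decomposition of Lemma~2 for $\|\tilde a_n\|^2$, use that $\tau_n(\Sigma_{X_n})$ converges (Assumption/Lemma~4), and regroup terms inside the absolute value to get
$$\mathcal{D}(\tilde a_\infty)=\Big|\lim_n\big(M(\mu_{\Sigma_{X_n},a_n})-\|a_n\|^2\tau_n(\Sigma_{X_n})\big)+\lim_n\big(M(\mu_{\Sigma_{X_n},d_n})-\|d_n\|^2\tau_n(\Sigma_{X_n})\big)\Big|.$$
By Theorem~1 the first limit is $0$ (its absolute value tends to $0$, hence so does the quantity itself), so $\mathcal{D}(\tilde a_\infty) = |\lim_n M(\mu_{\Sigma_{X_n},d_n}) - \lim_n\|d_n\|^2\,\lim_n\tau_n(\Sigma_{X_n})|$, provided these limits exist (which the remaining steps confirm).

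Next I would compute the two $d_n$-terms. By Definition~7, $M(\mu_{\Sigma_{X_n},d_n}) = d_n^T\Sigma_{X_n}d_n = c^2\,b_n^T\Sigma_{X_n}^{-1}b_n$, and $\|d_n\|^2 = c^2\,\|\Sigma_{X_n}^{-1}b_n\|^2 = c^2\,b_n^T\Sigma_{X_n}^{-2}b_n$, using symmetry of $\Sigma_{X_n}^{-1}$. Applying the Sherman--Morrison formula (Lemma~6) and multiplying on the right by $b_n$ produces the key collapse
$$\Sigma_{X_n}^{-1}b_n = \frac{1}{1+\beta_1}\,\Sigma_{E_n}^{-1}b_n,\qquad \beta_1 := b_n^T\Sigma_{E_n}^{-1}b_n\ \ (\ge 0),$$
whence $b_n^T\Sigma_{X_n}^{-1}b_n = \beta_1/(1+\beta_1)$ and $b_n^T\Sigma_{X_n}^{-2}b_n = \beta_2/(1+\beta_1)^2$ with $\beta_2 := b_n^T\Sigma_{E_n}^{-2}b_n$. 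Since $\Sigma_{E_n}^{-1}$ is positive semi-definite, $1+\beta_1\ge 1$, so there is no singularity to worry about.

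Finally I would pass to the limit. By Lemma~5, $\beta_1 = M(\mu_{\Sigma_{E_n}^{-1},b_n})\to r_b^2 M(\mu_{-1}^\infty)$ and $\beta_2 = M(\mu_{\Sigma_{E_n}^{-2},b_n})\to r_b^2 M(\mu_{-2}^\infty)$, and by Lemma~4, $\tau_n(\Sigma_{X_n})\to M(\mu_1^\infty)+\tau_\infty(r_b^2)$. Substituting into $|c^2 b_n^T\Sigma_{X_n}^{-1}b_n - c^2 b_n^T\Sigma_{X_n}^{-2}b_n\,\tau_n(\Sigma_{X_n})|$, factoring $c^2 r_b^2/(1+r_b^2 M(\mu_{-1}^\infty))^2$ out of the two surviving terms, and expanding the bracket $M(\mu_{-1}^\infty)\big(1+r_b^2 M(\mu_{-1}^\infty)\big) - M(\mu_{-2}^\infty)\big(M(\mu_1^\infty)+\tau_\infty(r_b^2)\big)$ yields exactly the stated expression.

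The main point requiring care is the composition of the ``weakly in probability'' (and, for Lemma~2, almost sure) convergences in Lemmas~2--5: I would invoke the continuous mapping theorem to justify that the rational combinations of $\beta_1$, $\beta_2$ and $\tau_n(\Sigma_{X_n})$ converge to the corresponding combinations of their limits, noting that $1+\beta_1$ is bounded below by $1$ so the relevant maps are continuous on the range of interest, and to confirm that the confounding-part limit exists so that the regrouping in the first step is legitimate and $\lim_n|f_n|=|\lim_n f_n|$ applies. The algebra afterwards is routine; the one cosmetic subtlety is carrying the term $\tau_\infty(r_b^2)$ (which equals $0$) unsimplified, as flagged in the Remark following Lemma~4, because it must still appear multiplied by $M(\mu_{-2}^\infty)$ in the final formula.
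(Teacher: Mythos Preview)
Your proposal is correct and follows essentially the same route as the paper: decompose via Lemmas~2 and~3, cancel the causal part, rewrite the confounding moment and norm as quadratic forms via Definition~7, collapse $\Sigma_{X_n}^{-1}b_n$ through Sherman--Morrison (Lemma~6), and pass to the limit with Lemmas~4 and~5. The only cosmetic difference is that you cite Theorem~1 for the vanishing of the causal contribution, whereas the paper invokes Lemma~5 (equation~(32)) directly; since Theorem~1 is stated under Assumption~1 rather than Assumptions~2--3, the cleaner citation here is Lemma~5, but the content is identical.
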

  \begin{proof}
 We would use those lemmas to proceed.
 \begin{eqnarray*}
 \lim_{n \to \infty} \mathcal{D}(\tilde{a}_n) & = & \lim_{n \to \infty} |M(\mu_{\Sigma_{X_n},\tilde{a}_n}) -  \|\tilde{a}_n\|^2M( \mu_{\Sigma_{X_n}}^\tau)|,\\
 & = & |\lim_{n \to \infty}M(\mu_{\Sigma_{X_n},\tilde{a}_n}) -  \lim_{n \to \infty}\|\tilde{a}_n\|^2M( \mu_{\Sigma_{X_n}}^\tau)|\\ 
 & \stackrel{lemma (3)}{=} &   |\lim_{n \to \infty}M(\mu_{\Sigma_{X_n},a_n}) + \lim_{n \to \infty}M(\mu_{\Sigma_{X_n},c\Sigma_{X_n}^{-1}b_n}) -  \lim_{n \to \infty}\|\tilde{a}_n\|^2M( \mu_{\Sigma_{X_n}}^\tau)|,
 \end{eqnarray*}
 where the ``lemma(3)'' refers to the fact that lemma 3 is used here.  Then we proceed to refine $\mathcal{D}(\tilde{a}_n )$.
  \begin{eqnarray*}
  \lim_{n \to \infty} \mathcal{D}(\tilde{a}_n) & = &   |\lim_{n \to \infty}M(\mu_{\Sigma_{X_n},a_n}) + \lim_{n \to \infty}M(\mu_{\Sigma_{X_n},c\Sigma_{X_n}^{-1}b_n}) -  \lim_{n \to \infty}\|\tilde{a}_n\|^2M( \mu_{\Sigma_{X_n}}^\tau)|, \\
  & \stackrel{lemma (5)}{=} &  |\lim_{n \to \infty}M(\mu_{\Sigma_{X_n},c\Sigma_{X_n}^{-1}b_n}) + \lim_{n \to \infty}\|a_n\|^2M(\mu_{\Sigma_{X_n}}^\tau) -  \lim_{n \to \infty}\|\tilde{a}_n\|^2M( \mu_{\Sigma_{X_n}}^\tau)| ,\\
  & = & |\lim_{n \to \infty}M(\mu_{\Sigma_{X_n},c\Sigma_{X_n}^{-1}b_n}) - \lim_{n \to \infty}(\|\tilde{a}_n\|^2 - \|a_n\|^2)M( \mu_{\Sigma_{X_n}}^\tau)|,\\
   & = & |\lim_{n \to \infty}M(\mu_{\Sigma_{X_n},c\Sigma_{X_n}^{-1}b_n}) - \lim_{n \to \infty}(\|\tilde{a}_n\|^2 - \|a_n\|^2)\lim_{n \to \infty}M( \mu_{\Sigma_{X_n}}^\tau)|,\\
  & \stackrel{lemma (2)}{=} &  |\lim_{n \to \infty}M(\mu_{\Sigma_{X_n},c\Sigma_{X_n}^{-1}b_n}) - \lim_{n \to \infty} (\|a_n\|^2 + \|c\Sigma_{X_n}^{-1}b_n\|^2-\|a_n\|^2 )\lim_{n \to \infty}M( \mu_{\Sigma_{X_n}}^\tau)|, \\
  & = & | \lim_{n \to \infty}M(\mu_{\Sigma_{X_n},c\Sigma_{X_n}^{-1}b_n}) -  \lim_{n \to \infty}\|c\Sigma_{X_n}^{-1}b_n\|^2 \lim_{n \to \infty}M( \mu_{\Sigma_{X_n}}^\tau)|.
  \end{eqnarray*}
 Recall the equation 
 $$ \Sigma_{X_n} = \Sigma_{E_n} + b_nb_n^T.$$
 Using definition 7, we have 
 \begin{eqnarray*}
M(\mu_{\Sigma_{X_n},c\Sigma_{X_n}^{-1}b_n}) & = & (c\Sigma_{X_n}^{-1}b_n)^T \Sigma_{X_n} c\Sigma_{X_n}^{-1}b_n ,\\
& = & c^2 b_n^T\Sigma_{X_n}^{-1}b_n ,\\
& = & c^2 b_n^T(\Sigma_{E_n} + b_nb_n^T)^{-1}b_n, \\
&  \stackrel{lemma (6)}{=}& c^2 b_n^T(\Sigma_{E_n}^{-1} - \frac{\Sigma_{E_n}^{-1}b_nb_n^T\Sigma_{E_n}^{-1}}{1+b_n^T\Sigma_{E_n}^{-1}b_n})b_n\\
& = & c^2\frac{b_n^T\Sigma_{E_n}^{-1}b_n}{1+b_n^T\Sigma_{E_n}^{-1}b_n}, \\
& = & c^2\frac{M(\mu_{\Sigma_{E_n}^{-1},b_n}) }{1+M(\mu_{\Sigma_{E_n}^{-1},b_n})}.
 \end{eqnarray*}
 Now we analyze the second term.  Apply lemma 6 for expressing inverse  again. 
 \begin{eqnarray*}
(\Sigma_{E_n}+b_nb_n^T)^{-1}b_n  & = & \Sigma_{E_n}^{-1}b_n - \frac{\Sigma_{E_n}^{-1}b_nb_n^T\Sigma_{E_n}^{-1}b_n}{1+b_n^T\Sigma_{E_n}^{-1}b_n} = \frac{\Sigma_{E_n}^{-1}b_n}{1+b_n^T\Sigma_{E_n}^{-1}b_n} , \\
\|c\Sigma_{X_n}^{-1}b_n\|^2 & = & c^2\frac{b_n^T\Sigma_{E_n}^{-2}b_n}{(1+b_n^T\Sigma_{E_n}^{-1}b_n)^2} = c^2 \frac{M(\mu_{\Sigma_{E_n}^{-2},b_n})}{(1+M(\mu_{\Sigma_{E_n}^{-1},b_n}))^2}.
 \end{eqnarray*}
 Then we can get the final form of the deviation measurement as
 \begin{eqnarray}
   \lim_{n \to \infty} \mathcal{D}(\tilde{a}_n) & = &  |\lim_{n \to \infty}M(\mu_{\Sigma_{X_n},c\Sigma_{X_n}^{-1}b_n}) - \lim_{n \to \infty}\|c\Sigma_{X_n}^{-1}b_n\|^2 \lim_{n \to \infty}M( \mu_{\Sigma_{X_n}}^\tau)|,\\\nonumber
   & = & | c^2\lim_{n \to \infty}\frac{M(\mu_{\Sigma_{E_n}^{-1},b_n}) }{1+M(\mu_{\Sigma_{E_n}^{-1},b_n})} - c^2\lim_{n \to \infty}\frac{M(\mu_{\Sigma_{E_n}^{-2},b_n})}{(1+M(\mu_{\Sigma_{E_n}^{-1},b_n}))^2}\lim_{n \to \infty}M(\mu_{\Sigma_{X_n}}^\tau) | ,\\
   & = & c^2 \lim_{n \to \infty} |  \frac{M(\mu_{\Sigma_{E_n}^{-1},b_n}) + M(\mu_{\Sigma_{E_n}^{-1},b_n})^2 - M(\mu_{\Sigma_{E_n}^{-2},b_n})M(\mu_{\Sigma_{X_n}}^\tau)}{(1+M(\mu_{\Sigma_{E_n}^{-1},b_n}))^2} |.
     \end{eqnarray}
     Using lemma 5, we get
     \begin{eqnarray*}
   \lim_{n \to \infty} M(\mu_{\Sigma_{E_n}^{-1},b_n})  & =&  r_b^2 M(\mu_{-1 }^\infty) ,\\
     \lim_{n \to \infty} M(\mu_{\Sigma_{E_n}^{-1},b_n})^2 & =&  r_b^4 M(\mu_{-1 }^\infty)^2 ,\\
    \lim_{n \to \infty} M(\mu_{\Sigma_{E_n}^{-2},b_n})M(\mu_{\Sigma_{X_n}}^\tau) &= &r_b^2M(\mu_{-2 }^\infty)(M(\mu_{1 }^\infty) + \tau_{\infty}(r_b^2)).
     \end{eqnarray*}
     Finally we get
   $$   \mathcal{D}(\tilde{a}_\infty) =  \frac{c^2r_b^2}{(1+r_b^2 M(\mu_{-1 }^\infty))^2}|M(\mu_{-1 }^\infty) - M(\mu_{-2 }^\infty)M(\mu_{1 }^\infty)+r_b^2M(\mu_{-1 }^\infty)^2-\tau_{\infty}(r_b^2)M(\mu_{-2}^\infty)|.$$

  \end{proof}

    \begin{remark}
    We put down some discussions of the proof.
    \begin{enumerate}
    \item  In the derivation process, we  use the fact multiple times that the limit of summation and multiplications can be calculated separately. This, known as algebraic limit theorem, applies because we assume that all limits exist by assumption 2 and 3. 
    \item  One can see from the (36) that, the   $M(\mu_{\Sigma_{X_n},a_n})$ coincides with $\|a_n\|^2M(\mu_{\Sigma_{X_n}}^\tau)$ and they play no roles in the final asymptotic form of the deviation.  The first moment deviation is determined by: how the first moment of spectral measure induced by confounding part differs from that of a uniform reference measure. 
    \end{enumerate}
    
    \end{remark}
     Now we already have its asymptotic value. Since in causal cases it is 0,  the condition that the confounder is not identifiable by this method is  that the deviation in confounding case is still 0 here. From (35), the deviation measurement heavily depends on the asymptotic spectral measures of the noise. In the next section, we would study it thoroughly.

  \subsection{Identifiability of confounding}  
  In this section, we study the identifiability of the confounder using our method. The non-identifiable models are those with $\mathcal{D}(\tilde{a}_\infty )=0 $. It is related to the eigenvalues of the covariance matrix of the noise. To start with, we consider if one can determine the sign of  the absolute part. We also assume the eigenvalues of covariance matrix of the noise are $\infty > \sigma_1 \geq \sigma_2 \geq ... \geq \sigma_{\infty} $. 
  \subsubsection*{Impossibility of universal identifiability}
  One ideal situation is that we can universally determine the sign of the absolute part regardless of the model parameters. In this way, $\mathcal{D}(\tilde{a}_\infty )$ might be always non-zero. We then show that this is not possible.  We consider the following decomposition.
  $$   \mathcal{D}(\tilde{a}_\infty) =  \frac{c^2r_b^2}{(1+r_b^2 M(\mu_{-1 }^\infty))^2}|\underbrace{M(\mu_{-1 }^\infty) - M(\mu_{-2 }^\infty)M(\mu_{1 }^\infty)}_{\text{part 1}}+\underbrace{r_b^2M(\mu_{-1 }^\infty)^2-\tau_{\infty}(r_b^2)M(\mu_{-2}^\infty)}_{\text{part 2}}|.$$
   Now we analyze the properties of the two parts.
       \begin{enumerate}
       \item $ M(\mu_{-1 }^\infty) - M(\mu_{-2 }^\infty)M(\mu_{1 }^\infty) \leq 0$. This is because
       \begin{eqnarray*}
     M(\mu_{\Sigma_{E_n}^{-1 }}^{\tau}) & = &  \tau_n(\Sigma_{E_n}^{-1}),\\
        & = & \frac{1}{n}\sum_{i=1}^{n} \sigma_i^{-1},\\
       & = & \frac{1}{n}\sum_{i=1}^{n} \sigma_i^{-2} \sigma_i, \\
       & \leq & \frac{1}{n}\sum_{i=1}^{n} \sigma_i^{-2} \frac{1}{n}\sum_{i=1}^{n} \sigma_i,\\
       & = &\tau_n(\Sigma_{E_n}^{-2})\tau_n(\Sigma_{E_n}),\\
       & = & M(\mu_{\Sigma_{E_n}^{-2 }}^{\tau})M(\mu_{\Sigma_{E_n}}^{\tau}),
          \end{eqnarray*}
          by Chebyshev's sum inequality, and then we have
          $$ M(\mu_{-1 }^\infty) = \lim_{n \to \infty} M(\mu_{\Sigma_{E_n}^{-1 }}^{\tau}) \leq \lim_{n \to \infty} M(\mu_{\Sigma_{E_n}^{-2 }}^{\tau})M(\mu_{\Sigma_{E_n}}^{\tau}) = M(\mu_{-2 }^\infty)M(\mu_{1 }^\infty)$$
          by order limit theorem.
          \item $ r_b^2M(\mu_{-1 }^\infty)^2-\tau_{\infty}(r_b^2)M(\mu_{-2}^\infty) \geq 0$. This is because
          \begin{eqnarray*}
          M(\mu_{\Sigma_{E_n}^{-1 }}^{\tau})^2 & = & \tau_n(\Sigma_{E_n}^{-1})^2\\
  & = &  (\frac{1}{n}\sum_{i=1}^{n} \sigma_i^{-1})^2,\\
               & = &  \frac{1}{n^2}(\sum_{i=1}^{n} \sigma_i^{-1})^2, \\
               & > &  \frac{1}{n^2} \sum_{i=1}^{n} \sigma_i^{-2}, \\
               & = &  \frac{1}{n}\tau_n(\Sigma_{E_n}^{-2}), \\
               & = & \frac{1}{n} M(\mu_{\Sigma_{E_n}^{-2 }}^{\tau}).
                  \end{eqnarray*}
       \end{enumerate}
      We have
      $$ M(\mu_{-1 }^\infty)^2 = \lim_{n \to \infty} M(\mu_{\Sigma_{E_n}^{-1 }}^\tau )^2 \geq  \lim_{n \to \infty} \frac{1}{n} M(\mu_{\Sigma_{E_n}^{-2 }}^{\tau}) =  \tau_{\infty}(1)M(\mu_{-2}^\infty) $$ 
   Thus, it is not possible to give an ``always identifiable'' conclusion, since summation of the two parts could be either bigger or smaller than 0, depending on $r_b$. This means we could possibly meet non-identifiable models.   In the next section, we would study the non-identifiable conditions. 
  \subsubsection*{General non-identifiable condition}
  Without any assumptions on the distribution of the eigenvalues, we here give a general non-identifiable condition. The confounder is not identifiable for models with $\mathcal{D}(\tilde{a}_\infty )$ being 0. This would lead to
  $$\frac{ M(\mu_{-1 }^\infty)}{1+r_b^2 M(\mu_{-1 }^\infty)} - \frac{ M(\mu_{-2 }^\infty) M(\mu_{1 }^\infty)}{(1+r_b^2 M(\mu_{-1 }^\infty))^2}-\frac{\tau_{\infty}(r_b^2)M(\mu_{-2}^\infty)}{(1+r_b^2 M(\mu_{-1 }^\infty))^2} = 0. $$
  Assuming boundness of the tracial moments as
  $$M(\mu_{-1 }^\infty) < \infty, $$
    we can proceed to analyze the equation as
  $$ M(\mu_{-1 }^\infty) - M(\mu_{-2 }^\infty) M(\mu_{1 }^\infty) +r_b^2M(\mu_{-1 }^\infty) ^2-\tau_{\infty}(r_b^2)M(\mu_{-2}^\infty)=0.$$
  We then express the $r_b$ using the quantity related to the moments.
  \begin{eqnarray}\nonumber
        r_b^2(M(\mu_{-1 }^\infty) ^2 -\tau_{\infty}(1)M(\mu_{-2}^\infty) ) & = & M(\mu_{-2}^\infty)M(\mu_{1}^\infty)-M(\mu_{-1}^\infty),\\
      r_b^2  & = & \frac{ M(\mu_{-2}^\infty)M(\mu_{1}^\infty)-M(\mu_{-1}^\infty) }{M(\mu_{-1 }^\infty) ^2 -\tau_{\infty}(1)M(\mu_{-2}^\infty)} .
       \end{eqnarray}
   If we then assume boundness of another tracial moments as
   $$M(\mu_{-2 }^\infty) < \infty, $$
   it can be further  refined as
   \begin{equation}
   r_b^2   =  \frac{ M(\mu_{-2}^\infty)M(\mu_{1}^\infty)-M(\mu_{-1}^\infty) }{M(\mu_{-1 }^\infty) ^2} = \frac{ M(\mu_{-2}^\infty)M(\mu_{1}^\infty)}{M(\mu_{-1 }^\infty)^2} - \frac{1}{M(\mu_{-1 }^\infty)}.
   \end{equation}  
    Since $r_b$ is the radius of the vector describing the confounding effect, and $M(\mu_i^\infty),(i\in \{-2,-1,1\})$ is the asymptotic moment of the tracial spectral measure of the noise, they are generated independently in nature. Thus, one should not expect that the  $r_b$ can be described by the moment of noise, in such a sophisticated way. This condition may be rarely satisfied, unless  one encounters very special models.   
     
     The above condition is a general one without any assumptions on the eigenvalue distribution of $\Sigma_{E_n}$. One may also be interested in the $\mathcal{D}(\tilde{a}_n ) $ when the eigenvalues follow some typical distributions.  We then consider 3  typical distributions of eigenvalues: constant, polynomial decay and exponential decay.  
    \subsubsection*{Eigenvalues are constant}
    We first consider  the eigenvalues being constant $\sigma_1$. It covers some special cases like  $\Sigma_{E_n} = \sigma_1 I_n$  ($I_n$ is the $n \times n$ identity matrix). It is obvious that in this case all moments are bounded, and we can check whether the general non-identifiable condition could be satisfied. 
    $$  r_b^2   =  \frac{ M(\mu_{-2}^\infty)M(\mu_{1}^\infty)}{M(\mu_{-1 }^\infty)^2} - \frac{1}{M(\mu_{-1 }^\infty)} =  \frac{1}{\sigma_1^{-1}}  - \frac{1}{\sigma_1^{-1}}= 0.$$
    The non-identifiable condition is that the $r_b$ is 0 here. It is never satisfied in confounding cases, where $r_b > 0$. Another way is to directly compute the $\mathcal{D}(\tilde{a}_n )$ when $n$ approaches infinity.
\begin{eqnarray*}
   \lim_{n \to \infty} \mathcal{D}(\tilde{a}_n ) & = & c^2r_b^2|\frac{ M(\mu_{-1 }^\infty)}{1+r_b^2 M(\mu_{-1 }^\infty)} - \frac{ M(\mu_{-2 }^\infty) M(\mu_{1 }^\infty)}{(1+r_b^2 M(\mu_{-1 }^\infty))^2}-\frac{\tau_{\infty}(r_b^2)M(\mu_{-2}^\infty)}{(1+r_b^2 M(\mu_{-1 }^\infty))^2}|,\\
    & = & c^2r_b^2|\frac{\sigma_1^{-1}}{1+\sigma_1^{-1}r_b^2}- \frac{\sigma_1^{-1}}{(1+\sigma_1^{-1}r_b^2)^2} - \frac{\sigma_1^{-2}\tau_{\infty}(r_b^2)}{(1+\sigma_1^{-1}r_b^2)^2} |, \\
    & = & c^2r_b^2|\frac{\sigma_1^{-1} + \sigma_1^{-2}r_b^2}{(1+\sigma_1^{-1}r_b^2)^2}- \frac{\sigma_1^{-1}}{(1+\sigma_1^{-1}r_b^2)^2}  |, \\
    & = &  c^2r_b^2\frac{ \sigma_1^{-2}r_b^2}{(1+\sigma_1^{-1}r_b^2)^2}, \\
    & = &  \frac{c^2r_b^4\sigma_1^{-2}}{(1+\sigma_1^{-1}r_b^2)^2}.
      \end{eqnarray*}
     Thus we have
     $$  \mathcal{D}(\tilde{a}_\infty ) =  \frac{c^2r_b^4}{\sigma_1^2+2\sigma_1r_b^2 + r_b^4}.$$
       It is clear that  the confounder is always identifiable here.
      \subsubsection*{Eigenvalues decay polynomially}  
             We study the case where the eigenvalues decay polynomially as
             $$ \sigma_i = \sigma_1 i^{-1}.$$
             We write the traces of the covariance matrices here.
              \begin{eqnarray*}
              M(\mu_{\Sigma_{E_n}}^\tau)=\tau_n(\Sigma_{E_n}) & = & \frac{\sigma_1}{n} \sum_{i=1}^{n}i^{-1} = \frac{\sigma_1}{n}(\ln n + \kappa +  \epsilon(n)).\\
              M(\mu_{\Sigma_{E_n}^{-1}}^\tau)=\tau_n(\Sigma_{E_n}^{-1}) & = & \frac{\sigma_1^{-1}}{n}\sum_{i=1}^{n}i = \frac{\sigma_1^{-1}}{2}(n+1) .\\
        M(\mu_{\Sigma_{E_n}^{-2}}^\tau)= \tau_n(\Sigma_{E_n}^{-2})    & = & \frac{\sigma_1^{-2}}{n}\sum_{i=1}^{n}i^2 = \frac{\sigma_1^{-2}}{6}(n+1)(2n+1).
          \end{eqnarray*}
    Here we write the Harmonic series  as
          \begin{eqnarray*}
         \sum_{i=1}^{n}i^{-1} & = & \ln n + \kappa +  \epsilon(n), \\
          \epsilon(n) & = & O(\frac{1}{n}).
          \end{eqnarray*}
        $\kappa$ is  Euler Mascheroni constant.   Then we proceed to make analysis. 
          \begin{eqnarray*}
  \lim_{n \to \infty} \mathcal{D}(\tilde{a}_n ) & = & c^2r_b^2|\underbrace{\frac{ M(\mu_{-1 }^\infty)}{1+r_b^2 M(\mu_{-1 }^\infty)}}_{\theta_1^\infty} - \underbrace{\frac{ M(\mu_{-2 }^\infty) M(\mu_{1 }^\infty)}{(1+r_b^2 M(\mu_{-1 }^\infty))^2}}_{\theta_2^\infty}-\underbrace{\frac{\tau_{\infty}(r_b^2)M(\mu_{-2}^\infty)}{(1+r_b^2 M(\mu_{-1 }^\infty))^2}}_{\theta_3^\infty}|,
  \end{eqnarray*}
  Here we include 3 thetas for ease of representation. 
\begin{eqnarray*}
\theta_1^\infty   & = & \lim_{n \to \infty}\frac{\tau_n(\Sigma_{E_n}^{-1})}{1+r_b^2\tau_n(\Sigma_{E_n}^{-1})},\\
\theta_2^\infty   & = &      \lim_{n \to \infty}\frac{\tau_n(\Sigma_{E_n}^{-2})\tau_n(\Sigma_{E_n})}{(1+r_b^2\tau_n(\Sigma_{E_n}^{-1}))^2},\\
\theta_3^\infty  & = &  \lim_{n \to \infty}\frac{r_b^2\tau_n(\Sigma_{E_n}^{-2})}{n(1+r_b^2\tau_n(\Sigma_{E_n}^{-1}))^2}.
\end{eqnarray*}
         \begin{eqnarray*}
         \lim_{n \to \infty}\frac{\tau_n(\Sigma_{E_n}^{-2})\tau_n(\Sigma_{E_n})}{\tau_n(\Sigma_{E_n}^{-1})^2} & = & \lim_{n \to \infty} \frac{2\sigma_1(\ln n + \kappa +  \epsilon(n))}{3n}\frac{(n+1)(2n+1)}{(n+1)^2} = 0, \\
          \lim_{n \to \infty} \frac{\tau_n(\Sigma_{E_n}^{-2})}{n\tau_n(\Sigma_{E_n}^{-1})^2}  & = &    \lim_{n \to \infty} \frac{2(n+1)(2n+1)}{3n(n+1)^2} = 0.\\
  \theta_1^\infty    & = &  \frac{1}{r_b^2},\\
                 \theta_2^\infty & = &    \lim_{n \to \infty}\frac{\tau_n(\Sigma_{E_n}^{-2})\tau_n(\Sigma_{E_n})}{(1+r_b^2\tau_n(\Sigma_{E_n}^{-1}))^2}  = 0.\\
        \theta_3^\infty & = &   \lim_{n \to \infty} \frac{r_b^2\tau_n(\Sigma_{E_n}^{-2})}{n(1+r_b^2\tau_n(\Sigma_{E_n}^{-1}))^2} = 0. 
   \end{eqnarray*}       
   Then we have 
   $$  \mathcal{D}(\tilde{a}_\infty )  = c^2r_b^2|\frac{1}{r_b^2} - 0 - 0| = c^2.$$

    So  asymptotically, we still have the condition that the $ \mathcal{D}(\tilde{a}_n )>0$. Thus, we claim that the confounder is  identifiable by our method in the polynomial decay cases.
    \begin{remark} We want to put down some comments here.
    \begin{enumerate}
   \item Since the $\sigma_1$ is assumed to be bounded (a constant, for example), the largest eigenvalue of $\Sigma_{E_n}^{-1}$ would be unbounded in the limit case.  We are still doing analysis based on the convergence results of lemma 5. The justification should come from the postulate 1 in paper \citep{janzing2017detecting} with boundness assumption dropped. Then we could make analysis because we know exactly how the eigenvalue grows, and thus the support of $\mu_{-1}^\infty$. 
   \item Although the moments $M(\mu_{-1}^\infty)$ and $M(\mu_{-2}^\infty)$  do not exist in the limit case, the ratios $\theta_1^\infty$ and $\theta_2^\infty+\theta_3^\infty$ do exist. To understand what they represent, recall the equation (36).
    \begin{eqnarray*}
   r_b^2 \theta_1^\infty & = &   \lim_{n \to \infty}M(\mu_{\Sigma_{X_n},\Sigma_{X_n}^{-1}b_n}), \\
   r_b^2(\theta_2^\infty+\theta_3^\infty) & = &  \lim_{n \to \infty}\|\Sigma_{X_n}^{-1}b_n\|^2 M( \mu_{\Sigma_{X_n}}^\tau).
    \end{eqnarray*}
    Note that we canceled the effect of the scalar $c$ in the equation. Thus, we have
    \begin{enumerate}
    \item  $\theta_1^\infty < \infty$ directly follows the existence of first moment of the spectral measure induced by the confounding part.
    \item $\theta_2^\infty +\theta_3^\infty < \infty$ directly follows the existence of first moment of the normalized tracial spectral measure enlarged by the norm of the confounding part.
    \end{enumerate}
    \end{enumerate}
    \end{remark}
  \subsubsection*{Eigenvalues decay exponentially} 
           We study the case where the eigenvalues decay exponentially as
         $$ \sigma_i = \sigma_1 e^{-(i-1)}.$$
 We analyze the traces here.
   \begin{eqnarray*}
               \tau_n(\Sigma_{E_n}) & = & \frac{\sigma_1}{n} \sum_{i=1}^{n}e^{-(i-1)} = \frac{\sigma_1(1-e^{-n})}{n(1-e^{-1})}.\\
               \tau_n(\Sigma_{E_n}^{-1}) & = & \frac{\sigma_1^{-1}}{n}\sum_{i=1}^{n}e^{(i-1)} = \frac{\sigma_1^{-1}(1-e^{n})}{n(1-e)} .\\
           \tau_n(\Sigma_{E_n}^{-2})    & = & \frac{\sigma_1^{-2}}{n}\sum_{i=1}^{n}e^{(2i-2)}  = \frac{\sigma_1^{-2}(1-e^{2n})}{n(1-e^2)}.
           \end{eqnarray*}
           We then analyze the thetas here.
   \begin{eqnarray*}
   \lim_{n \to \infty} \frac{\tau_n(\Sigma_{E_n}^{-2})\tau_n(\Sigma_{E_n}) }{\tau_n(\Sigma_{E_n}^{-1})^2}
      & = &    \lim_{n \to \infty} \frac{(e^{2n}-1)(e - e^{-n+1})}{(1-e^n)^2(1+e)}\sigma_1 \\
      & = &   \lim_{n \to \infty} \frac{(e^{n}+1)(e - e^{-n+1})}{(e^n-1)(e+1)}\sigma_1 , \\
      & = & \lim_{n \to \infty} \frac{e^{n+1}-e^{-n+1} }{e^{n+1}+e^n - e - 1}\sigma_1\\
      &  = & \frac{\sigma_1}{1 + e^{-1}}\\
      \lim_{n \to \infty} \frac{\tau_n(\Sigma_{E_n}^{-2})}{n\tau_n(\Sigma_{E_n}^{-1})^2}  & = &    \lim_{n \to \infty} \frac{(1+e^n)(1-e)}{(1-e^n)(1+e)} = \frac{e-1}{e+1}.\\
        \theta_1^\infty    & = &  \frac{1}{r_b^2},\\
              \theta_2^\infty & = &   \lim_{n \to \infty} \frac{\tau_n(\Sigma_{E_n}^{-2})\tau_n(\Sigma_{E_n})}{(1+r_b^2\tau_n(\Sigma_{E_n}^{-1}))^2}  = \frac{\sigma_1}{(1 + e^{-1})r_b^4},\\
       \theta_3^\infty & = &   \lim_{n \to \infty} \frac{r_b^2\tau_n(\Sigma_{E_n}^{-2})}{n(1+r_b^2\tau_n(\Sigma_{E_n}^{-1}))^2} = \frac{e-1}{(e+1)r_b^2}. 
                    \end{eqnarray*} 
 Then we have
  $$ \mathcal{D}(\tilde{a}_\infty )  = c^2r_b^2|\frac{1}{r_b^2} - \frac{\sigma_1}{(1 + e^{-1})r_b^4} - \frac{e-1}{(e+1)r_b^2}| = c^2|\frac{2}{e+1} - \frac{e\sigma_1}{( e + 1)r_b^2}|. $$
 If we let it be 0, we get 
 $$r_b^2 = \frac{e\sigma_1}{2}.$$
      Now we have the non-identifiable condition at hand. Clearly the confounder is not always identifiable. When the  $r_b^2$ is $0.5e\sigma_1$, in the confounding case we get an asymptotic 0 of the deviation measurement. However, we do not consider this as a ``often happen'' case: the $b_n$ is independently generated with respect to $\Sigma_{E_n}$, and $b_n$ should rarely be with a norm that aligns with the largest eigenvalue of $\Sigma_{E_n}$, although we do not exclude the possibility of non-identifiable models.
      
       After we study those cases, we get back to the general analysis. In fact, the deviation measurement of the first moment is asymptotically 0 in purely causal cases. It already reaches the lower bound of absolute values. Thus, no matter what the covariance matrix of the noise is, we are still able to claim that the deviation measurement in confounding cases is not less than that in non-confounding cases. The ``not less" condition is, in most of the situations, enough for our method to work. By analyzing the non-identifiable condition, we gain more confidence: when the eigenvalues of covariance matrix of the noise follow some typical distributions, the confounder is always identifiable. In other cases, it is almost identifiable, since the non-identifiable condition is hard to satisfy. In the next section, we would describe the method and discuss the empirical estimations.
             \section{Methodology}
       
              In this section, we present our method.  The practical deviation measurement is based on the data observed from joint distribution of  $(X_n,Y)$. We would now show how to estimate the empirical deviation.  We first summarize our algorithm here.
             \begin{algorithm}[h]
             \KwIn{I.i.d observations of $(Y,X_n)$ where $X_n=(x_1,...,x_n)^T$, threshold $\gamma$; }
            \KwOut{The decision;}
             Compute the empirical covariance matrices $\widehat{\Sigma}_{YX_n}$ and $\widehat{\Sigma}_{X_n}$\;
              Compute $\widehat{\mathcal{D}}(\tilde{a}_n)  = |\widehat{\Sigma}_{YX_n}\widehat{\Sigma}_{X_n}^{-1}\widehat{\Sigma}_{X_nY}  - \|\widehat{\Sigma}_{X_n}^{-1}\widehat{\Sigma}_{X_nY} \|^2\tau_n(\widehat{\Sigma}_{X_n})|$\;
              \uIf{$\widehat{\mathcal{D}}(\tilde{a}_n)\leq \gamma$}{
              Output ``No confounder''\;  }
              \Else{
              Output ``There is a confounder''\;}
               \caption{Confounder detection by measuring first moment deviation}
              \end{algorithm}
              Our method is based on a threshold of the empirical deviations. If it is close enough to 0, we treat it as non-confounding cases; else we report the existence of a confounder.  The inclusion of the threshold $\gamma$ is not only because of the estimation error caused by finite sampling size. Since the justifications are made in infinite dimensions, practical moment coincidence should also be interpreted with a certain loose sense, as approximate coincidence. This is realized by including a threshold $\gamma$ when checking the deviation measurement being ``0 or not''. Here we also claim that our method can consistently estimate the true value of $\mathcal{D}(\tilde{a}_n)$. To understand this, let the sample size be $L$. We rewrite the $\mathcal{D}(\tilde{a}_n)$ using the estimators as
                 \begin{eqnarray}
           \widehat{\mathcal{D}}(\tilde{a}_n)  & = & |\widehat{\Sigma}_{YX_n}\widehat{\Sigma}_{X_n}^{-1}\widehat{\Sigma}_{X_nY}  - \|\widehat{\Sigma}_{X_n}^{-1}\widehat{\Sigma}_{X_nY} \|^2\tau_n(\widehat{\Sigma}_{X_n})|
             \end{eqnarray}
            The hatted symbols are standard estimators of the covariance matrices. The general covariance matrix estimations are known to be  consistent with a rate $L^{-\frac{1}{2}}$. This guarantees the consistency of our estimator,  since accurate estimation of the covariance matrices leads to accurate deviation measurement.

             In the next section, we would conduct various experiments to test the performance of our method.

            \section{Experiments}    
            
            In this section, we would test the proposed confounding detection method in various aspects. Each subtitle describes the focus of respective subsections.  If not specified,  all model related variables are drawn from standard normal distributions (either one or multi dimensional).  We record the  distribution of $\mathcal{D}(\tilde{a}_n)$ (or $\widehat{\mathcal{D}}(\tilde{a}_n)$) based on  200 runs.   $n$ is the dimensionality. Note that $\mathcal{D}(\tilde{a}_n)$ here is the deviation computed using the true model parameters and $\Sigma_{X_n}$, and $\widehat{\mathcal{D}}(\tilde{a}_n)$ is the deviation computed from the samples, as showed in algorithm 1. What we mean ``distribution" later in the figures is the empirical probability of $\mathcal{D}(\tilde{a}_n)$ (or $\widehat{\mathcal{D}}(\tilde{a}_n)$) exceeding certain value, calculated as: number of experiments with $\mathcal{D}(\tilde{a}_n)$ greater or equal to a value divided by the total number of experiments. We here distinguish causal and confounding models of  
             \begin{eqnarray*}
             X_n & = & b_nZ+E_n,\\
             Y & = & a_n^TX_n + cZ+ F,
              \end{eqnarray*}
                 by setting $c=0$ and $c$ being non-zero, instead of changing $b_n$.
            \subsection{First moment deviation and dimensionality}
            All the justifications of the asymptotic forms are done in infinite dimensional models. However, practical data is  with finite dimensions. The moment deviation behavior in practical cases is important.   We first show the empirical first moment concentration  results for randomly generated $a_n$ and $\Sigma_{X_n}$.  For random matrix $\Sigma_{X_n} = \Sigma_{E_n} + b_nb_n^T$, we generate it using this method. First, we are using the same method as described in footnote 3 to generate $\Sigma_{E_n}$.  When sampling the entries of $\Gamma_n$, we use a uniform distribution defined on $[0.5,1]$. Then add $b_nb_n^T$  to get $\Sigma_{X_n}$. Here we consider the non-confounding cases by setting $c=0$ in the model described by (1) and (2), such that 
            $$ \tilde{a}_n = a_n,$$
            The probability for $\mathcal{D}(\tilde{a}_n)$ exceeding certain values are plotted in figure 2. Besides normal coefficients, we add a test generating the coefficients using a uniform distribution on $[-0.5,0.5]$. We also normalize the norm of $a_n$ and $b_n$ to 1. These results show that although the theoretic moment concentration  happens only when dimension is very high, the empirical results are much better.   The first moment of induced spectral measure almost coincides with that of a tracial measure  when we only have 10 dimensions. 
           
              \begin{figure}[h]%
              \centering
              \subfloat[Normal coefficients]{{\includegraphics[width=.5\textwidth,height =.4\textwidth]{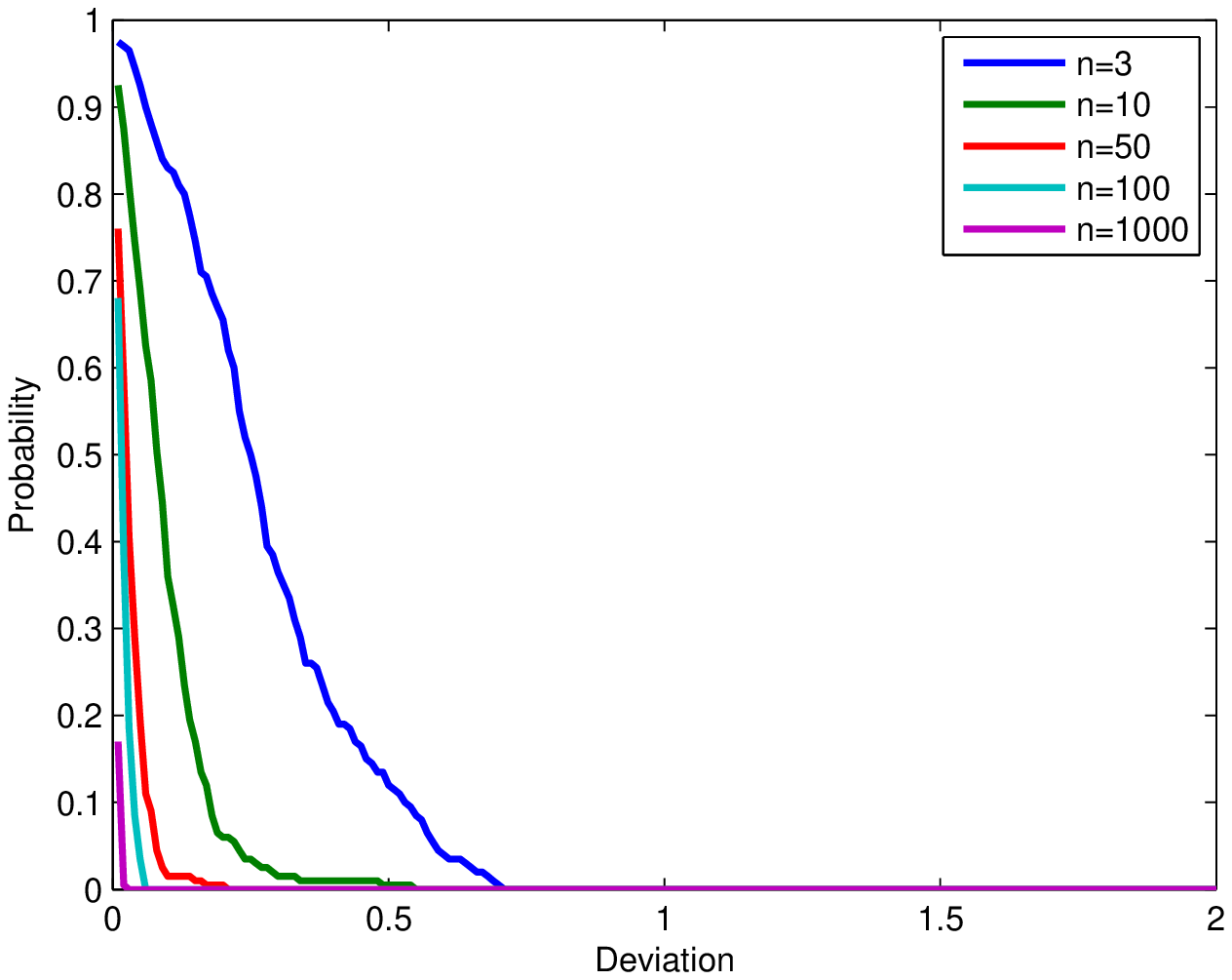} }}
       \subfloat[Uniform coefficients ]{{\includegraphics[width=.5\textwidth,height =.4\textwidth ]{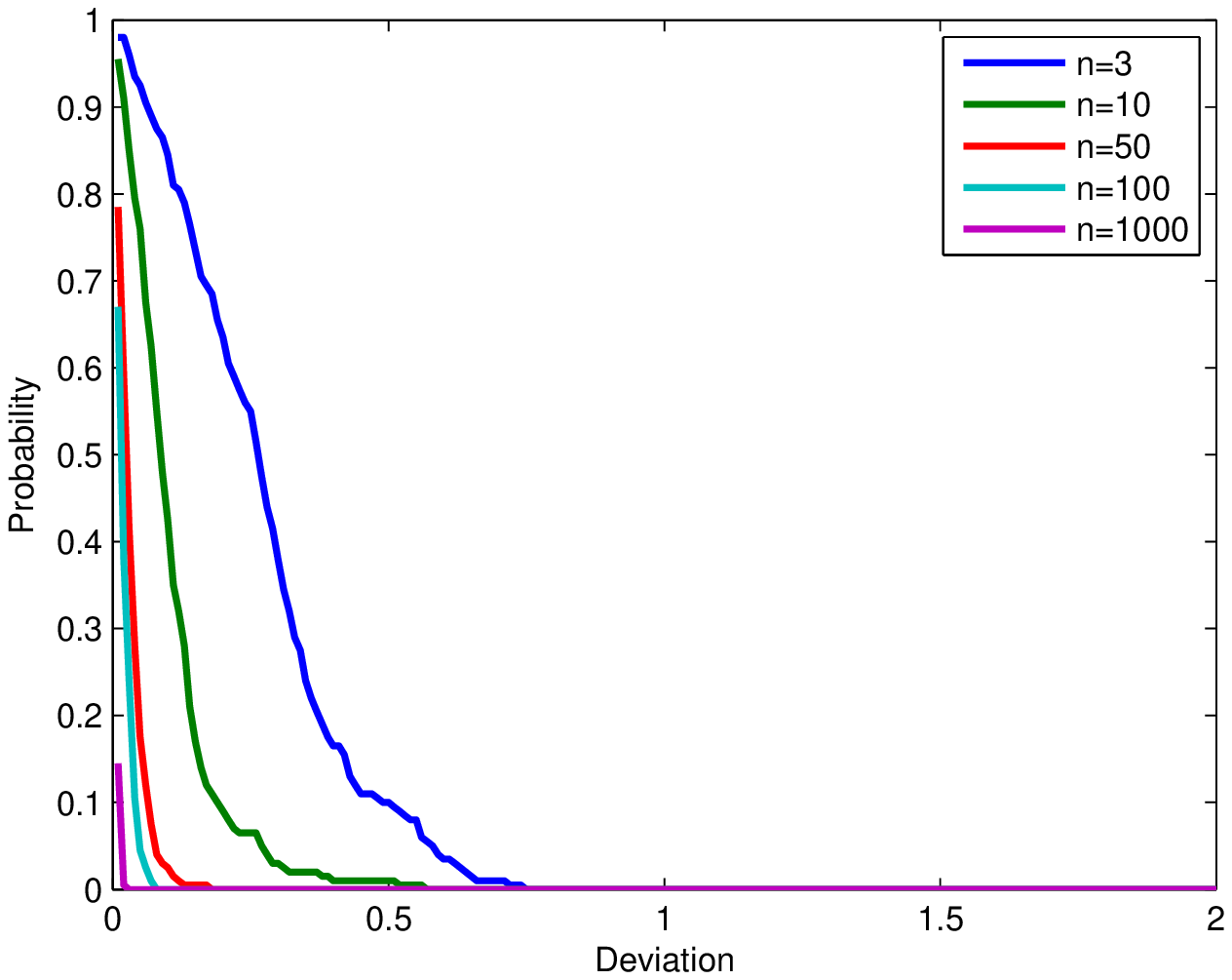} }} 
               \caption{Distribution of  $\mathcal{D}(\tilde{a}_n)$ in causal cases}
               \end{figure}
        We then study the deviations when there is a confounder. In this case $$ \tilde{a}_n = a_n + c(\Sigma_{E_n}+b_nb_n^T)^{-1}b_n$$
        The whole term $\mathcal{D}(\tilde{a}_n)$ is larger than 0 as the previous theoretic results show.  The experimental results are in figure 3.  There are 2 observations that are in consistency with our previous  theoretic analysis.
          \begin{enumerate}
          \item When there is a confounder, $\mathcal{D}(\tilde{a}_n)$ is no longer 0.
          \item The larger the $c$ is, the larger the $\mathcal{D}(\tilde{a}_n)$  tends to be. A clear evidence is that when $c$ is uniform on $[2,3]$ rather than gaussian, the $\mathcal{D}(\tilde{a}_n)$  becomes larger in general. 
           \end{enumerate}
            \begin{figure}[h]%
           \subfloat[Distribution of $\mathcal{D}(\tilde{a}_n)$ ($c$ being normal) ]{{\includegraphics[width=.5\textwidth,height =.4\textwidth]{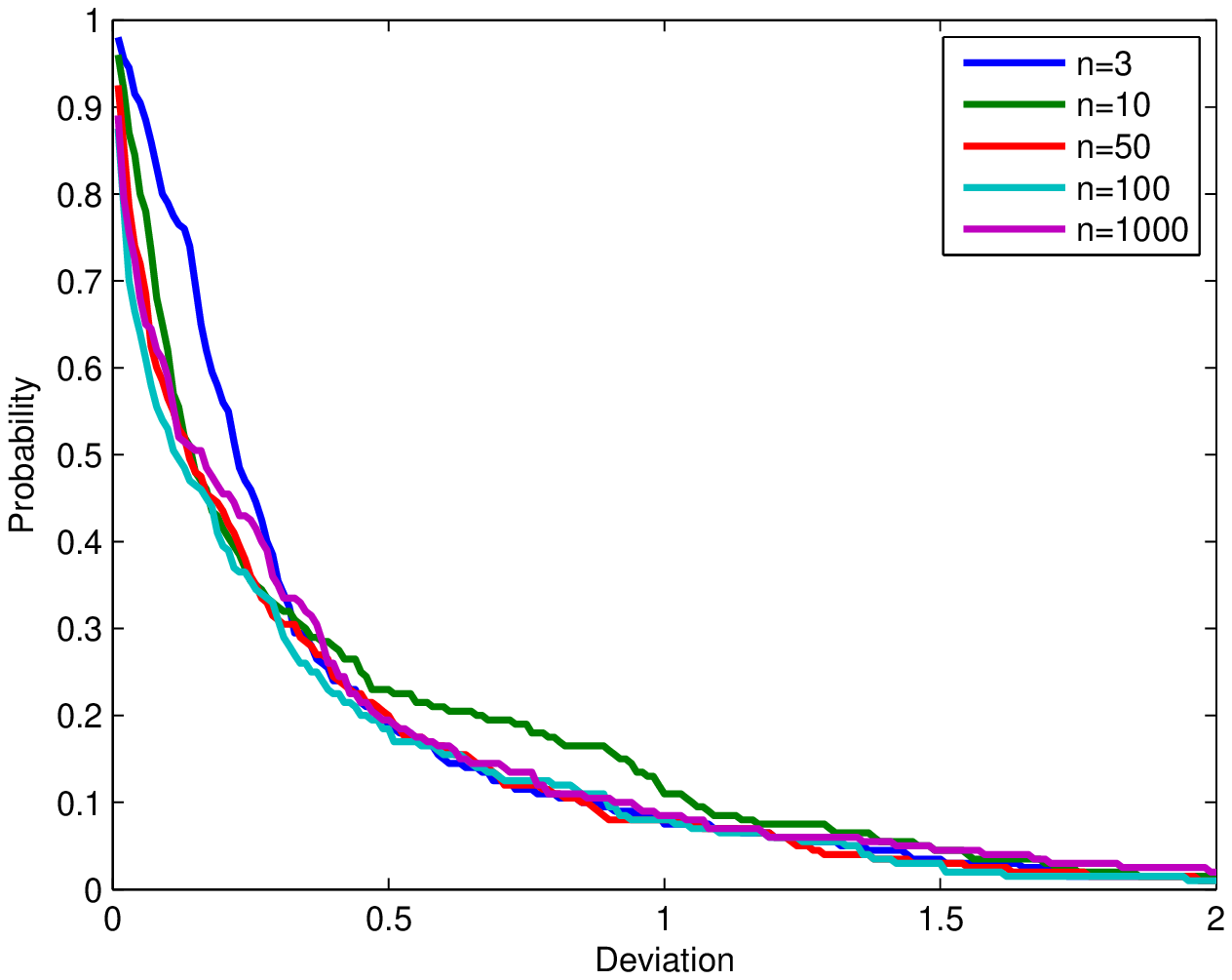} }}
            \subfloat[Distribution of $\mathcal{D}(\tilde{a}_n) ~(c \in {[2, 3]})$]{{\includegraphics[width=.5\textwidth,height =.4\textwidth]{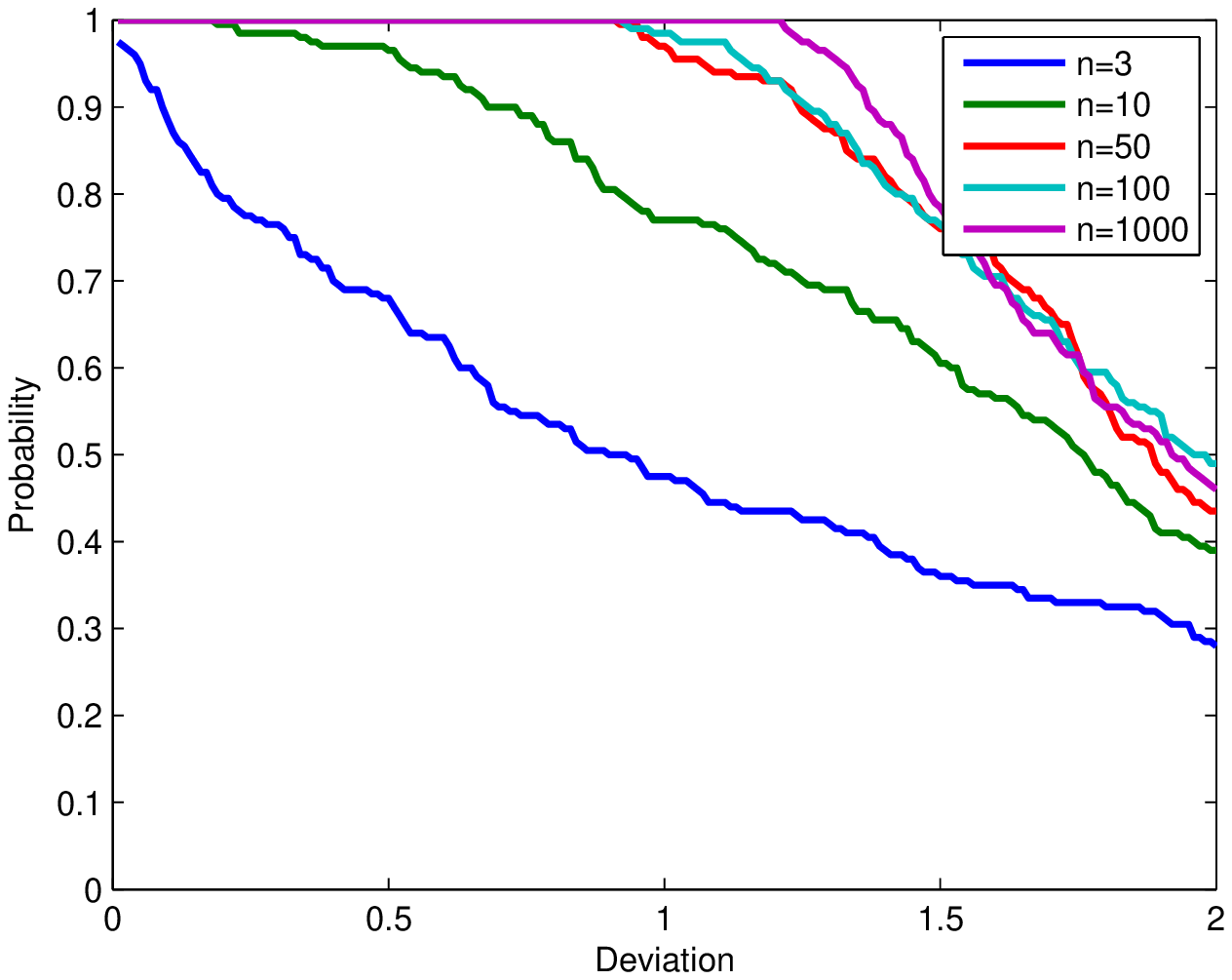} }}  
         \caption{Distribution of $\mathcal{D}(\tilde{a}_n)$ in confounding cases}
              \end{figure}   
   These experiments deliver important messages. It shows that the difference between     $\mathcal{D}(\tilde{a}_n)$ in confounding and non-confounding cases are obvious. In the confounding cases, first moment deviations are clear.  This indicates a behavior difference of the deviation measurement in confounding and non-confounding cases. The next thing is about the estimation from observations. Note that all the experiments here are  using the true model parameters $a_n,b_n,c$ and the true $\Sigma_{X_n}$.     In practice, we can only estimate these from observations. We will study this in the next section.    
   \subsection{Empirical estimations}
   In practice, we only have observational points from the model rather than the true model parameters. What if we estimate all the values using observations? We study the  $\widehat{\mathcal{D}}(\tilde{a}_n)$ estimated from the samples. In this section, we first generate model parameters, and then sample  from the model. Algorithm is applied on the observations. We first study the non-confounding cases where $$\tilde{a}_n = a_n.$$ The plots showing the curves are in figure 4. We also show the curves of $\widehat{\mathcal{D}}(\tilde{a}_n)$ when there is a confounder and  $$ \tilde{a}_n = a_n + c(\Sigma_{E_n}+b_nb_n^T)^{-1}b_n.$$
     The plots showing the curves are in figure 5 ((a), (b), (c) for $c$ being uniformly distributed on $[1,2]$, and (d), (e), (f) for $c$ being uniformly distributed on $[2,3]$).
     \begin{figure}[h]%
     \centering
     \subfloat[True values ]{{\includegraphics[width=.33\textwidth,height=.3\textwidth]{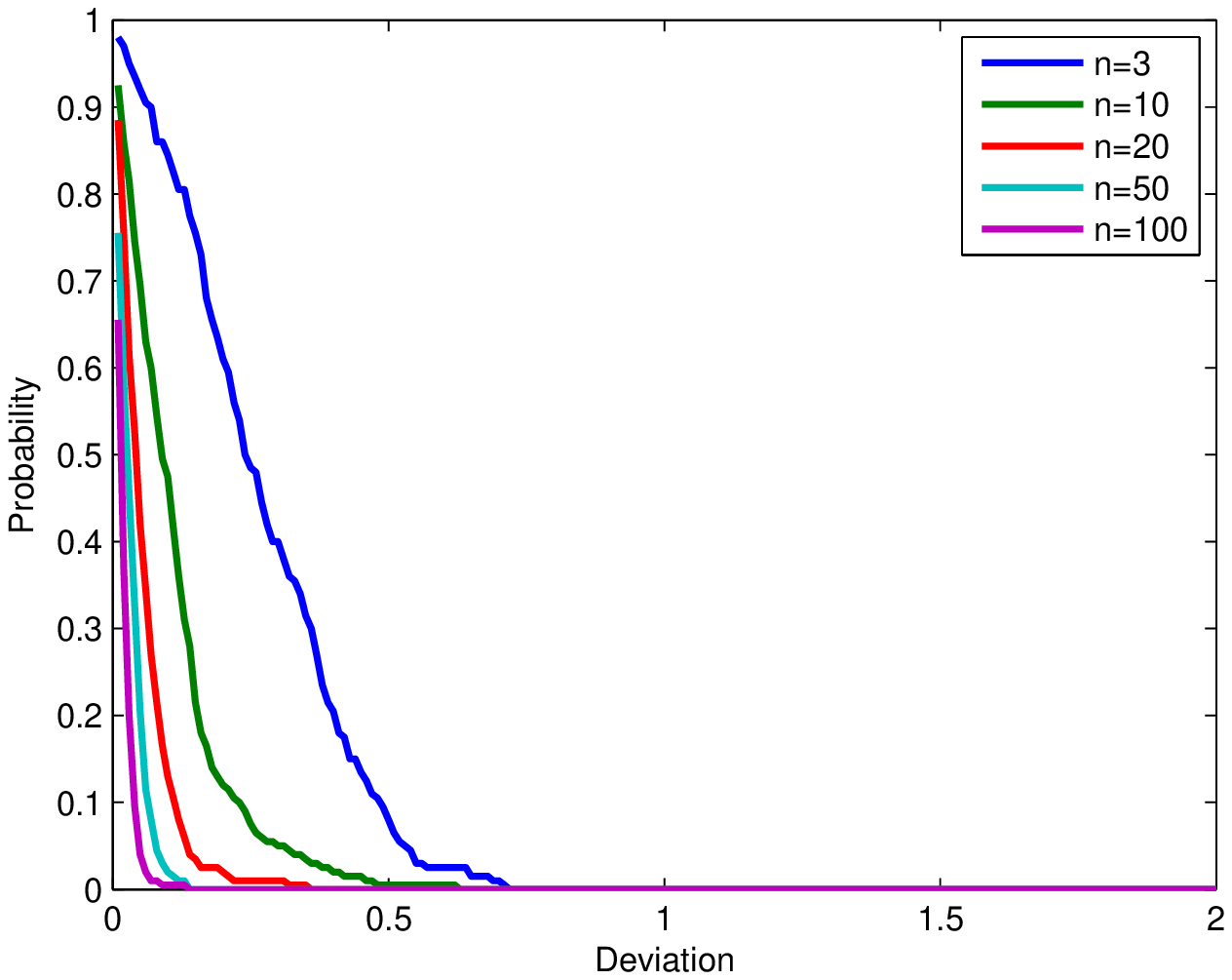} }}
     \subfloat[Sample size = 500 ]{{\includegraphics[width=.33\textwidth,height=.3\textwidth]{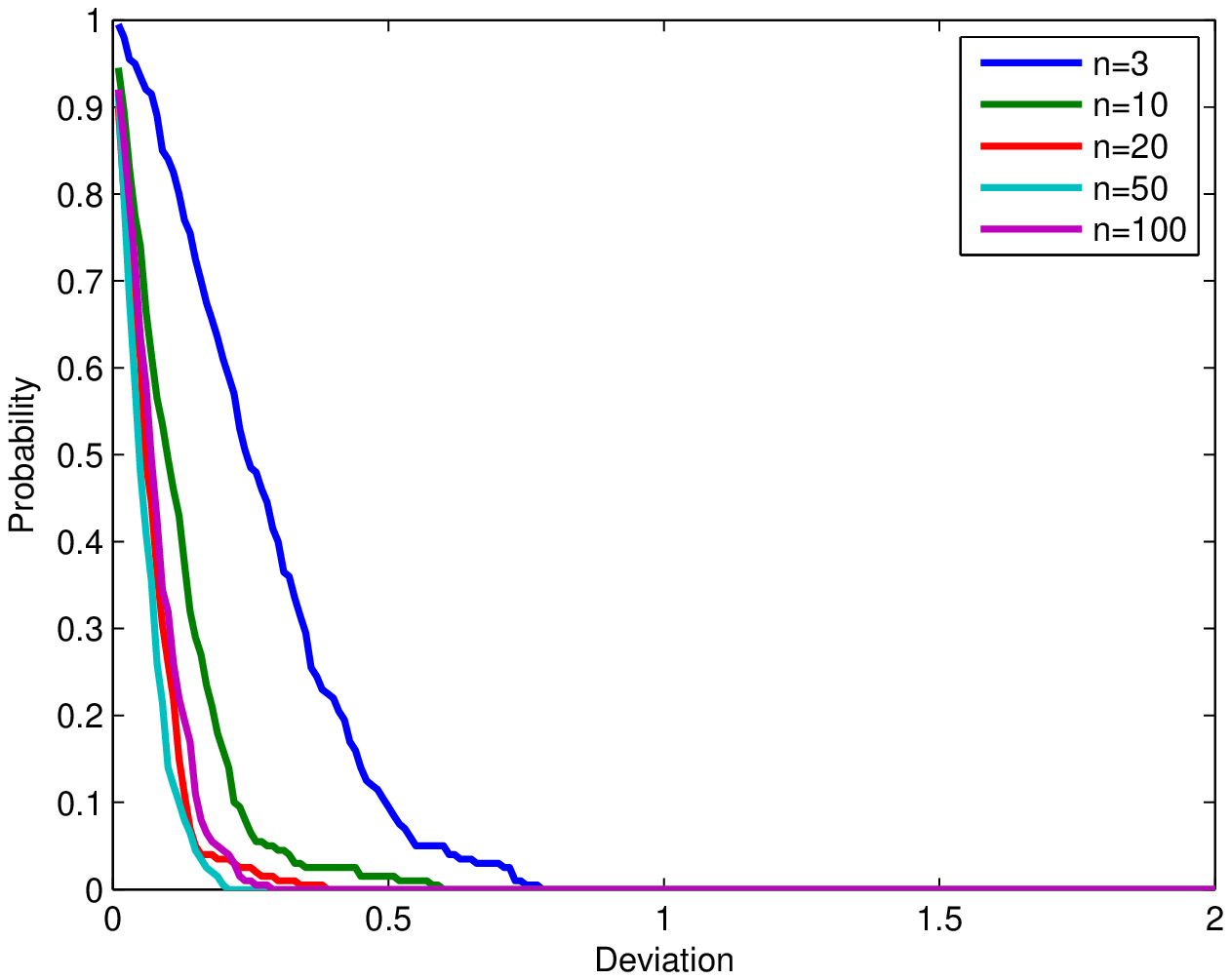} }}  
    \subfloat[Sample size = 5000 ]{{\includegraphics[width=.33\textwidth,height=.3\textwidth]{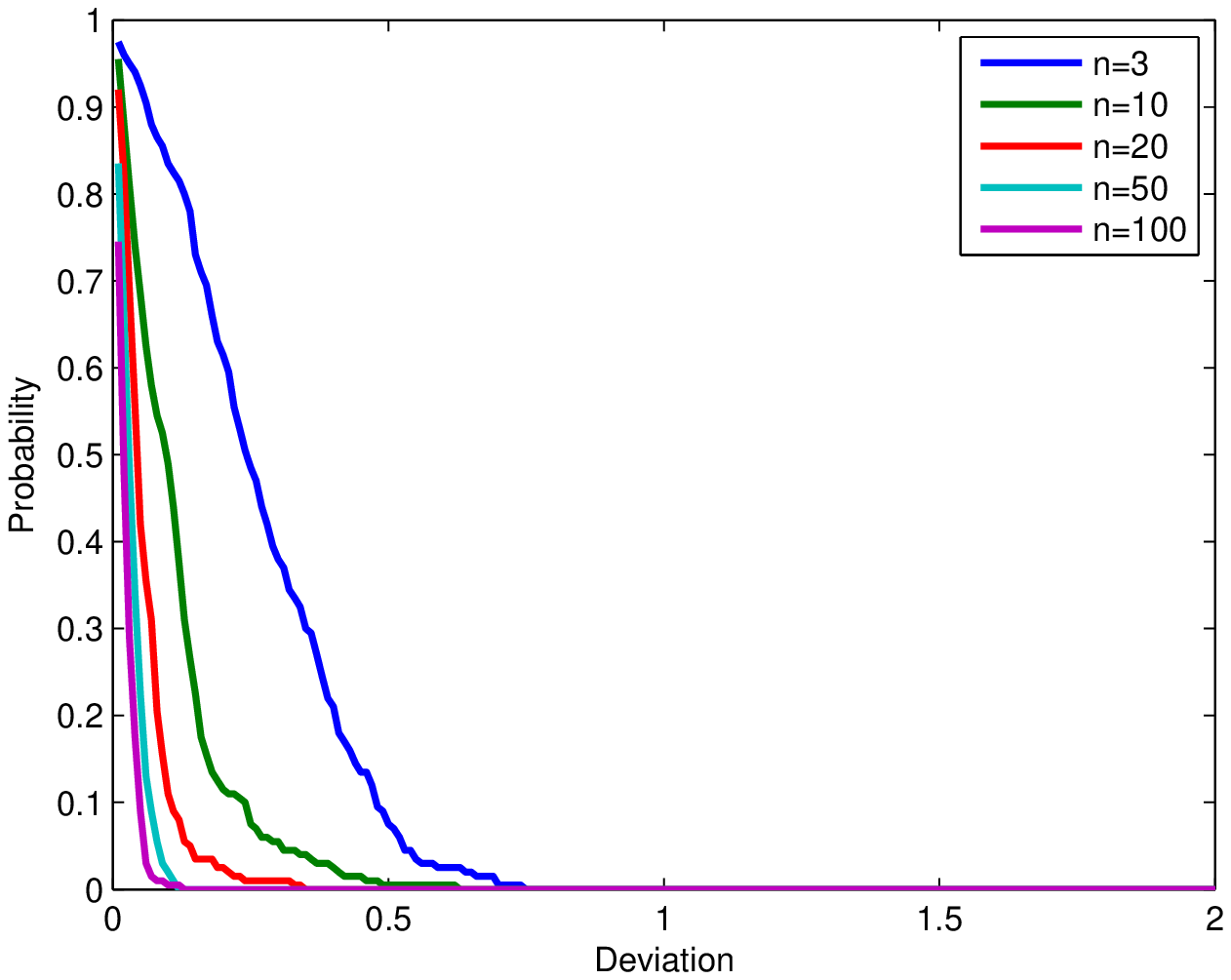} }}
      \caption{Distribution of $\widehat{\mathcal{D}}(\tilde{a}_n)$ in  causal cases}
       \end{figure}  
    \begin{figure}[h]%
       \centering
        \subfloat[True values ]{{\includegraphics[width=.33\textwidth,height=.3\textwidth]{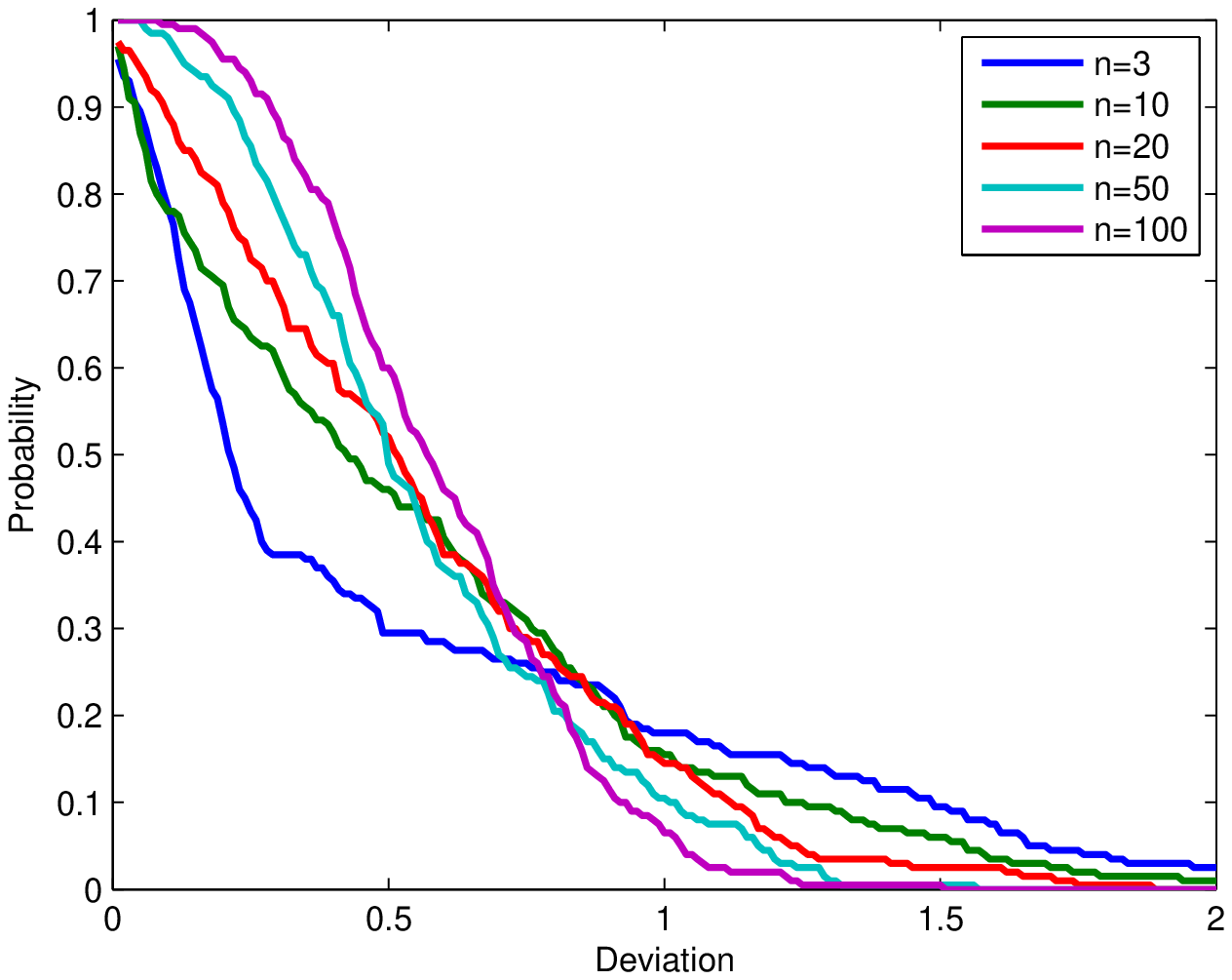} }}
       \subfloat[Sample size = 500 ]{{\includegraphics[width=.33\textwidth,height=.3\textwidth]{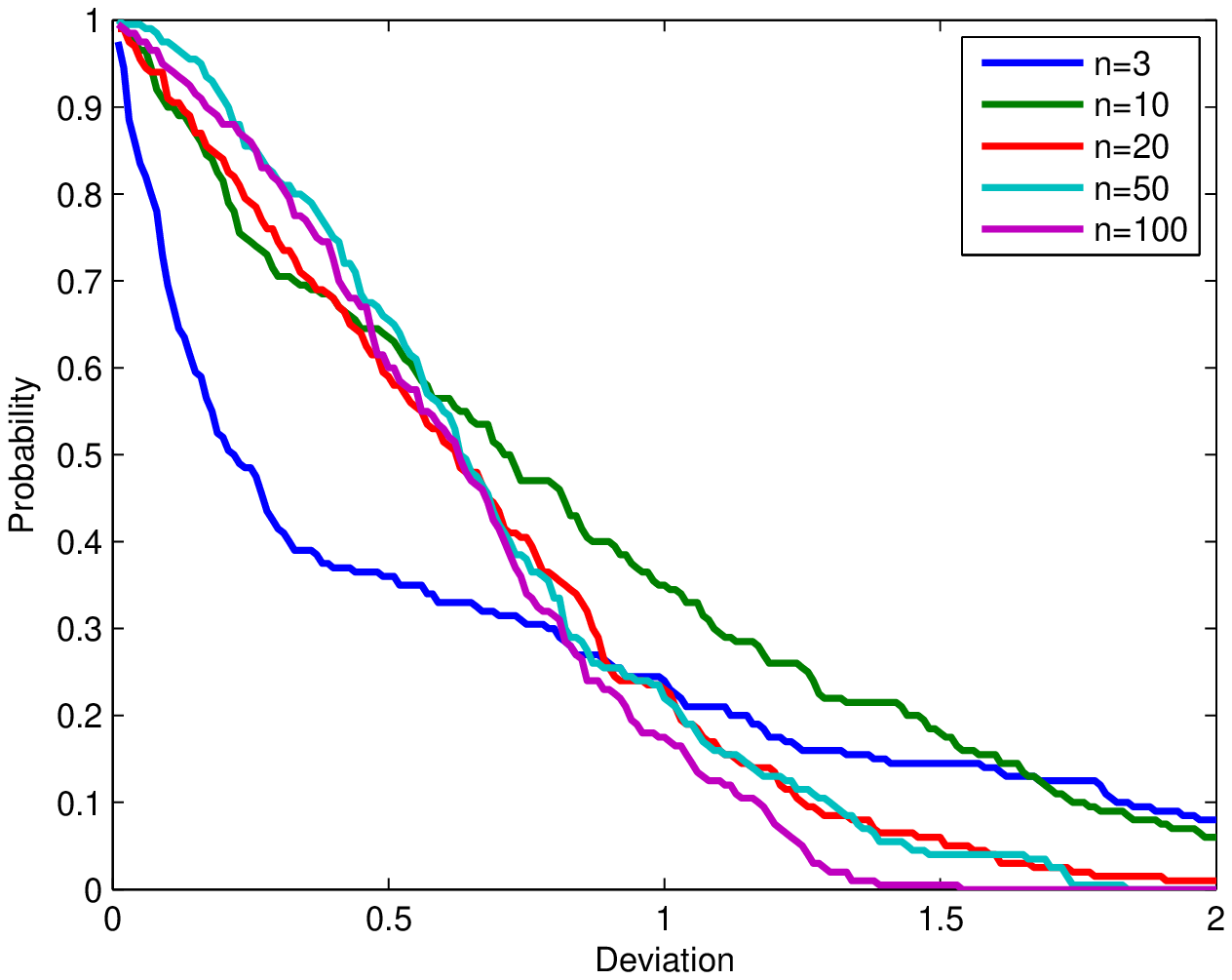} }}  
     \subfloat[Sample size = 5000 ]{{\includegraphics[width=.33\textwidth,height=.3\textwidth]{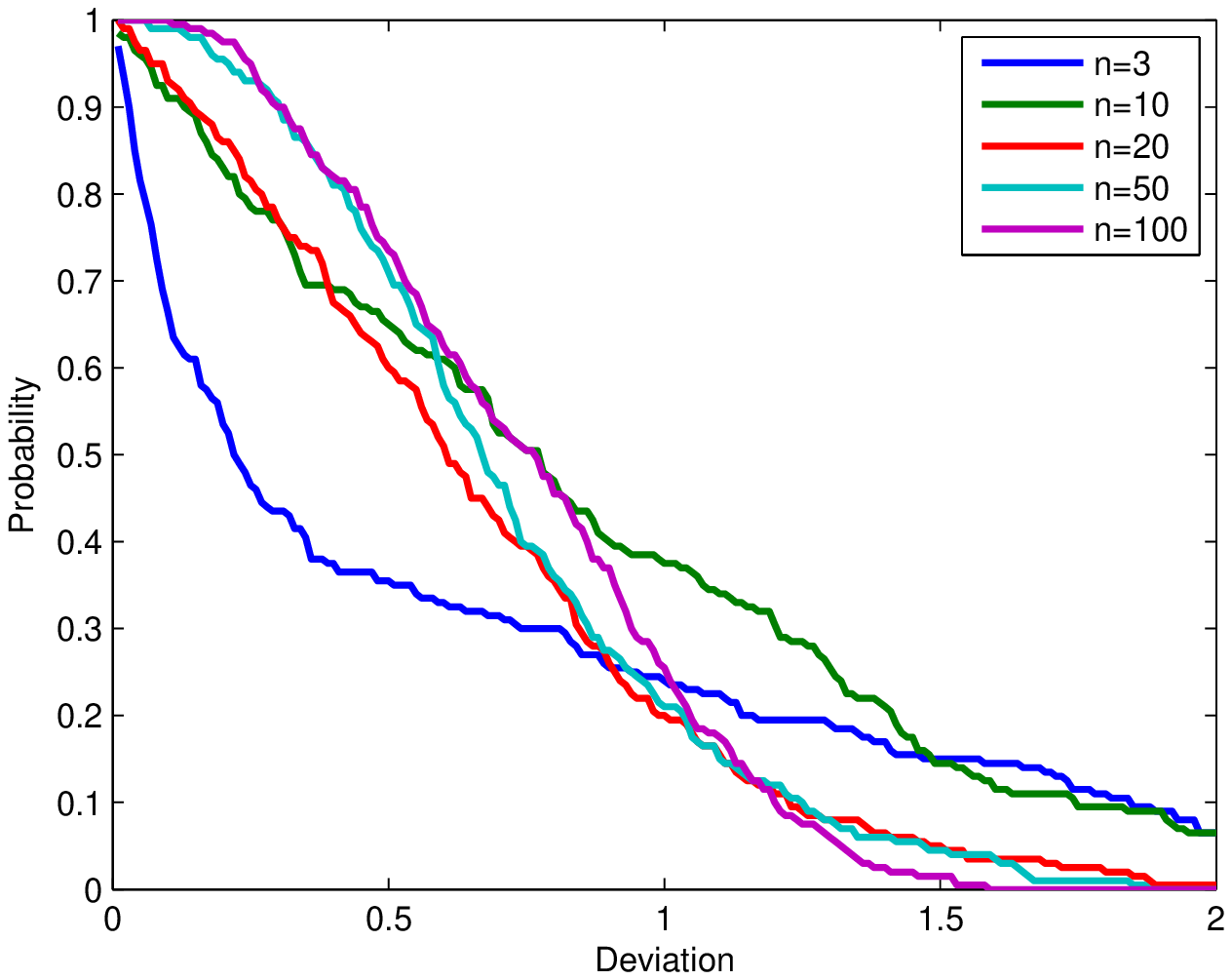} }}\quad
        \subfloat[True values ]{{\includegraphics[width=.33\textwidth,height=.3\textwidth]{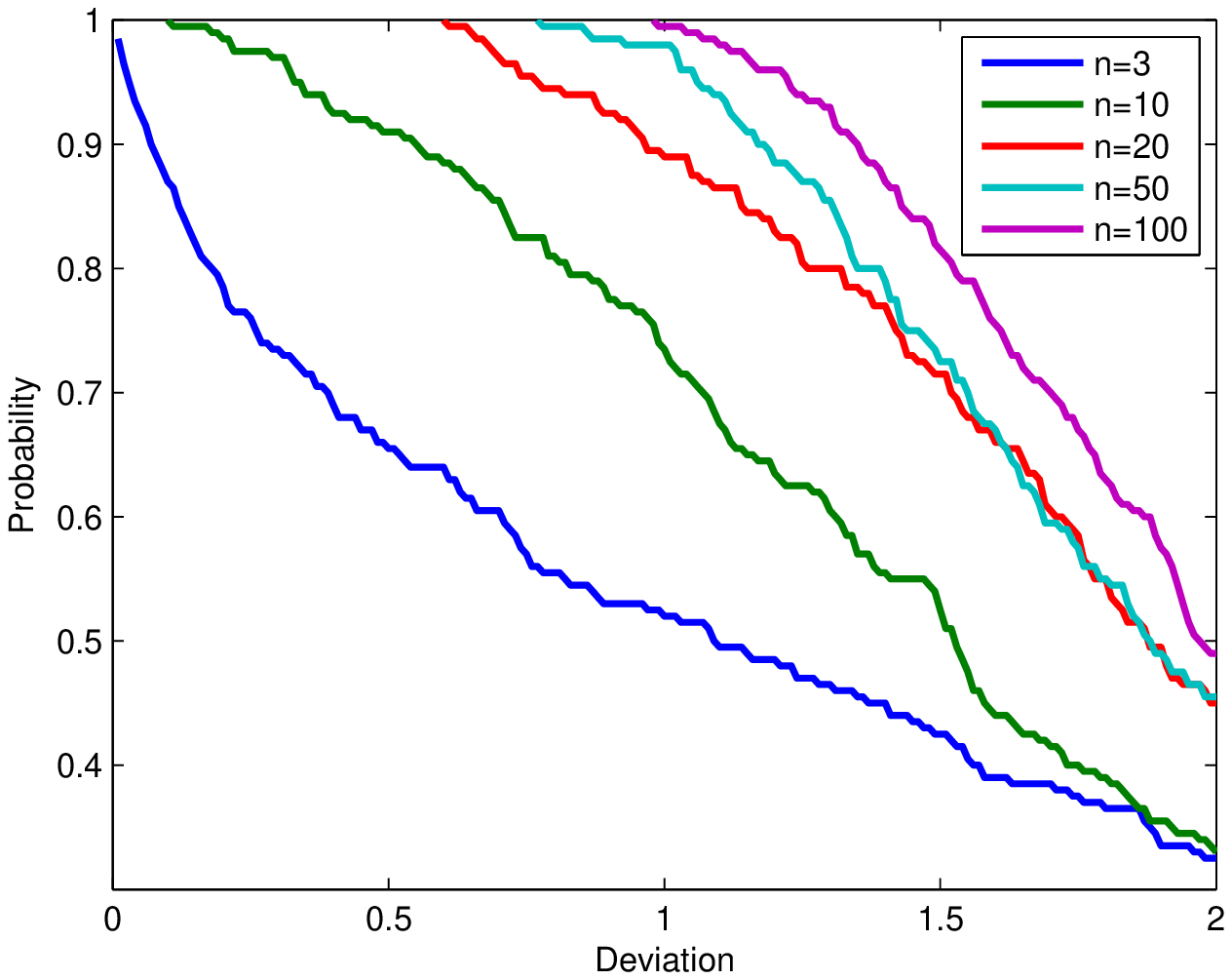} }}
       \subfloat[Sample size = 500 ]{{\includegraphics[width=.33\textwidth,height=.3\textwidth]{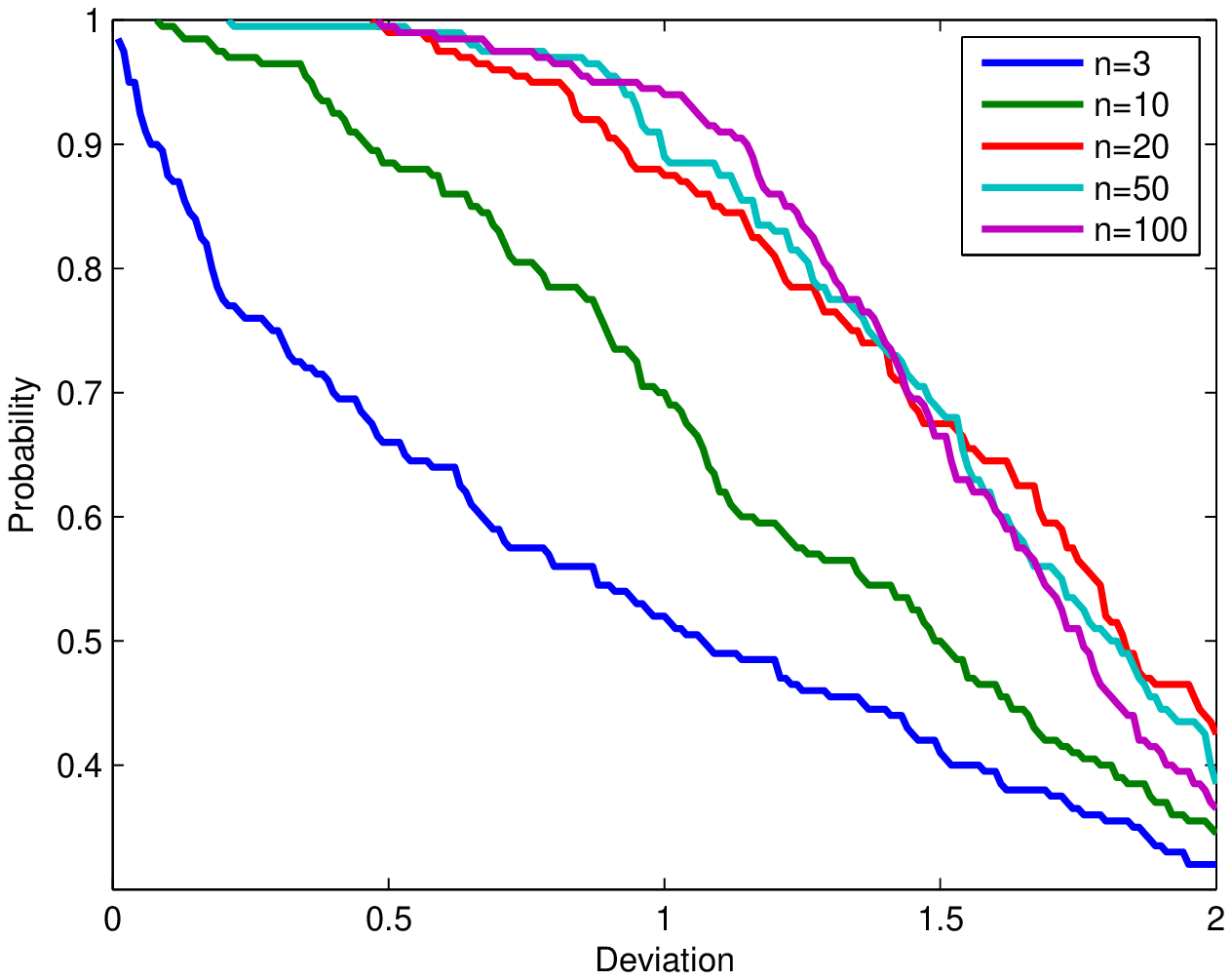} }}  
       \subfloat[Sample size = 5000 ]{{\includegraphics[width=.33\textwidth,height=.3\textwidth]{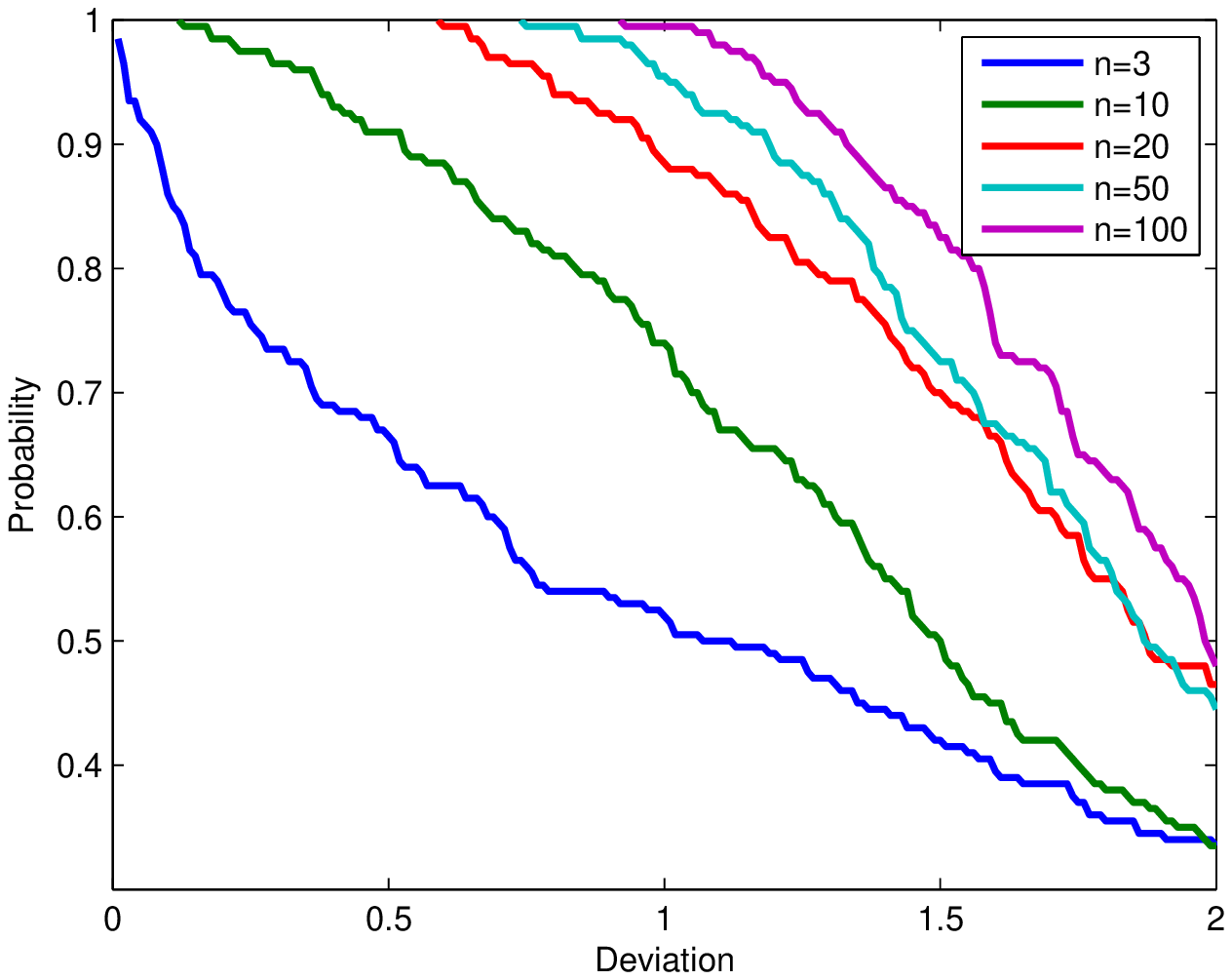} }}
       \caption{Distribution of $\widehat{\mathcal{D}}(\tilde{a}_n)$  in confounding cases}
       \end{figure} 
  From the figures, we can see that the  distribution of  $ \widehat{\mathcal{D}}(\tilde{a}_n)$  is similar to that of its true value. This may be due to the fact that we are able to consistently estimate the true $\mathcal{D}(\tilde{a}_n)$ from observations when sample size goes large. The empirical error of estimations seems to be acceptable.
                         
 Another observation is that when $c$ becomes larger, the confounding effects are more  obvious and the deviations tend to be larger. This matches with our previous analysis about the $\mathcal{D}(\tilde{a}_n)$, that it enlarges with the scalar $c$.
       \subsection{Comparative study}
       We compare our method with the method by \citep{janzing2017detecting}, which is mainly based on the estimation of $\beta^*$. We adopt the  default setting of the algorithm given in \citep{janzing2017detecting} (denoted as J \& S in tables). 	For our method, the threshold $\gamma$ for concluding a confounder is set to be 0.5. For the method J \& S, we report a confounder if $\beta^* > 0.5$. Sample size is fixed to be 500. $c$ is $0$ or uniform on $[2,3]$. Table 1 shows the results of applying algorithms on data, that are samples from models with   normal coefficients (entries of $a_n$ and $b_n$), and  uniform  coefficients (on $[-0.5,0.5]$). The noises are standard normally distributed. Table 2 shows the results on models with normal coefficients and  different noise distributions. Noise distributions are set to be: 1. Normal distribution. 2. Multidimensional student t distribution with degree of freedom 10. 3. Log normal distribution. 4. Mixture of two normal distribution, with equal probability and mean uniformly drawn from $[-0.5,0.5]$. Note that for multivariate distributions, we are feeding randomly generated covariance matrix (using the same method as that of footnote 3, with the entries of the diagonal matrix $\Gamma_n$ sampled from a uniform distribution on $(0,1)$).  Table 3 shows the results on models with normal noise, and eigenvalues of $\Sigma_{E_n}$   follow specified distributions. In the exponential decay cases, we use a rate $e^{-\frac{1}{5}}$ instead of $e^{-1}$ to avoid too fast decay. Based on the observations, we put down some discussions here.
         \begin{table}[t]
          \caption{Accuracy of algorithms (standard normal noise)}
             \centering
             \begin{tabular}{c|ccc|ccc}\hline
             & \multicolumn{3}{c|}{Normal coefficients}&\multicolumn{3}{c}{Uniform coefficients}\\
          &   $n$ = 10  &    $n$ = 20    &  $n$ = 30 & $n$ = 10  &    $n$ = 20    &   $n$ = 30 \\ \hline
           $c=0$     &&&&&& \\
                          Ours        &   98 \%   &    100\%    &  100\%   &   99 \%   &    99\%    &  100\% \\
          J \& S&   35 \%  &    31 \%  &  37 \%  & 28 \%  &    34 \%  &  35 \% \\ \hline  
                           $c\in [2,3]$ &&& \\ 
                             Ours        &   84 \% &    97\%    &  93\%  &   88 \% &    94\%    &  95\%  \\
            J \& S &   73 \%  &    75 \%  & 67 \%  &   79 \%  &    66 \%  &  68 \% \\ \hline 
          \end{tabular}
          \end{table}
              \begin{table}[t]
           \caption{Accuracy of algorithms (random $\Sigma_{E_n}$)}
        \centering
          \begin{tabular}{c|ccc|ccc}\hline 
          &  $n$ = 10  &    $n$ = 20    &   $n$= 30 & $n$ = 10  &    $n$ = 20    &   $n$ = 30
       \\\hline
        $c=0$ & \multicolumn{3}{c|}{Normal noise} &  \multicolumn{3}{c}{Student $t$ distributed noise} \\      
           Ours        &  97 \%   &    96\%    &  94\%  &  86 \%   &    89\%    &  93\% \\
          J \& S &   64 \%  &    88 \%  &  85 \%  & 77 \%  &    91 \%  &  96 \%   \\ \hline  
           $c\in [2,3]$ & & &  \\ 
       Ours        &   86 \% &    87\%    &  88\%   &  80 \%   &    87\%    &  90\% \\
       J \& S &   85 \%  &    95 \%  &  94 \%  & 74 \%  &    76 \%  &  82\%  \\ \hline 
     $c=0$      & \multicolumn{3}{c|}{Log normal noise}  &  \multicolumn{3}{c}{Mixture of two normal noise}\\ 
          Ours        &  88 \%   &    98\%    &  99\%  &  99 \%   &    99\%    &  100\% \\
        J \& S &   64 \%  &    75 \%  &  81 \%  & 39 \%  &    52\%  &  73 \%   \\ \hline  
  $c\in [2,3]$ & & &  \\ 
         Ours        &   72 \% &    58\%    &  58\%   &  89 \%   &    96\%    &  99\% \\
     J \& S &   62 \%  &    58 \%  &  60 \%  & 85 \%  &    84 \%  &  91\%  \\ \hline
    \end{tabular}
     \end{table}
        \begin{enumerate}
       \item In finite dimensional cases, when the  vector $a_n$ does not perfectly lie on the ``center position'' of the eigenspace of $\Sigma_{X_n}$, method by \citep{janzing2017detecting}  tends to include a “confounding part” because of the variational pattern of the spectral measure. 
      \item When noises are with random covariance matrices, the performance of our algorithm decreases in cases with log normal noises, but is still good in other cases. The distribution of the noise seems to have an impact on the results.   The performance of the method \citep{janzing2017detecting} is generally acceptable.
      \item Our method performs well  when eigenvalues of $\Sigma_{E_n}$ follow typical spectral decay patterns. This matches with our previous theoretic analysis, that the confounder is almost identifiable in these cases. However, an exception is found in exponential decay cases with $n=30$. This is because of the unstable estimation caused by too small eigenvalues. 
      \end{enumerate}
         \begin{table}[t]
         \caption{Accuracy of algorithms ($\Sigma_{E_n}$ with specified eigenvalue distributions)}
        \centering
        \begin{tabular}{c|ccc|ccc|ccc}\hline
   & \multicolumn{3}{c|}{Constant}&\multicolumn{3}{c|}{Polynomial decay} & \multicolumn{3}{c}{Exponential decay}\\
                &   $n$ = 10  &    $n$ = 20    &  $n$ = 30 & $n$ = 10  &    $n$ = 20    &   $n$ = 30 &   $n$ = 10  &    $n$ = 20    &  $n$ = 30 \\ \hline
                 $c=0$  &&&&&&   \\
                                Ours        &   99 \%   &    100\%    &  100\%   &   99 \%   &    100\%    &  100\% &   99 \%   &   99 \%    &  33\%\\
                J \& S&   38 \%  &    40 \%  &  42 \%  & 40 \%  &    56 \%  &  53 \%  & 36 \% & 45 \% & 12 \% \\ \hline  
               $c\in [2,3]$ &&&&&&  \\ 
                Ours        &   84 \% &    96\%    &  96\%  &   98 \% &    100\%    &  100\%  &   93 \%   &    93\%    &  88\%\\
                  J \& S &   77 \%  &   67 \%  &  66 \%  &   93 \%  &    95 \%  &  98 \%  &   75 \%  &    67 \%  &  65 \% \\ \hline 
                \end{tabular}
                \end{table}
      
        Note that in our experiment, we  actually compute the $\widehat{\mathcal{D}}(\tilde{a}_n)$ and conclude confounding when it exceeds a certain threshold. The choice of the threshold plays an important role.  One question of interest is that how large it should be. We would study this in the next section.

 \subsection{Threshold $\gamma$}
           Our algorithm concludes confounding based on the rule that the computed  $\widehat{\mathcal{D}}(\tilde{a}_n)$ exceeds certain threshold $\gamma$.  Different thresholds would lead to different errors. If the threshold is too small, we have a high false positive rate. But if it is too large, we would have a low true positive rate. To study this, we conduct some experiments. We use the  settings  of experiments showed in table 2 (data dimension 10 and 20) with normal noise, and vary the threshold from 0 to 1. For each threshold, we conduct 100 experiments on confounding cases and 100 experiments on non-confounding cases, and record the true positive and false positive rate. We plot the results in figure 6.
                            
         \begin{figure}[h]%
        \centering
        \subfloat[$n$ = 10]{{\includegraphics[width=.5\textwidth,height =.4\textwidth]{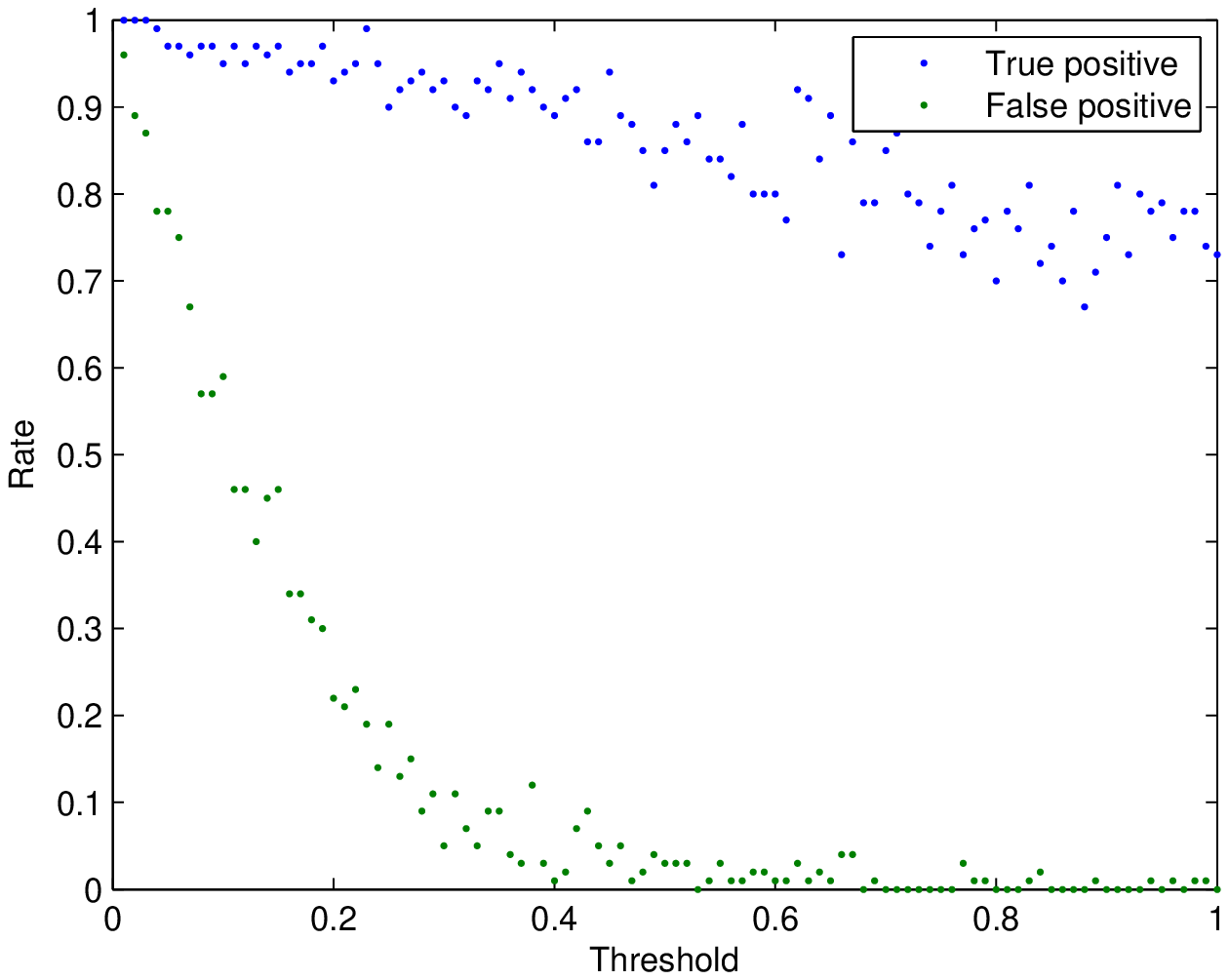} }}
         \subfloat[$n$ = 20 ]{{\includegraphics[width=.5\textwidth,height =.4\textwidth]{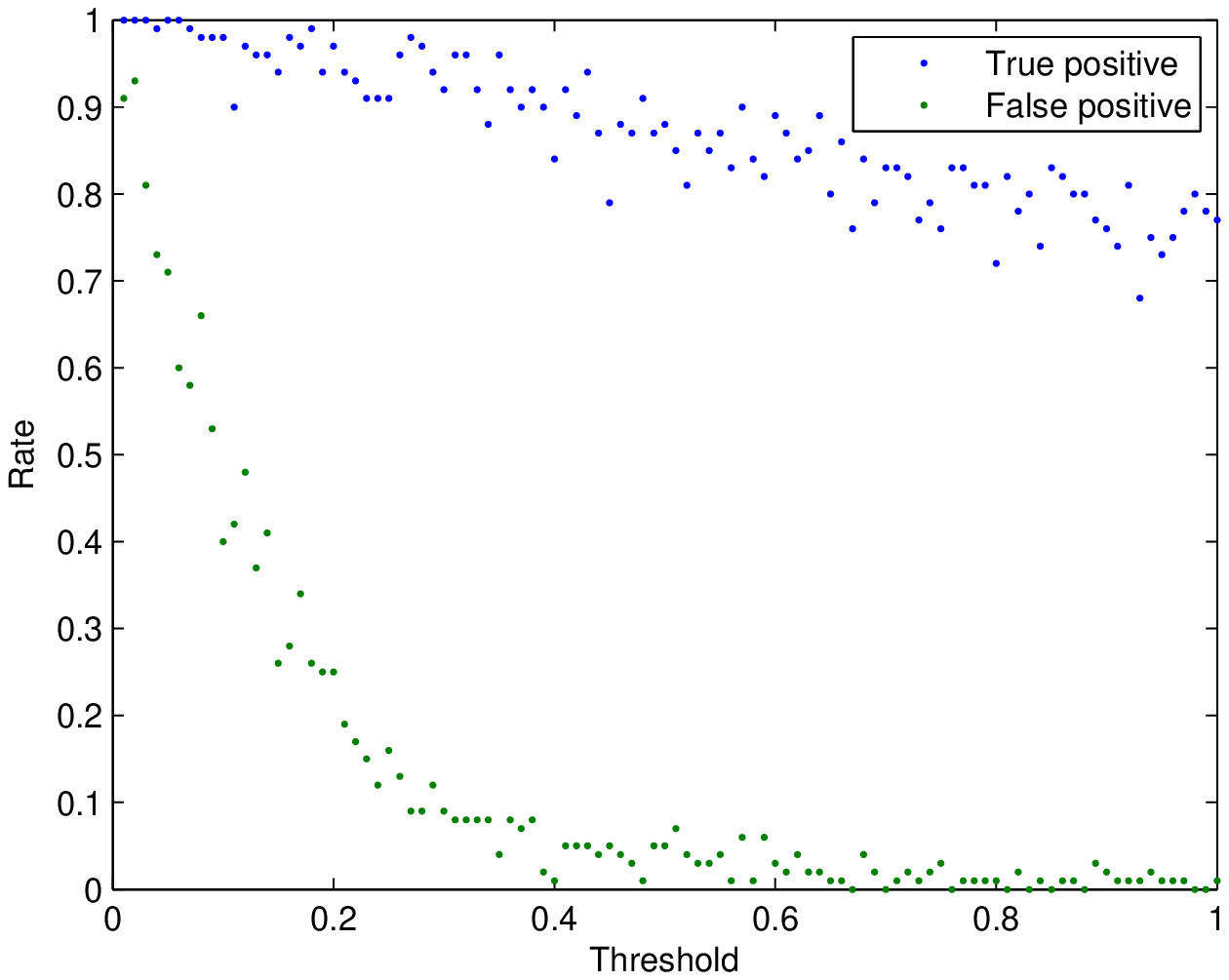} }} 
        \caption{True positive rate and false positive rate versus  thresholds}
      \end{figure}
     It shows that both the true positive rate and false positive rate decrease as the threshold goes larger. The largest gap between them occurs when the threshold is around 0.5. The false positive rate becomes almost 0 while we still have an acceptable true positive rate. That justifies the threshold settings in our previous experiments.   In the next section, we are going to conduct experiments on real world data to show the capability of our method on solving real world confounder detection problems.
    \subsection{Real world data}
    We test the method  on datasets from  UCI machine learning repository. Notice that we include a preprocessing step to normalize all variable to unit variance. We put down a remark of this below.
    \begin{remark}
    We normalize data to deal with the scale variations across features. This could avoid dominant values in the covariance estimation. However, it is not recommended by the paper \citep{janzing2017detecting}, since this normalization jointly changes the covariance matrix and the regression vector. It might violate the independence assumption. 
    \end{remark}
    
     Now we proceed to describe the results. The first dataset is the wine taste dataset. The data contains 11 features and 1 score of the wine as: $x_1$: fixed acidity, $x_2$: volatile acidity, $x_3$: citric acid, $x_4$: residual sugar, $x_5$: chlorides, $x_6$: free sulfur dioxide, $x_7$: total sulfur dioxide, $x_8$: density, $x_9$: pH, $x_{10}$: sulphates, $x_{11}$: alcohol. Y is the score. We sample 500 points from the whole dataset and do 200 deviation tests. We have two settings: first one is including all features, second one is dropping $x_{11}$. The results are plotted in figure 7. 
    \begin{figure}[h]%
        \centering
      \subfloat[ $\widehat{\mathcal{D}}(\tilde{a}_n)$ with all features included ]{{\includegraphics[width=.5\textwidth,height =.4\textwidth]{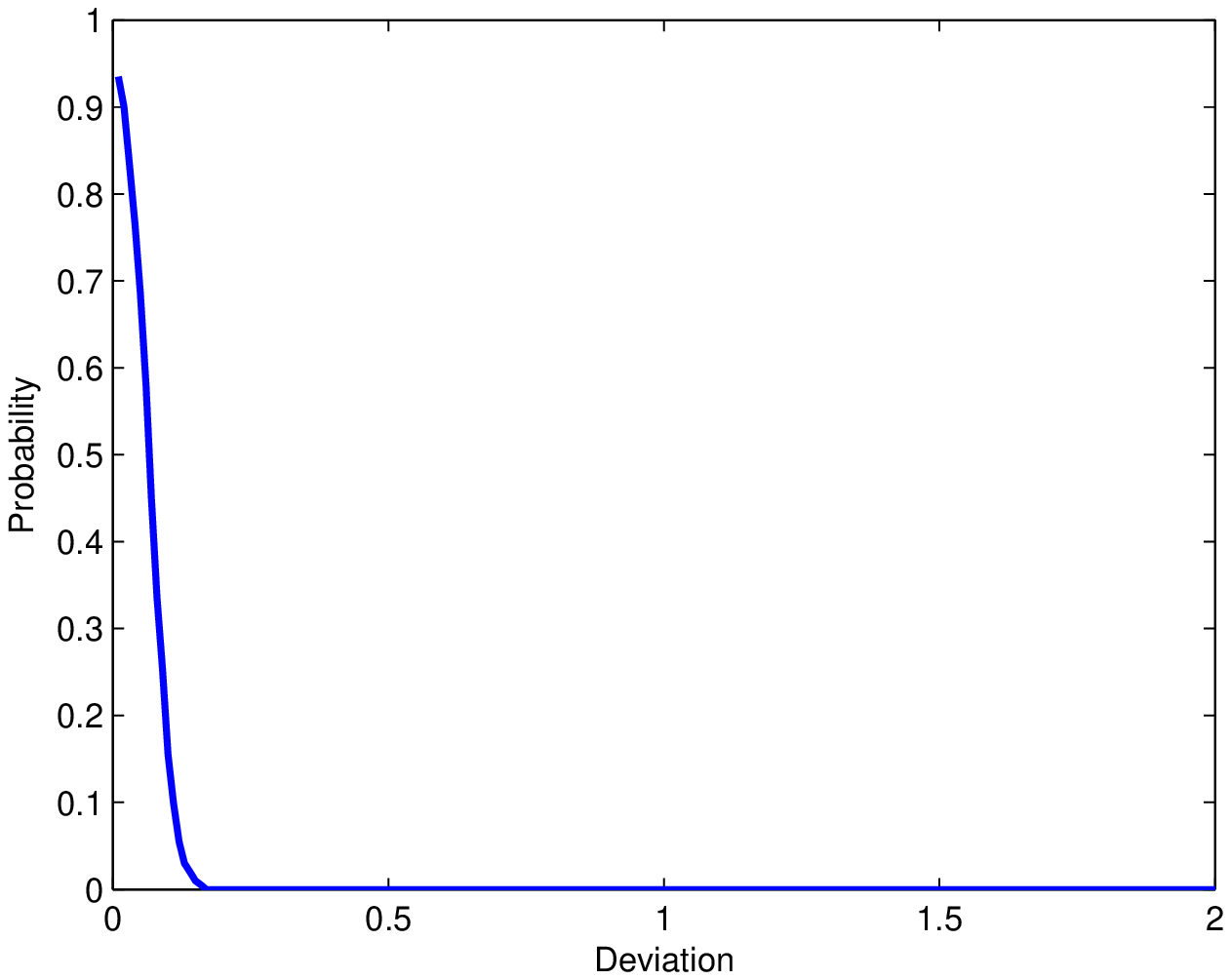} }}
      \subfloat[ $\widehat{\mathcal{D}}(\tilde{a}_n)$ with feature $x_{11}$ dropped ]{{\includegraphics[width=.5\textwidth,height =.4\textwidth]{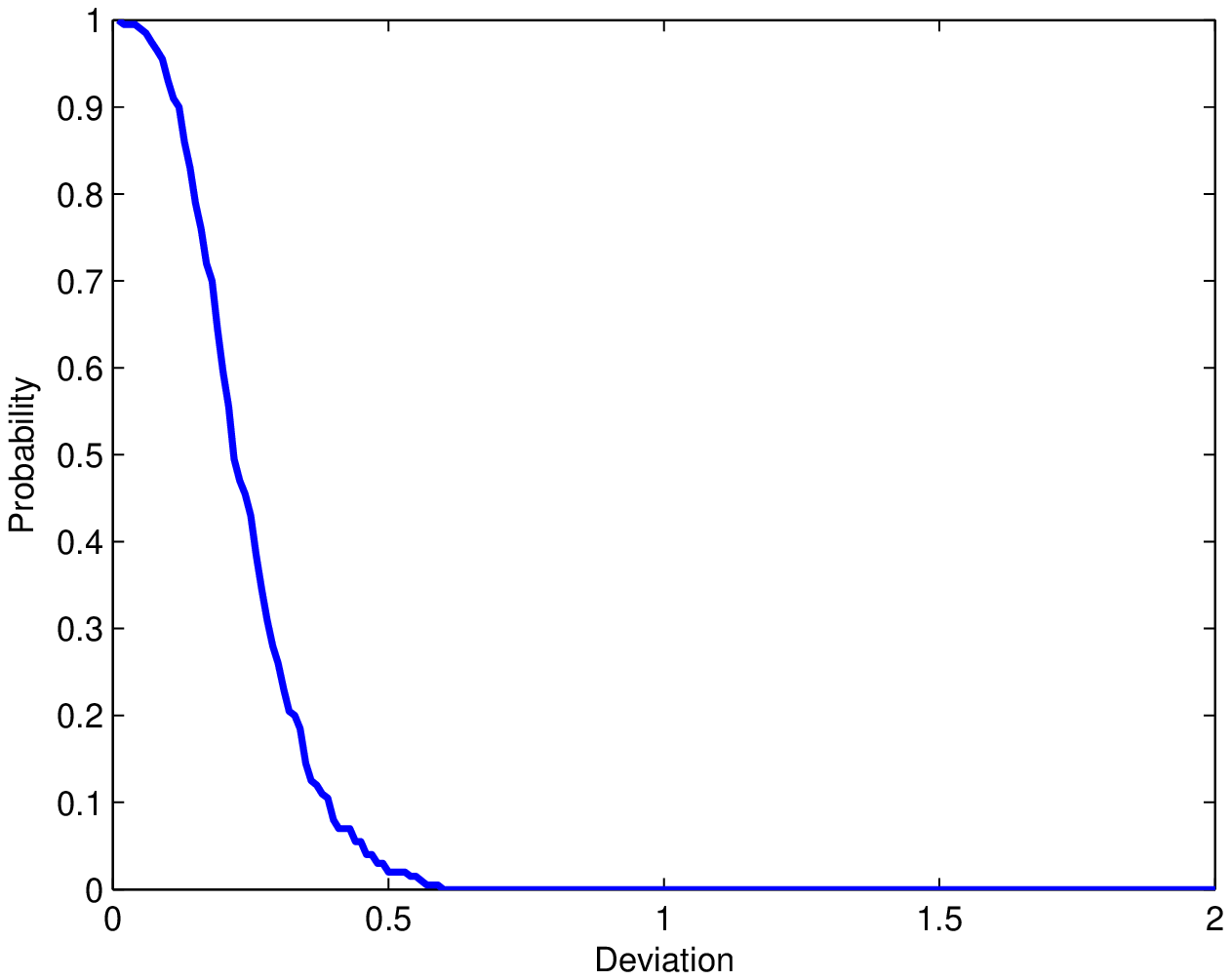} }}  
       \caption{$\widehat{\mathcal{D}}(\tilde{a}_n)$  on taste of wine dataset}
      \end{figure}
    An observation is that dropping $x_{11}$ has caused a clear enlargement of the deviation measure, which indicates that $x_{11}$ is the confounder of the system. This is in consistency with the conclusions of \citep{janzing2017detecting}, which finds the same thing via spectral evidence.
                
     Another dataset is the compressive strength and ingredients of concrete dataset. The target $Y$ is the strength in megapascals. There are 8 features $\{x_1,...,x_8\}$ to predict $Y$. $x_1$: cement, $x_2$: Blast Furnace Slag,  $x_3$: Fly Ash, $x_4$: Water, $x_5$: Superplasticizer, $X_6$: Coarse Aggregate, $x_7$: Fine Aggregate, $x_8$: Age. We sample 500 points from the whole dataset and do 200 deviation tests.  The results are plotted in figure 8.
     \begin{figure}[h]
     \centering
     \includegraphics[width=0.5\textwidth,height =.4\textwidth]{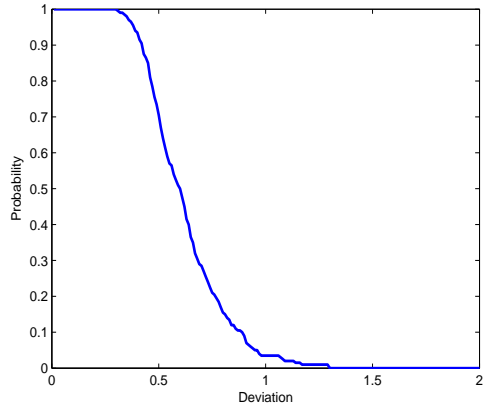}
     \caption{$\widehat{\mathcal{D}}(\tilde{a}_n)$ on concrete compressive strength dataset}
       \end{figure}
      It shows clear evidence that this dataset has hidden confounder between $X_n$ and $Y$. The deviations are not small in general, which may indicate an obvious confounding effect. This is, in some sense, in consistency with the findings by applying the method by \citep{janzing2017detecting}, which reports a significant $\beta^*$ as the evidence for clear confounding. 
       
        We also test out method on the Indian liver patient dataset. The target $Y$ is the indicator of liver or non liver patient. There are 10 features $\{x_1,...,x_{10}\}$ to predict $Y$. $x_1$: age of the patient, $x_2$: Gender,  $x_3$: Total Bilirubin, $x_4$: Direct Bilirubin, $x_5$: Alkaline Phosphotase, $x_6$: Alamine Aminotransferase, $x_7$: Aspartate Aminotransferase, $x_8$: Total Protiens, $x_9$: Albumin, $x_{10}$: Albumin and Globulin Ratio.  We sample 500 points from the whole dataset and do 200 deviation tests. We have two settings: first one is including all features, second one is dropping $x_1$ to $x_{4}$. The results are plotted in figure 9. Dropping the features results in a slightly larger deviation, which may indicate that $x_1$ to $x_4$ weakly confound $Y$ and other features.
  \begin{figure}[h]%
      \centering
  \subfloat[ $\widehat{\mathcal{D}}(\tilde{a}_n)$ with all features included ]{{\includegraphics[width=.5\textwidth,height =.4\textwidth]{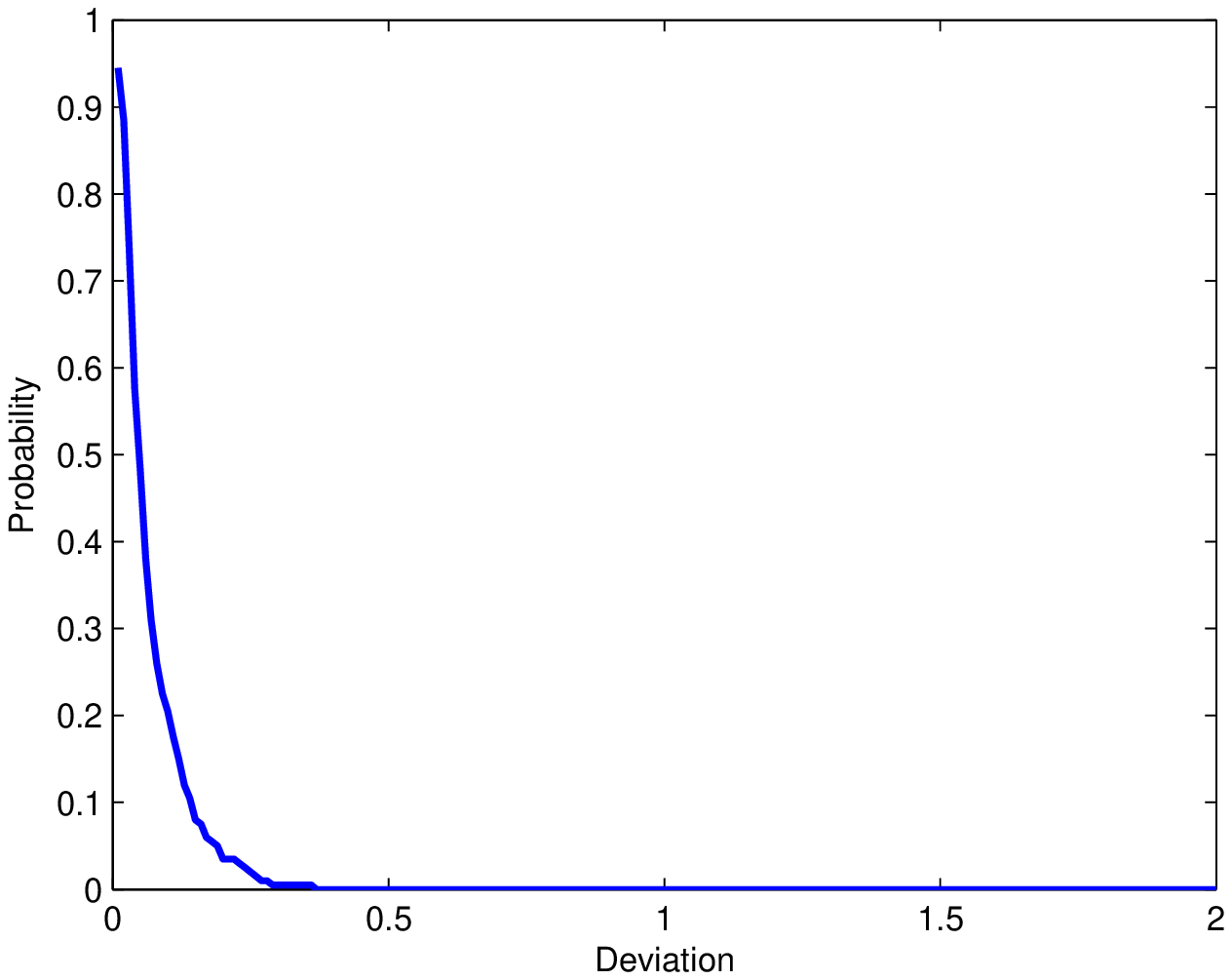} }}
       \subfloat[ $\widehat{\mathcal{D}}(\tilde{a}_n)$ with feature $x_1$ to $x_4$ dropped ]{{\includegraphics[width=.5\textwidth,height =.4\textwidth]{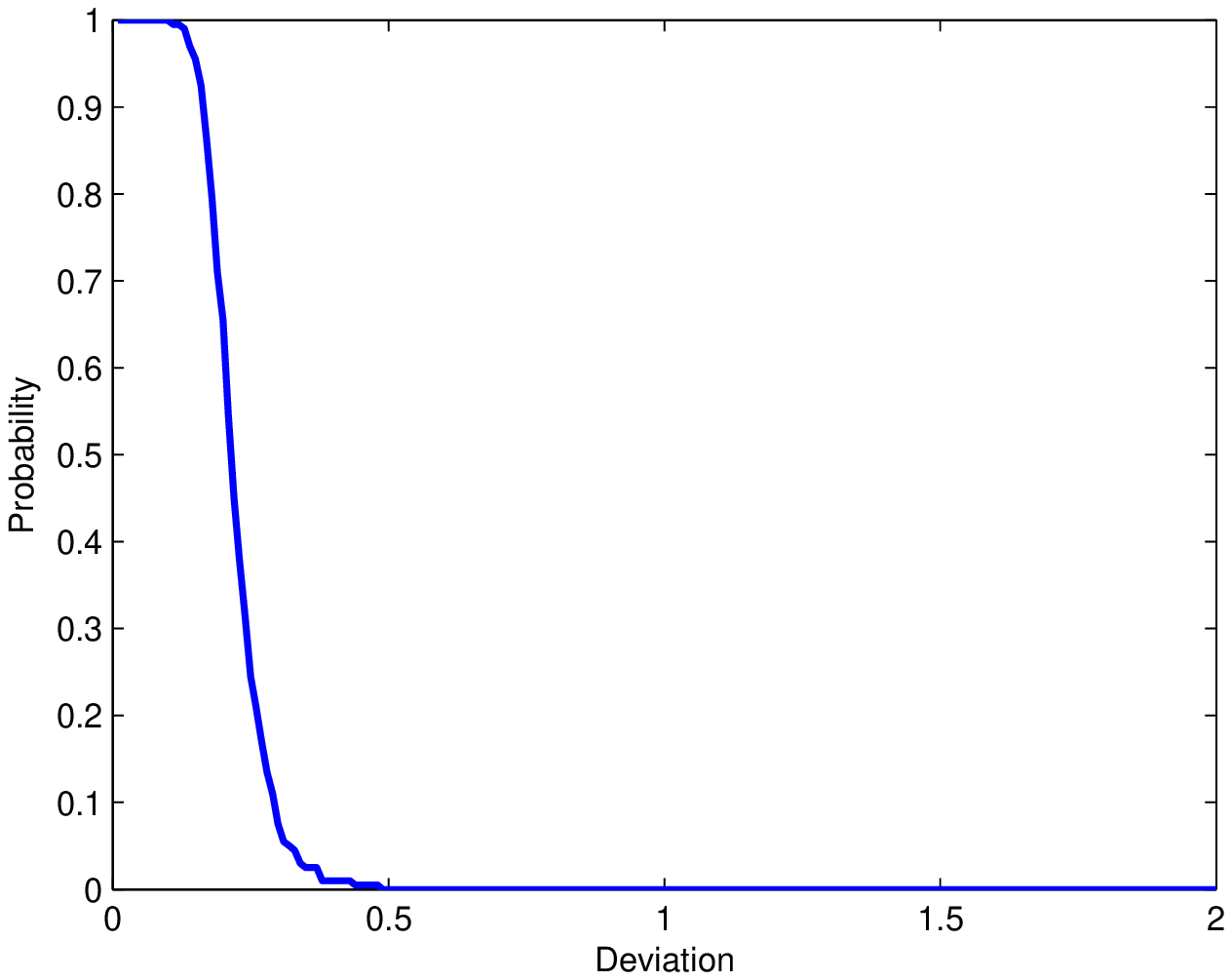} }}  
       \caption{$\widehat{\mathcal{D}}(\tilde{a}_n)$  on Indian liver patient dataset}
\end{figure}
             \section{Conclusions}
                    In this paper, we propose a confounder detection method for high dimensional linear models, when certain model assumptions are satisfied. It relies on the property that the first moment of $\tilde{a}_n$ induced spectral measure coincides with that of a uniform spectral measure (both on $\Sigma_{X_n}$) in purely causal cases, while the two moments often differ in the presence of a confounder. We hope that our  method, modified from the spectral measure pattern matching, could provide people with a simplified yet effective approach for confounding detection. Future work could be  extending the method to work in small sample size cases where estimations of regression vectors and covariance matrix  are inaccurate, and in nonlinear models.     
            \section*{Acknowledgments}
                The authors want to thank the editor and the anonymous reviewers for helpful comments.

   \bibliographystyle{apalike}
\bibliography{nc_paper}

%\begin{thebibliography}{100}
%\providecommand{\natexlab}[1]{#1}
%\expandafter\ifx\csname urlstyle\endcsname\relax
%  \providecommand{\doi}[1]{doi:\discretionary{}{}{}#1}\else
%  \providecommand{\doi}{doi:\discretionary{}{}{}\begingroup
%  \urlstyle{rm}\Url}\fi
%
%\bibitem[{LastName(2009)}]{Ref2009}
%LastName, A. (2009).
%\newblock Title for the first reference.
%\newblock \emph{Journal of the first reference}, \emph{3}, 18 -- 88.
%
%
%\bibitem[{Authors et~al.(2008)Author1, Author2, \&
%  Author3}]{Ref2008}
%Author1, A., Author2, A., \& Author3, A. (2008).
%\newblock Title for the second reference.
%\newblock \emph{Journal for the second reference}, \emph{5}, 188 -- 200.
%
%\end{thebibliography}
\end{document}